\documentclass{article}

\usepackage[main, final]{neurips_2026}

\usepackage[utf8]{inputenc} 
\usepackage[T1]{fontenc}    
\usepackage{amsmath}        
\usepackage{amsthm}         
\usepackage{hyperref}       
\usepackage{url}            
\usepackage{graphicx}       
\usepackage{booktabs}       
\usepackage{amsfonts}       
\usepackage{nicefrac}       
\usepackage{microtype}      
\usepackage{xcolor}         
\usepackage{helvet}         
\usepackage{tikz}           
\usetikzlibrary{arrows.meta,calc,fit,positioning}

\newcommand{\abs}[1]{\left|#1\right|}

\definecolor{figInk}{HTML}{272727}
\definecolor{figMuted}{HTML}{6B7280}
\definecolor{figLine}{HTML}{CFCECE}
\definecolor{figBlue}{HTML}{0F4D92}
\definecolor{figGreen}{HTML}{2E7D5B}
\definecolor{figGreenFill}{HTML}{DDF3DE}
\definecolor{figOrange}{HTML}{B56A2A}
\definecolor{figOrangeFill}{HTML}{F5E8D9}
\definecolor{figFloor}{HTML}{E5E7EB}

\newtheorem{theorem}{Theorem}
\newtheorem{proposition}{Proposition}[theorem]
\newtheorem{corollary}{Corollary}[theorem]

\title{Why Cross-Skeleton Retargeting Is Non-Identifiable: Structural Limits of Generative Motion Models}

\author{%
  Zhiyuan Li$^{1}$\quad Wenyan Yang$^{1}$\quad Pekka Marttinen$^{2}$\quad Joni Pajarinen$^{1}$\\
  $^{1}$Department of Electrical Engineering and Automation, Aalto University, Finland\\
  $^{2}$Department of Computer Science, Aalto University, Finland\\
  \texttt{\{zhiyuan.li, wenyan.yang, pekka.marttinen, joni.pajarinen\}@aalto.fi}
}

\begin{document}

\maketitle

\begin{abstract}
  Cross-skeleton motion generation trains generative models to carry action structure and motion intention from one body to another. Yet a target motion that shows the right action has two explanations that the training data cannot tell apart: the model transferred the source clip, or it recovered a typical motion for the requested action. We show that this ambiguity is structural rather than incidental: under standard generative objectives, the source-conditioned retargeting map is non-identifiable in sparse heterogeneous motion domains. Unpaired distribution matching yields \emph{gauge non-identifiability}: the latent spaces of different skeletons can be transformed relative to one another without changing the training evidence, so different source-conditioned maps fit it equally well. Sparse paired supervision admits the complementary failure mode, \emph{conditional-mean degeneration}: when clips are paired only by action, squared-error training converges to an average target motion that ignores the source clip. To make the missing evidence observable, we introduce \emph{Source-Instance Fidelity} (SIF), a diagnostic that tests whether outputs differ from one another the way their source clips do, with the target skeleton and action held fixed. Under this diagnostic, methods that succeed at the standard action-level test on animal motion data often sit at the \emph{source-blind floor}, while the methods that rise above it retain only a partial relational signal. Retargeting therefore needs objectives and evaluations that can identify the source-conditioned map it claims to learn. Project page: \url{https://cross-skeleton-retargeting.netlify.app/}.
\end{abstract}

\section{Introduction}
\label{sec:intro}

\begin{figure}[t]
  \centering
  \resizebox{0.99\linewidth}{!}{%
  \begin{tikzpicture}[
      x=1cm,
      y=1cm,
      >=Latex,
      figfont/.style={font=\fontfamily{phv}\selectfont},
      every node/.style={figfont},
      title/.style={figfont,font=\fontfamily{phv}\selectfont\bfseries\fontsize{13.2}{14.2}\selectfont,text=figInk},
      small/.style={figfont,font=\fontfamily{phv}\selectfont\fontsize{6.9}{7.8}\selectfont,text=figMuted},
      axislabel/.style={figfont,font=\fontfamily{phv}\selectfont\fontsize{7.5}{8.5}\selectfont,text=figInk},
      branch/.style={figfont,font=\fontfamily{phv}\selectfont\bfseries\fontsize{7.5}{8.5}\selectfont},
      card/.style={inner sep=0pt,outer sep=0pt,draw=figLine,line width=0.35pt,rounded corners=0.8pt,fill=white},
      greenarrow/.style={->,figGreen,line width=1.0pt,shorten >=3.2pt,shorten <=3.2pt},
      orangearrow/.style={->,figOrange,line width=1.0pt,shorten >=3.2pt,shorten <=3.2pt},
    ]

    \node[title,anchor=west] at (0.00,5.72) {A. Same action evidence, different maps};

    \node[small,text=figBlue,anchor=south west] at (0.06,4.92) {Source $S_1$};
    \node[card] (sone) at (1.52,4.22)
      {\includegraphics[width=2.86cm]{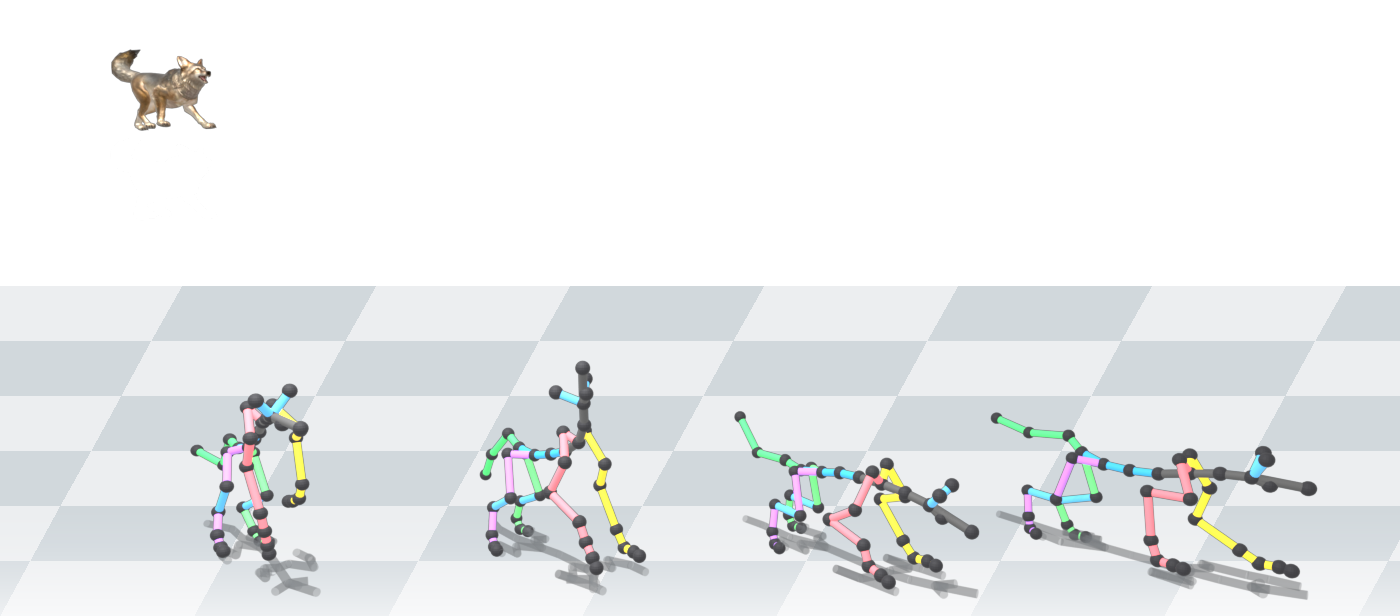}};
    \node[small,text=figBlue,anchor=south west] at (0.06,3.20) {Source $S_2$};
    \node[card] (stwo) at (1.52,2.50)
      {\includegraphics[width=2.86cm]{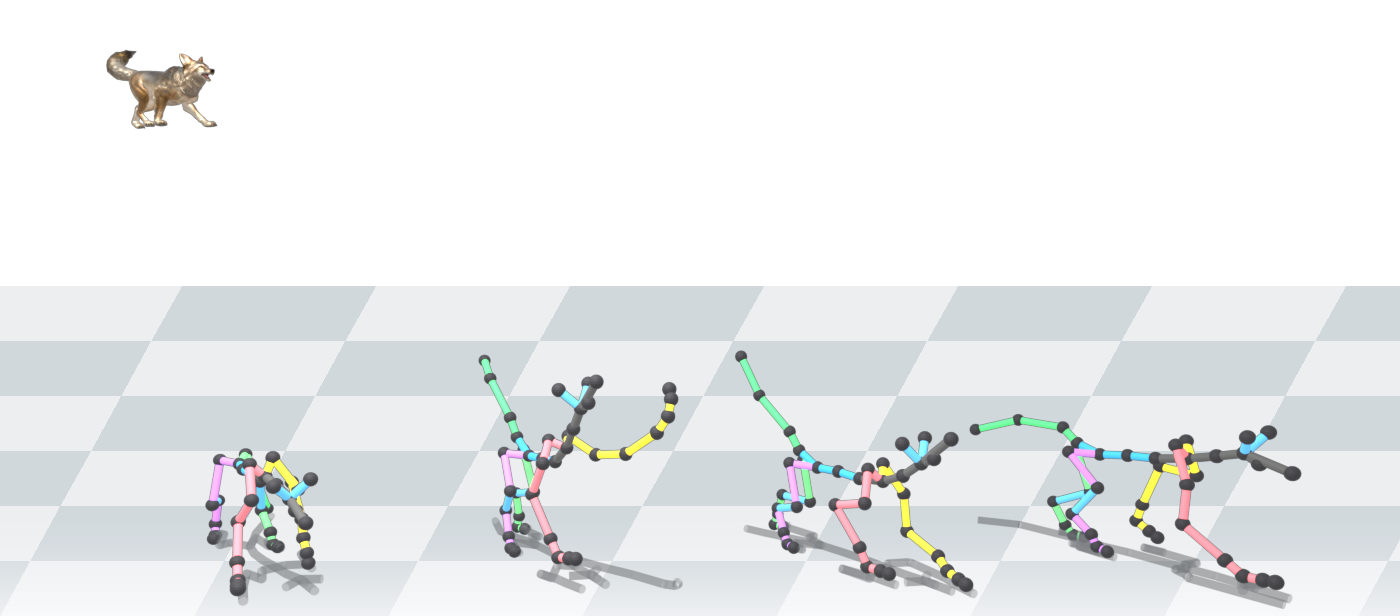}};

    \node[branch,text=figGreen,anchor=west] at (4.18,5.34) {Source-Preserving Map};
    \node[small,text=figGreen,anchor=south west] at (4.20,4.72) {Target $T_1$};
    \node[card] (tone) at (5.38,4.04)
      {\includegraphics[width=2.38cm]{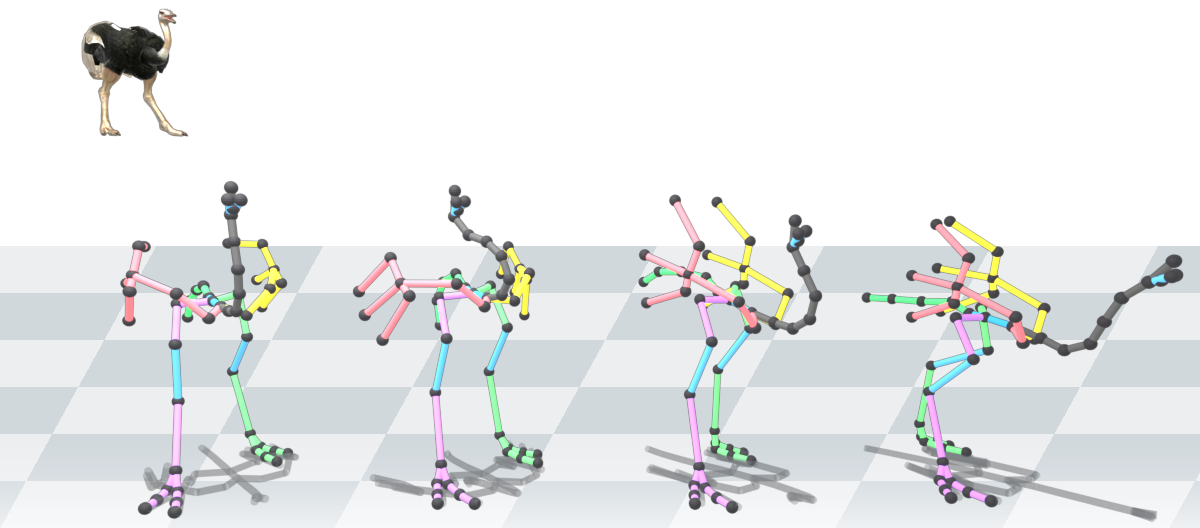}};
    \node[small,text=figGreen,anchor=south west] at (6.92,4.72) {Target $T_2$};
    \node[card] (ttwo) at (8.10,4.04)
      {\includegraphics[width=2.38cm]{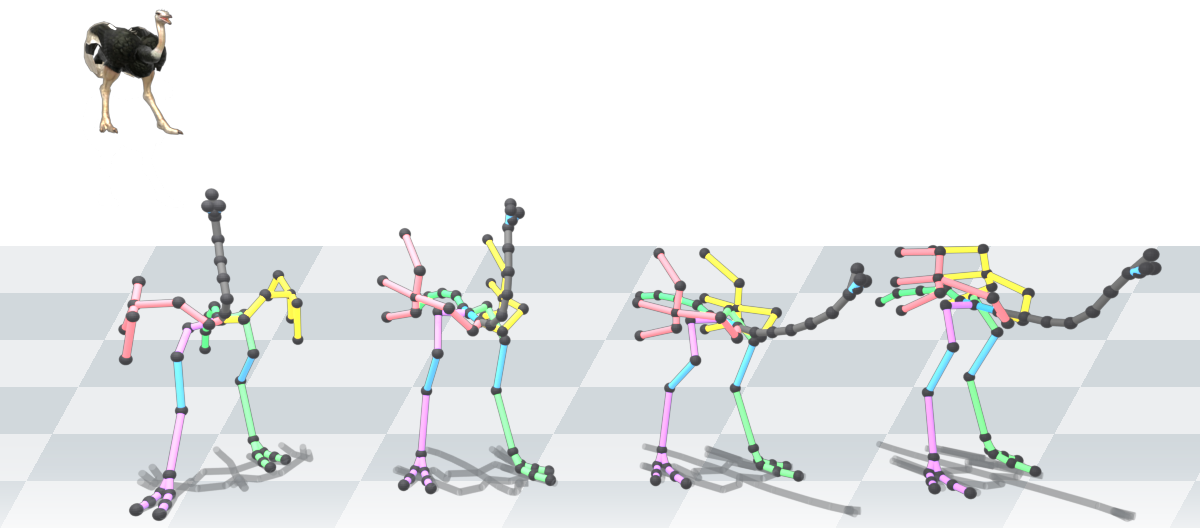}};
    \node[small,text=figGreen,anchor=center] at (6.74,3.16) {Distinct outputs};

    \node[branch,text=figOrange,anchor=west] at (4.18,2.54) {Target-Action Recovery};
    \node[small,text=figOrange,anchor=south] at (6.74,1.95) {Shared habit $P$};
    \node[card] (proto) at (6.74,1.23)
      {\includegraphics[width=2.86cm]{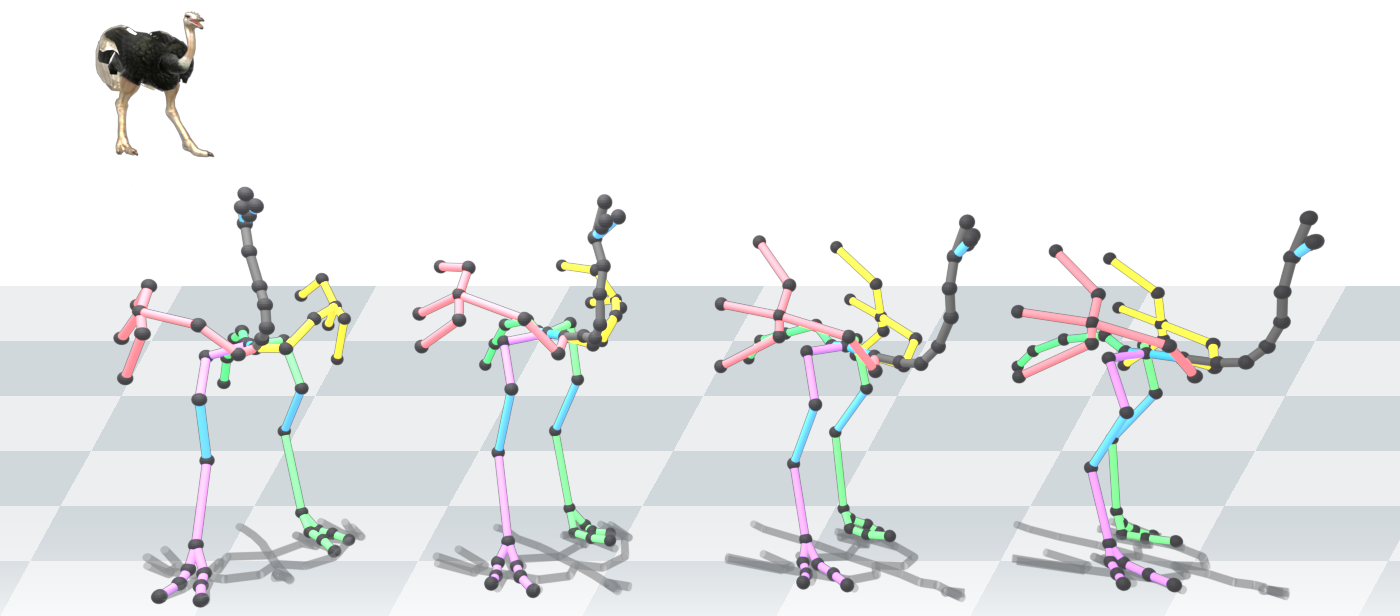}};
    \node[small,text=figOrange,anchor=center] at (6.74,0.30) {Collapsed output};

    \draw[greenarrow] (sone.east) -- (tone.west);
    \draw[greenarrow] ($(stwo.east)+(0.00,0.10)$) .. controls (4.18,3.44) and (7.22,3.40) .. ($(ttwo.south)+(0.00,0.02)$);
    \draw[orangearrow] ($(sone.east)+(0.00,-0.15)$) .. controls (3.36,3.35) and (4.42,1.62) .. ($(proto.west)+(0.00,0.20)$);
    \draw[orangearrow] ($(stwo.east)+(0.00,-0.12)$) .. controls (3.42,2.18) and (4.46,1.28) .. ($(proto.west)+(0.00,-0.10)$);

    \draw[figLine,line width=0.90pt,densely dashed] (9.88,0.62) -- (9.88,5.55);

    \node[title,anchor=west] at (10.36,5.72) {B. The missing axis};
    \coordinate (axisO) at (11.18,1.00);
    \coordinate (axisX) at (16.04,1.00);
    \coordinate (axisY) at (11.18,4.72);
    \draw[figLine,line width=0.58pt] (axisO) -- (axisX);
    \draw[figInk,line width=0.68pt] (axisO) -- (axisY);
    \fill[figFloor,opacity=0.82] (11.18,1.00) rectangle (16.04,1.62);
    \draw[figMuted,line width=0.44pt] (11.18,1.42) -- (16.04,1.42);
    \node[small,anchor=east,text=figMuted] at (15.92,1.16) {Source-blind floor};

    \draw[figLine,line width=0.45pt] (11.56,1.00) -- (11.56,0.88);
    \draw[figLine,line width=0.45pt] (15.42,1.00) -- (15.42,0.88);
    \draw[figLine,line width=0.45pt] (11.18,1.25) -- (11.06,1.25);
    \draw[figLine,line width=0.45pt] (11.18,4.32) -- (11.06,4.32);
    \node[small,anchor=north,text=figMuted] at (11.56,0.77) {low};
    \node[small,anchor=north,text=figMuted] at (15.42,0.77) {high};
    \node[small,anchor=east,text=figMuted] at (10.98,1.25) {low};
    \node[small,anchor=east,text=figMuted] at (10.98,4.32) {high};
    \node[axislabel,anchor=north] at (13.58,0.42)
      {Action-Level AUC};
    \node[axislabel,rotate=90,anchor=south] at (10.56,2.88)
      {Source-Instance Fidelity};

    \draw[figMuted,line width=0.45pt,densely dashed] (14.50,1.50) -- (14.50,4.16);
    \node[small,text=figMuted,anchor=east,align=right] at (14.18,2.76)
      {same action\\evidence};
    \filldraw[fill=figOrange,draw=white,line width=0.55pt]
      (14.37,1.48) -- (14.50,1.61) -- (14.63,1.48) -- (14.50,1.35) -- cycle;
    \node[small,text=figOrange,anchor=west,align=left] at (14.82,1.76) {Target-action\\recovery};
    \fill[figGreen,draw=white,line width=0.55pt] (14.50,4.16) circle (5.0pt);
    \node[small,text=figGreen,anchor=west,align=left] at (14.82,4.16) {Source-preserving\\map};
  \end{tikzpicture}}
  \caption{Action-level success does not identify the source-conditioned map.
  Two source clips with the same action can lead either to distinct target
  motions under a source-preserving map or to a shared target-action habit under
  source-blind recovery. Standard action-level AUC accepts both explanations;
  Source-Instance Fidelity tests whether the source-instance relation survives
  the change of body.}
  \label{fig:teaser}
\end{figure}

Motion generation increasingly has to work across bodies, not only on one fixed
skeleton. Methods have moved from retargeting models that encode skeletal
structure to generative and correspondence-based models built for many different
bodies
\citep{Aberman2020SAN,same2024,SkeletonInContext2023,raab2025anytop,m2m2024}.
A motion clip may come from a human, a quadruped, or another animal, and the
target body may differ in proportions, joint layout, and range of motion.
Cross-skeleton retargeting asks for a motion on the target body that keeps both
the action and the way the source clip performs it: its timing, its style, and
the details that set it apart from other clips of the same action. We call this
the motion intention of the source clip.

Progress in this area is usually measured by a retrieval test introduced with
large multi-body motion libraries \citep{raab2025anytop,anytop2025truebones}.
The generated motion serves as a query against real clips of the target body:
clips with the requested action count as matches, and clips with other actions
as non-matches. The score is the area under the receiver-operating-characteristic
curve (AUC), the probability that a match ranks above a non-match. This is a
sound test of whether the requested action can be recognized on the target body.
It does not test whether the output came from the particular source clip that
was given.

Retargeting makes a stronger claim: each source clip is carried to its own
target motion. We call this mapping the source-conditioned map.
Figure~\ref{fig:teaser} shows why the retrieval score cannot check it. Two
different coyote attacks should become two ostrich attacks that differ in the
same way. A model that ignores the source and returns the same habitual ostrich
attack for both, a target-action habit, produces outputs that are just as
recognizable as attacks. The action-level score accepts both, and only a second
question separates them: whether changing the source clip changes the output
accordingly.

The ambiguity becomes structural in sparse libraries of many different bodies.
In the animal motion data we study, the Truebones zoo
\citep{anytop2025truebones}, the skeletons differ in structure and joint count,
most combinations of body and action have no clip at all, and many of the rest
have only one. Such data show which actions each body can perform, but they
contain no pairs that say how a given source clip should move on another body.
The training data therefore do not determine which source-conditioned map is
correct.

We show that this is a limit of identifiability: the training data are
consistent with many different maps, and standard objectives give no reason to
prefer one \citep{LocatelloEtAl2019DisentanglementImpossible,khemakhem2020ivae}.
In this sense the source-conditioned map is \emph{non-identifiable}. Two
mechanisms produce the limit. When each body's motions are matched only in
distribution, through a latent space shared by all bodies, the latent space of
one body can be rotated or otherwise transformed relative to another without
changing the training loss, so different source-to-target maps fit the data
equally well. We call this \emph{gauge non-identifiability}. When training pairs
each source clip with a randomly chosen target clip of the same action,
squared-error training converges to the average target motion for that action,
whatever the source. We call this \emph{conditional-mean degeneration}. Both
mechanisms explain why better action recovery does not by itself identify the
retargeting map.

To measure what the retrieval score misses, we introduce
\emph{Source-Instance Fidelity} (SIF). For a fixed source body, target body, and
action, SIF asks whether the outputs differ from one another the way their
source clips do: two source clips that are far apart should lead to outputs that
are far apart too. A model that ignores the source scores near zero, which we
call the \emph{source-blind floor}. SIF gives the true mapping a score near one
on synthetic data with known correspondences, and a similarly high score on real
human-to-robot pairs.

Measured this way, most methods fall short. On 1,891 combinations of source
body, target body, and action from Truebones, nine of fourteen evaluated methods
sit at or near the source-blind floor, including methods that score well on the
retrieval test. The same reading also qualifies the methods that rise above the floor. ACE~\citep{ace2023},
an adversarial cross-embodiment model that adds a motion-space constraint beyond
per-skeleton distribution matching, retains only a partial relational signal:
its outputs keep some of the ordering among source clips but vary little with
them.

Retargeting therefore cannot be judged only by whether the target action is
recognizable. A generated motion can look plausible and show the right action
while ignoring how the source clip moved. What has to be identified is the
source-conditioned map itself, and the central question is which evidence and
inductive biases are enough to select it.

\section{Related Work}
\label{sec:related}

\paragraph{Cross-skeleton motion retargeting.}
The retargeting literature has progressively weakened the assumption that
source and target bodies share a convenient representation. Early neural
retargeting made skeletal structure explicit, while later skeleton-agnostic
representations and context-conditioned sequence models sought motion spaces
that remain meaningful across body graphs
\citep{Aberman2020SAN,same2024,SkeletonInContext2023}. More recent work carries
the same ambition into sparse heterogeneous libraries, where generative models
and sparse-correspondence procedures must operate without dense paired examples
\citep{raab2025anytop,moreflow2025,ace2023,m2m2024}. Our concern is orthogonal
to this modeling trajectory: once an output has the requested action, the
remaining question is whether the evidence identifies the particular
source-conditioned map that retargeting claims.

\paragraph{Motion generation beyond fixed bodies.}
Recent motion generation work also shows that the fixed human skeleton is no
longer the only natural domain of synthesis. The shared pressure across this
broader literature is to represent motion when body identity, morphology, and
topology become part of the conditioning problem rather than fixed background
structure \citep{raab2024single,lee2025how,egan2024dogcode,chen2025topologyagnostic}.
These settings make body variation a central modeling object. The structural
question here is complementary: when data are sparse and correspondences are
not instance-level, action labels and target examples may specify what action
should appear on the target body without specifying which source motion
intention selected it.

\paragraph{Identifiability and distributional transport.}
Our formal argument is closest in spirit to identifiability limits in
representation learning, where marginal fit alone does not recover latent
structure without suitable inductive bias or auxiliary variables
\citep{LocatelloEtAl2019DisentanglementImpossible,khemakhem2020ivae}.
Distributional transport gives a second comparison point, since Schr\"odinger
bridge objectives can learn a coupling between marginals without observing
instance-level correspondences \citep{DeBortoliEtAl2021DSB,unsb2023}. In
cross-skeleton retargeting, however, the missing object is not merely a
transport between two marginal motion distributions. It is the relative
alignment that tells a target decoder how to interpret the source instance.

\paragraph{Motion libraries and retrieval comparators.}
Motion-library methods preserve realized target-side motion snippets by
construction, and this property connects classical motion graphs to modern
sparse-correspondence transfer \citep{KovarGleicher2002MotionGraphs,m2m2024}.
This makes retrieval useful as a diagnostic foil: it can expose what
action-level evidence accepts without claiming to synthesize a new
source-conditioned motion. We use this distinction later to define a
target-library comparator whose role is diagnostic rather than generative.
Appendix~\ref{app:related} gives additional context for these connections.

\section{The Source-Conditioned Map}
\label{sec:map}

Retargeting concerns a map that takes each source clip to its own target
motion, not only a distribution of plausible target motions. Let $\mathcal{X}_s$ denote the motion space of
skeleton $s$, with clips annotated by an action label $c$; this notation records
that clips on different bodies occupy different kinematic domains. A retargeting claim
asserts a family of maps $T_{a\to b}:\mathcal{X}_a\to\mathcal{X}_b$ such that a
source clip $x_a$ on skeleton $a$ is carried to skeleton $b$ while preserving
the action, timing, and motion intention of that clip. This is a
stronger object than a conditional generator for plausible motions on $b$ given
the label $c$: the map must change when the source instance changes, even when
the action and target skeleton are held fixed.

Sparse heterogeneous motion libraries provide much weaker evidence than this
map requires. In the Truebones zoo used in our evaluation
\citep{anytop2025truebones,raab2025anytop}, the dataset contains 70 skeletons and
616 clips across 90 exact actions, yielding 6,300 possible skeleton-action
cells. Only 413 of these cells are occupied, a fill rate of 6.6\%, and the
median occupied cell contains one clip. Thus a learner often sees that a
skeleton can perform an action, but not multiple within-action instances from
which the geometry of motion intention could be inferred, much less a
dense set of source-target correspondences specifying how the same instance
should appear on a different body.

This sparsity matters because source-instance evidence is relational. To ask
whether a map preserves the source instance, one must be able to compare several
source clips that share a source skeleton and action, generate the corresponding
motions on the same target skeleton, and then ask whether the source-side
geometry remains visible after retargeting. Truebones contains only 49 such
source groups, a source skeleton and action with at least three clips. Pairing
each group with every other skeleton that performs the same action yields 1,891
source-target-action triples; these add target bodies but no new source motions,
so all of them draw on the same 171 source clips, spread over eight actions in
which attack and idle account for 94 percent of the triples. The structural question is therefore not whether
action labels are useful, since they plainly are, but whether this sparse
evidence selects a source-conditioned map rather than a target-action habit.

The distinction fixes the burden for the rest of the paper. Action-label
evidence constrains which target motions are admissible for a requested action,
whereas source-instance evidence constrains how differences among same-action
source clips should reappear after the body changes. The former can support
recognizable cross-skeleton generation; only the latter can identify the map
that a retargeting claim asserts. Section~\ref{sec:structural} makes this gap
precise by showing how standard objectives can leave multiple source-conditioned
maps consistent with the same observed evidence.

\section{Structural Non-Identifiability}
\label{sec:structural}

A training objective can reward plausible target motions without caring which
source-conditioned map produced them. This section makes
the previous ambiguity structural by isolating two objective-level mechanisms:
per-skeleton marginal matching leaves a relative latent coordinate freedom
between bodies, while cell-paired squared-error regression under random
target-cell pairing selects a target-action prototype. Both mechanisms preserve
the evidence consumed by the objective while changing, or erasing, the
source-instance dependence that retargeting requires.

The first mechanism appears whenever skeleton-specific encoders and decoders are
coupled only through per-skeleton latent marginals. Let $P_s$ be the motion
distribution for skeleton $s$, let $\mu_s=(E_s)_\#P_s$ be the latent marginal,
and write a shared-latent objective schematically as
\begin{equation}
  L(E,D) =
  \sum_s L_s(D_s\circ E_s;P_s) + \Omega(\mu_1,\ldots,\mu_K),
  \label{eq:shared_latent_obj}
\end{equation}
where $\Omega$ depends only on the per-skeleton latent marginals. For measurable
invertible latent transformations $g=(g_1,\ldots,g_K)$ with measurable
inverses, define
\begin{equation}
  G_\Omega(\mu) =
  \{g:\Omega((g_1)_\#\mu_1,\ldots,(g_K)_\#\mu_K)
  =\Omega(\mu_1,\ldots,\mu_K)\}.
  \label{eq:gauge_group}
\end{equation}
For $g\in G_\Omega(\mu)$, set $E'_s=g_s\circ E_s$ and
$D'_s=D_s\circ g_s^{-1}$. The relative transformations
$g_b^{-1}\circ g_a$ are invisible to the marginal objective, but they are
exactly the transformations that determine how a source latent is interpreted by
a target decoder.

\begin{theorem}[Gauge non-identifiability]\label{thm:gauge}
Let $L$ be the objective in Eq.~\eqref{eq:shared_latent_obj} with $\Omega$
depending only on per-skeleton latent marginals, and let
$g\in G_\Omega(\mu)$ as in Eq.~\eqref{eq:gauge_group}. Then
$L(E',D')=L(E,D)$. If, in addition, there exist $a,b$ such that
$D_b\circ g_b^{-1}\circ g_a\neq D_b$ on a set of positive $\mu_a$-measure, the
induced retargeting maps differ:
$T'_{a\to b}=D_b\circ g_b^{-1}\circ g_a\circ E_a\neq
T_{a\to b}=D_b\circ E_a$ on a set of positive $P_a$-measure. $T_{a\to b}$ is
determined by $L$ only up to the relative gauge $g_b^{-1}\circ g_a$.
\end{theorem}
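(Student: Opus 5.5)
The plan is to verify the three claims of Theorem~\ref{thm:gauge} in order, splitting the objective in Eq.~\eqref{eq:shared_latent_obj} into its reconstruction terms and its marginal term.

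\textbf{Invariance of the loss.} For the reconstruction terms, observe that $D'_s\circ E'_s=D_s\circ g_s^{-1}\circ g_s\circ E_s=D_s\circ E_s$ pointwise on $\mathcal{X}_s$, since each $g_s$ is a bijection with measurable inverse. Consequently each $L_s(D'_s\circ E'_s;P_s)=L_s(D_s\circ E_s;P_s)$. Here I use only that $L_s$ depends on $(E_s,D_s)$ through the composition, which is how Eq.~\eqref{eq:shared_latent_obj} is written. For the marginal term, compute the new latent marginals as $\mu'_s=(g_s\circ E_s)_\#P_s=(g_s)_\#(E_s)_\#P_s=(g_s)_\#\mu_s$, using functoriality of pushforward under compositions of measurable maps. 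By the definition of $G_\Omega(\mu)$ in Eq.~\eqref{eq:gauge_group}, $\Omega(\mu'_1,\ldots,\mu'_K)=\Omega(\mu_1,\ldots,\mu_K)$. Summing the two parts gives $L(E',D')=L(E,D)$.

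\textbf{The induced maps differ.} By definition $T'_{a\to b}=D'_b\circ E'_a=D_b\circ g_b^{-1}\circ g_a\circ E_a$. Let $A\subseteq\mathcal{Z}$ be the set where $D_b\circ g_b^{-1}\circ g_a\neq D_b$. I will first check that $A$ is measurable; this follows if the latent and motion spaces are standard Borel with measurable diagonal, and I would state that as a standing assumption. By hypothesis, $\mu_a(A)>0$. Then $\{x_a: T'_{a\to b}(x_a)\neq T_{a\to b}(x_a)\}=E_a^{-1}(A)$, and $P_a(E_a^{-1}(A))=\mu_a(A)>0$ by the definition of pushforward. This transfer of positive measure through $\mu_a=(E_a)_\#P_a$ is the only nontrivial step. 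The main obstacle is making sure the hypothesis is phrased on $\mu_a$, as it is, rather than on a measure the decoder sees, and that measurability of the disagreement set holds.

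\textbf{Determination up to the relative gauge.} I would interpret this claim as follows: every transformation in $G_\Omega(\mu)$ yields a minimizer-equivalent pair $(E',D')$ with the same loss, and the resulting retargeting map depends on $g$ only through $g_b^{-1}\circ g_a$. This is immediate from the formula for $T'_{a\to b}$. In particular, a common gauge $g_a=g_b$ leaves the map unchanged, while any $g$ with a nontrivial relative gauge satisfying the positivity condition produces a different map at equal loss. Hence $L$ cannot single out $T_{a\to b}$ beyond this orbit.
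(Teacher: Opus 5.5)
Your proposal is correct and follows the paper's proof essentially step for step: the pointwise cancellation $g_s^{-1}\circ g_s$ in the reconstruction terms, the identity $(g_s\circ E_s)_\#P_s=(g_s)_\#\mu_s$ combined with membership in $G_\Omega(\mu)$ for the regularizer, and pulling the disagreement set $A$ back through $E_a$ to get $P_a(E_a^{-1}(A))=\mu_a(A)>0$. Your measurability remark on $A$ (standard Borel spaces, so the diagonal is measurable) is a small refinement that the paper leaves implicit.
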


This theorem is narrower than a generic impossibility result for disentanglement
\citep{LocatelloEtAl2019DisentanglementImpossible} or nonlinear independent
component analysis. It does not say that no auxiliary information can identify a
representation. It says that the objective class above, when supplied with
sparse skeleton-and-action evidence, does not select a relative latent alignment
between skeletons. Identifiability under iVAE-style auxiliary variables
\citep{khemakhem2020ivae} requires the auxiliary distribution to induce
full-rank variation in the prior sufficient statistics; on Truebones, where
only 413 skeleton-action cells out of 6,300 are occupied and the median occupied
cell contains one clip, skeleton-and-action metadata do not provide the
empirical support needed to invoke that condition.

For Gaussian-aligned regularizers, the invariance contains $O(d)^K$; with
$K=70$ skeletons and the $d=32$ latent dimension used for the finite-dimensional
lower-bound calculation, quotienting the diagonal $O(d)$ subgroup, one common
orthogonal factor applied to all skeletons, still leaves a lower bound of
34,224 continuous gauge degrees of freedom. This is not a count of distinct
realized maps, because an observable retargeting ambiguity requires
\emph{decoder non-degeneracy}: the decoder must discriminate at least some
directions on the gauge orbit. That condition is architecture-dependent. The
checks in Appendix~\ref{app:decoder_nondegeneracy} record a small
Kullback-Leibler (KL)-regularized VAE whose decoder nearly absorbs the orbit,
with a gauge-to-noise ratio of 1.11, and a separate AnyTop hidden-state sweep
showing that a transformer diffusion decoder for arbitrary skeleton topology
\citep{raab2025anytop} can make an orthogonal perturbation orbit visible after
decoding. The AnyTop sweep is not the source of the $d=32$ count; it is a
decoder-observability boundary check. Orbit observability is nevertheless
distinct from gauge-element selection: a decoder can distinguish rotation
directions without the objective selecting which rotation aligns source and
target latents. The main point is therefore not that every latent coordinate
change is observable, but that the objective does not identify the relative
latent alignment needed for a source-preserving map. In this sense
Theorem~\ref{thm:gauge} is a statement about identifiability: the objective
admits many source-conditioned maps consistent with the same evidence. It does
not predict that a trained model's outputs collapse, nor where a method falls on
SIF.

The second mechanism arises when sparse paired evidence is treated as if it were
instance-level supervision. Consider a predictor trained with squared error from
a source clip $x_a$ and target skeleton $b$ to a target motion $x_b$. If the
target is drawn uniformly from the target skeleton and source-action cell rather
than paired to the same motion instance, the risk is
\begin{equation}
  L_{\mathrm{paired}}(f)=
  \mathbb{E}\left[\|f(x_a,b)-x_b\|_2^2\right].
  \label{eq:l_paired}
\end{equation}
The Bayes-optimal squared-error predictor is then the conditional mean of the
target cell \citep{HastieTibshiraniFriedman2009ESL}; in this regime, the
conditional mean is a target-action prototype rather than a source-conditioned
motion.

\begin{proposition}[Conditional-mean degeneration]\label{prop:cm}
Assume $x_b$ is square-integrable, $f$ ranges over measurable functions, and the
empirical pairing draws targets uniformly at random from the (skeleton, action)
cell $B(b,c)=\{x : (\mathrm{skel}(x),\mathrm{action}(x))=(b,c)\}$ with
$c=\mathrm{action}(x_a)$. Then the (almost surely unique) minimiser of
$L_{\mathrm{paired}}$ is
$f^\star(x_a,b)=|B(b,c)|^{-1}\sum_{x_b'\in B(b,c)}x_b'$,
the empirical cell mean: a single fixed prototype per (target skeleton, source
action), independent of the specific source clip.
\end{proposition}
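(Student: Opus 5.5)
The plan is to reduce the statement to the standard fact that the Bayes-optimal predictor under squared loss is the conditional expectation, and then to compute that conditional expectation explicitly under the stated pairing law.

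\emph{Step 1: set up the joint law.} Let $(x_a,b)$ have some distribution $\pi$. Conditional on $(x_a,b)$ with $c=\mathrm{action}(x_a)$, the target $x_b$ is drawn uniformly from the finite set $B(b,c)$, and this draw is independent of $x_a$ beyond its dependence on $c$. The key observation is that $x_b$ is conditionally independent of $x_a$ given $(b,\mathrm{action}(x_a))$. Hence
\[
m(x_a,b):=\mathbb{E}[x_b\mid x_a,b]=|B(b,c)|^{-1}\sum_{x_b'\in B(b,c)}x_b'.
\]
This $m$ is measurable, since it factors through the measurable map $x_a\mapsto \mathrm{action}(x_a)$ and the target skeleton $b$. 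It is also square-integrable, because it is a finite average of square-integrable vectors.

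\emph{Step 2: orthogonal decomposition.} For any measurable $f$ with $\mathbb{E}\|f(x_a,b)\|^2<\infty$ (if this fails, the risk is infinite and $f$ is not a minimiser), expand
\[
\|f-x_b\|^2=\|f-m\|^2+\|m-x_b\|^2+2\langle f-m,\,m-x_b\rangle .
\]
By the tower property, the cross term has zero expectation, since $\mathbb{E}[x_b-m\mid x_a,b]=0$. Therefore
\[
L_{\mathrm{paired}}(f)=L_{\mathrm{paired}}(m)+\mathbb{E}\|f(x_a,b)-m(x_a,b)\|^2 .
\]
This shows that $m$ attains the minimum. Any other minimiser must satisfy $f=m$ $\pi$-almost surely, which gives the "almost surely unique" clause.

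\emph{Step 3: read off the conclusion.} Because $m$ depends on $x_a$ only through $\mathrm{action}(x_a)$, two source clips with the same action and the same target skeleton receive the identical output, namely the fixed cell mean. This is the claimed prototype.

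The only delicate point, which I expect to be the main obstacle, is making the conditional-independence claim in Step 1 precise. The wording "draws targets uniformly at random from the cell" must be formalised so that the sampling kernel for $x_b$ given $x_a$ depends only on $(b,c)$. Once that modelling assumption is written down as a Markov kernel, Steps 2 and 3 are routine. I would also note two edge cases. Cells with $B(b,c)=\emptyset$ never occur in training pairs, so $f$ is unconstrained there. Uniqueness is only up to $\pi$-null sets, which is exactly what "almost surely" means.
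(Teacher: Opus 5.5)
Your proof is correct and follows essentially the same route as the paper: conditional independence of $x_b$ and $x_a$ given $(b,c)$ reduces $\mathbb{E}[x_b\mid x_a,b]$ to the cell mean, and the bias--variance decomposition around that mean gives existence and almost-sure uniqueness of the minimiser. The only difference is that the paper minimises pointwise over $u=f(x_a,b)$ conditional on $(x_a,b)$, whereas you integrate the decomposition over the joint law, which makes the almost-sure uniqueness and the finite-second-moment caveat slightly more explicit.
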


The proposition is deliberately scoped. It applies to cell-paired squared-error
regression under random target-cell pairing, not to true instance-level pairs,
which would supply exactly the correspondence the sparse regime lacks.
Section~\ref{sec:sif} tests this boundary on paired human-to-robot data: under
random target-cell pairing a model's outputs collapse toward the cell mean, while
the same model trained on true pairs recovers the source-instance relation. It
also assumes the target cell is nonempty and that motions are represented in the
Euclidean vector space on which the squared-error loss is evaluated. The
controlled ladder in Figure~\ref{fig:propcm_ladder} shows the same mechanism in
a setting where the correct source-conditioned transport $T^\star$ is known;
the oracle variance is the within-cell variance of $T^\star(x_a)$ over held-out
source clips. As paired support per cell ranges from $M=2$ to $M=50$, the
squared-error regressor trained with random same-cell targets keeps only about
2\% of that oracle within-cell output variance.
No evaluated Truebones method exactly instantiates this random cell-pairing
model. The closest real-data echo is AL-Flow, a label-conditional flow
comparator that receives cluster and exact-action labels but no source motion.
The latent-level diagnostic of Section~\ref{sec:sif} (Table~\ref{tab:lsif}) places AL-Flow at
$-0.098$, indicating that source-instance information can be lost before
decoding when the training signal approximates cell-level pairing rather than
instance-level correspondence. Appendix~\ref{app:data_methods} places each
evaluated method relative to Theorem~\ref{thm:gauge} and
Proposition~\ref{prop:cm}; for these methods, the collapse of output variation
reported in Section~\ref{sec:sif} is an empirical observation rather than a
prediction of either result.
Appendix~\ref{app:proofs} contains the formal proofs: the gauge theorem in
Appendix~\ref{app:gauge_proof}, the conditional-mean proposition in
Appendix~\ref{app:cm_proof}, and the retrieval corollary in
Appendix~\ref{app:retr_gauge_proof}.

\begin{figure}[t]
  \centering
  \resizebox{\textwidth}{!}{%
  \begin{tikzpicture}[
      x=1cm,
      y=1cm,
      >=Latex,
      figfont/.style={font=\fontfamily{phv}\selectfont},
      every node/.style={figfont},
      title/.style={figfont,font=\fontfamily{phv}\selectfont\bfseries\fontsize{12.8}{14.0}\selectfont,text=figInk},
      branch/.style={figfont,font=\fontfamily{phv}\selectfont\bfseries\fontsize{7.6}{8.6}\selectfont},
      small/.style={figfont,font=\fontfamily{phv}\selectfont\fontsize{6.7}{7.6}\selectfont,text=figMuted},
      axislabel/.style={figfont,font=\fontfamily{phv}\selectfont\fontsize{7.2}{8.2}\selectfont,text=figInk},
      card/.style={inner sep=0pt,outer sep=0pt,draw=figLine,line width=0.35pt,rounded corners=0.8pt,fill=white},
      collapsearrow/.style={->,figOrange,line width=0.85pt,shorten >=2.4pt,shorten <=2.4pt},
      collapsepath/.style={figOrange,line width=0.72pt},
    ]

    \node[title,anchor=west] at (0.00,5.58) {A. Random target-cell pairing};
    \node[branch,text=figOrange,anchor=west] at (0.10,4.82) {Target-action cell};
    \node[card] (clipa) at (1.00,4.08)
      {\includegraphics[width=2.50cm]{figures/motion_mesh_assets/figure2_prototype/ostrich_target_attack.png}};
    \node[card] (clipb) at (5.25,4.08)
      {\includegraphics[width=2.50cm]{figures/motion_mesh_assets/figure2_prototype/ostrich_target_attack2.png}};
    \node[card] (clipc) at (1.00,2.32)
      {\includegraphics[width=2.50cm]{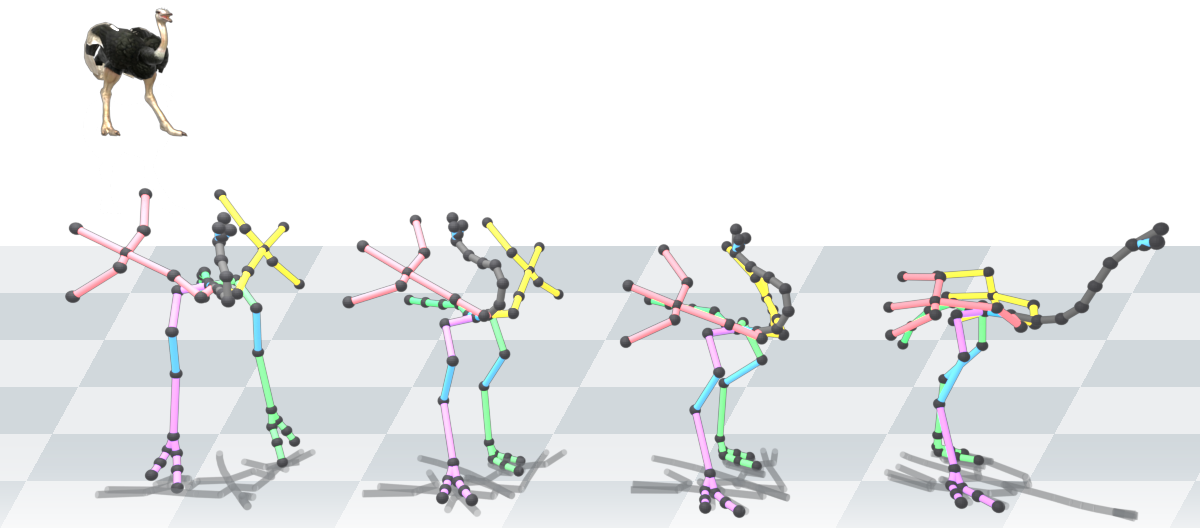}};
    \node[card] (clipd) at (5.25,2.32)
      {\includegraphics[width=2.50cm]{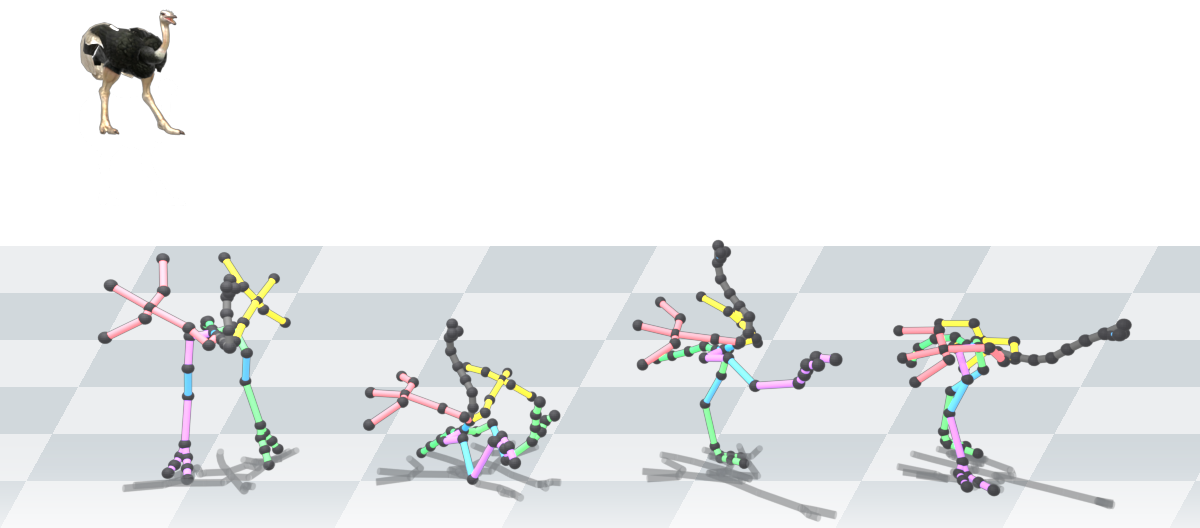}};

    \node[small,draw=figLine,rounded corners=1.2pt,fill=white,inner xsep=3.0pt,inner ysep=2.0pt,text=figInk]
      (loss) at (3.12,3.20) {$\min_f\,\mathbb{E}\|f(x_a,b)-x_b\|_2^2$};
    \node[small,align=center,text=figMuted] at (3.12,1.56) {source instance is not paired};

    \node[branch,text=figOrange,anchor=center] at (8.00,4.82) {Cell mean prototype};
    \node[card] (protofig) at (8.00,3.20)
      {\includegraphics[width=2.98cm]{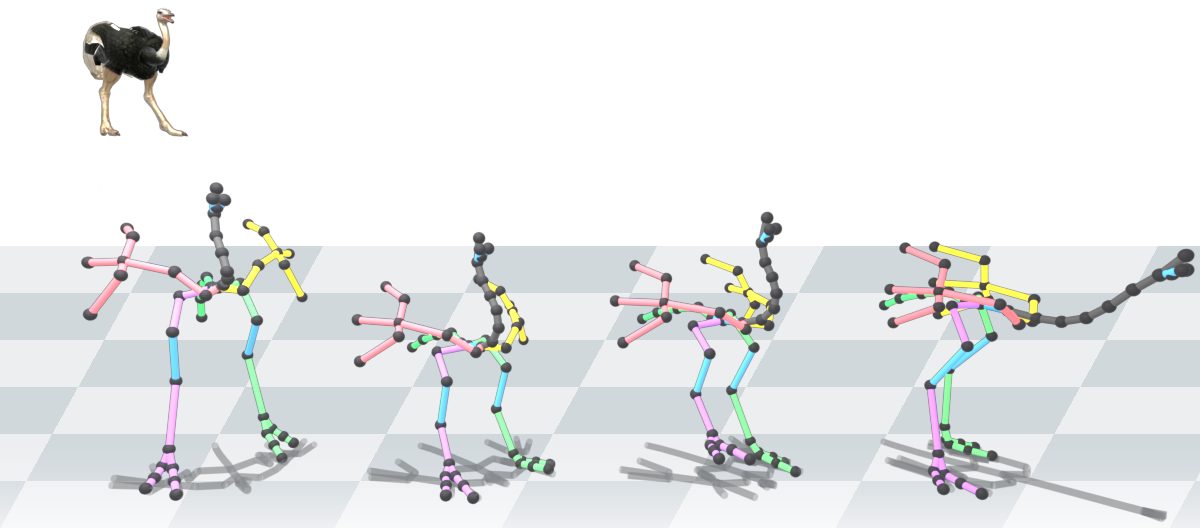}};

    \draw[collapsearrow,line width=1.05pt] (loss.east) -- (protofig.west);

    \draw[figLine,line width=0.90pt,densely dashed] (10.14,0.58) -- (10.14,5.42);

    \node[title,anchor=west] at (10.55,5.58) {B. Conditional-mean degeneration};
    \begin{scope}[xshift=0.35cm]
    \coordinate (axisO) at (11.22,1.05);
    \coordinate (axisX) at (15.96,1.05);
    \coordinate (axisY) at (11.22,4.58);
    \draw[figLine,line width=0.55pt] (axisO) -- (axisX);
    \draw[figInk,line width=0.65pt] (axisO) -- (axisY);

    \draw[figMuted,line width=0.55pt,densely dashed] (11.22,4.30) -- (15.96,4.30);
    \draw[figLine,line width=0.45pt,densely dotted] (11.22,1.05) -- (15.96,1.05);
    \node[small,anchor=west,text=figInk] at (14.90,4.43) {Oracle};
    \node[small,anchor=west,text=figMuted] at (11.26,1.40) {Cell mean};

    \draw[figLine,line width=0.38pt] (11.22,2.67) -- (15.96,2.67);
    \draw[figLine,line width=0.38pt] (11.22,1.05) -- (11.10,1.05);
    \draw[figLine,line width=0.38pt] (11.22,2.67) -- (11.10,2.67);
    \draw[figLine,line width=0.38pt] (11.22,4.30) -- (11.10,4.30);
    \node[small,anchor=east,text=figMuted] at (10.98,1.05) {0};
    \node[small,anchor=east,text=figMuted] at (10.98,2.67) {.5};
    \node[small,anchor=east,text=figMuted] at (10.98,4.30) {1.0};

    \foreach \x/\lab in {12.16/2,12.80/4,13.44/8,14.08/16,14.72/32,15.36/50} {
      \draw[figLine,line width=0.38pt] (\x,1.05) -- (\x,0.93);
      \node[small,anchor=north,text=figMuted] at (\x,0.83) {\lab};
    }

    \path[fill=figGreenFill,opacity=0.72]
      (12.16,1.134) -- (12.80,1.103) -- (13.44,1.132) -- (14.08,1.105) --
      (14.72,1.108) -- (15.36,1.111) -- (15.36,1.094) -- (14.72,1.099) --
      (14.08,1.092) -- (13.44,1.096) -- (12.80,1.091) -- (12.16,1.095) -- cycle;
    \draw[figGreen,line width=0.95pt]
      (12.16,1.115) -- (12.80,1.097) -- (13.44,1.114) -- (14.08,1.099) --
      (14.72,1.104) -- (15.36,1.103);
    \foreach \p in {(12.16,1.115),(12.80,1.097),(13.44,1.114),(14.08,1.099),(14.72,1.104),(15.36,1.103)} {
      \fill[figGreen,draw=white,line width=0.45pt] \p circle (2.6pt);
    }
    \node[small,anchor=west,align=left,text=figGreen] at (13.30,1.62)
      {Squared-error regressor,\\random same-cell targets};

    \node[axislabel,anchor=north] at (13.59,0.38) {$M$ pairs per cell};
    \node[axislabel,rotate=90,anchor=south] at (10.50,2.82)
      {Predicted / oracle variance};
    \end{scope}
  \end{tikzpicture}}
  \caption{Cell-paired squared-error regression collapses toward the target-cell
  prototype. Panel A illustrates random target-cell pairing: several admissible
  target clips define a cell, but the squared-error optimum is the cell mean
  prototype rather than a source-conditioned motion. Panel B reports the
  controlled ladder with known source-conditioned transport $T^\star$; from
  $M=2$ to $M=50$ pairs per cell, predicted output variance stays at about
  2\% of the oracle within-cell variance
  $\mathrm{Var}(T^\star(x_a))$. The shaded band is the three-seed standard
  deviation.}
  \label{fig:propcm_ladder}
\end{figure}

These two mechanisms also explain why retrieval comparators can be useful
diagnostics without identifying the retargeting map. The following formal check
records a boundary condition: a rule whose ranking depends only on features
computed from joint positions, rather than from a learned shared latent, is
outside the gauge variables of the objective above. It can therefore
characterize the source-blind floor without selecting the source-conditioned
map.

\begin{corollary}[Retrieval is gauge-invariant]\label{cor:retr-gauge}
Let $r:X_a\times\{b\}\to L_b$ be a retrieval rule that maps a source to a clip
in the target-skeleton library by a function of a feature $\phi$ that is itself
gauge-invariant, for example kinematic descriptors computed directly from joint
positions rather than from a learned latent. Then $r$ is invariant under the
gauge group $G_\Omega$ of any shared-latent objective: replacing $E_a$ with
$g_a\circ E_a$ leaves $r(x_a,b)$ unchanged. Furthermore $r(x_a,b)$ is a single
existing clip rather than an average, so the conditional-mean operator of
Prop.~\ref{prop:cm} does not apply.
\end{corollary}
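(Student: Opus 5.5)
The plan is to make the dependency structure of $r$ explicit and then observe that the gauge action in Theorem~\ref{thm:gauge} never touches the variables on which $r$ depends. First, I formalise the hypothesis: there is a feature map $\phi$ defined directly on motion space, $\phi:\mathcal{X}_a\cup\mathcal{X}_b\to\mathbb{R}^p$ (for example joint-position kinematic descriptors), and a selection rule $\rho$ such that
\[
r(x_a,b)=\rho\bigl(\phi(x_a),\{\phi(y)\}_{y\in L_b}\bigr)\in L_b .
\]
Here $\rho$ returns an element of the finite library $L_b$, for instance $\arg\min_{y\in L_b} d(\phi(x_a),\phi(y))$ with a fixed deterministic tie-break. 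The only objects that appear are the source clip $x_a$, the fixed library $L_b$ of real clips, $\phi$, and $\rho$. None of these is a function of $(E,D)$.

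Second, I apply the gauge transformation exactly as it is defined before Theorem~\ref{thm:gauge}. For $g\in G_\Omega(\mu)$, set $E'_s=g_s\circ E_s$ and $D'_s=D_s\circ g_s^{-1}$. This changes only the learned encoders and decoders. The data $x_a$ and $L_b$ are unchanged, and $\phi$ and $\rho$ do not depend on $(E,D)$. So $r'(x_a,b)=\rho(\phi(x_a),\{\phi(y)\}_{y\in L_b})=r(x_a,b)$ pointwise, for every $x_a$ and every $g$, not merely almost surely. I would add a remark on why the restriction on $\phi$ is needed. If $\phi$ were instead defined through a learned latent, say $\phi=E_a$ on the source side and $E_b$ on the library side, then invariance would hold only when $d(g_a u,g_b v)=d(u,v)$, for example when $g_a=g_b$ is an isometry. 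The relative gauge $g_b^{-1}\circ g_a$ would break it, and that is precisely the freedom Theorem~\ref{thm:gauge} exhibits. This contrast is what justifies the phrase "itself gauge-invariant" in the statement.

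Third, for the conditional-mean claim, I note that $r(x_a,b)\in L_b$ is by construction a single existing clip. Proposition~\ref{prop:cm} describes the minimiser of $L_{\mathrm{paired}}$ over measurable $f$, which is the cell average $|B(b,c)|^{-1}\sum x_b'$. The retrieval rule is not obtained by minimising $L_{\mathrm{paired}}$, and its range is restricted to library elements, so the averaging operator is never applied. To make "does not apply" sharp, I would observe that even the best library-valued rule under squared error returns the medoid nearest the cell mean, not the mean itself. That medoid generally does not equal the mean unless the cell has a single clip or coincident clips.

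The main obstacle is scoping "gauge-invariant $\phi$" so that the corollary is not vacuous and not circular. I would define it operationally as "$\phi$ factors through the raw joint positions and does not reference $(E_s,D_s)$". With that definition the invariance is immediate, and the substantive content is the contrast with latent-defined features noted in the second step.
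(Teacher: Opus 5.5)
Your proposal is correct and follows the same route as the paper's proof. The paper's argument is that $r$ depends on the source only through the motion-space feature $\phi$, and you also make explicit the dependence on the fixed library $L_b$. Hence the reparameterisation $E_a\mapsto g_a\circ E_a$ changes no argument of $r$, and since $r(x_a,b)\in L_b$ is a single existing clip, the cell averaging of Proposition~\ref{prop:cm} is not the relevant operation.

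Your side remarks on latent-defined features and on the library element nearest the cell mean go beyond the paper and are fine as commentary. One small overstatement: ``invariance would hold only when $d(g_a u,g_b v)=d(u,v)$'' is too strong, because an argmin rule only needs the ranking of distances to be preserved, not the distances themselves.
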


The corollary describes a measurement, not a method to recommend. Its
invariance holds because the score is computed from the motion itself rather
than from learned latent coordinates.
We later instantiate this measurement role through label-only baselines and
ANCHOR, our deterministic target-library retrieval comparator. These comparators
can mark what action-level evidence accepts at the source-blind floor precisely
because they avoid the two objective mechanisms analyzed here. They do not
identify the retargeting map; they expose why standard evidence alone cannot.

\section{Source-Instance Fidelity and Empirical Evidence}
\label{sec:sif}

What the action-level score leaves out is whether differences among source
clips survive retargeting. A source-conditioned map should not merely produce a plausible
target motion; after the source skeleton, target skeleton, and action are fixed,
it should preserve the relative geometry among source clips. We therefore
define \emph{Source-Instance Fidelity} (SIF) on each eligible triple
$(a,b,c)$. For source clips $x_i,x_j\in\mathcal{X}_a$ and generated target
motions $y_i,y_j\in\mathcal{X}_b$, each pair is cropped to its common duration
and put into a common comparison frame by least-squares Procrustes
superposition, which removes global translation, rotation, and isotropic scale
\citep{Goodall1991Procrustes}. Let $d_{\mathrm{src}}(i,j)$ and
$d_{\mathrm{out}}(i,j)$ be the aligned per-frame discrepancies on the source
and output sides. Using Pearson's product-moment correlation
\citep{Pearson1896Regression}, we define
\begin{equation}
  \rho_{\textsc{sif}}
  =
  \mathrm{corr}\left(
  \{d_{\mathrm{src}}(i,j)\}_{i<j},
  \{d_{\mathrm{out}}(i,j)\}_{i<j}
  \right),
  \label{eq:sif}
\end{equation}
within each source-target-action triple and report the macro-average over
triples. Because the same source clips recur across triples, confidence
intervals resample whole source skeletons. We score every method twice: in raw
form, as above, and in length-controlled form, where every motion is first
resampled to 64 frames so that clip duration carries no information. Beside the
correlation we report the output variation, the average distance between outputs
divided by the average distance between their source clips; it is near one when
outputs vary as much as their sources and zero when they collapse to a single
motion.

SIF asks a different question from action retrieval. A generator can recover the
right target action by returning a target-action habit, in which case the
action-level score may be positive while $\rho_{\textsc{sif}}$ remains near
zero. A source-preserving map should instead make larger source-side changes
appear as larger target-side changes after target skeleton and action are fixed.
For failure-mode analysis, we also compute \emph{latent Source-Instance
Fidelity} (L-SIF) before decoding, replacing output distances with Frobenius
distances between the corresponding target latents
\citep{GolubVanLoan1996MatrixComputations}. This locates whether source-instance
information is absent in the latent representation or erased by the decoder.

The synthetic calibration fixes the scale of the diagnostic. In the controlled
2$\times$2 setting, the source-conditioned transport $T^\star$ is known for
every source clip, providing the instance-level correspondence absent from the
real corpus. Appendix~\ref{app:synth} shows that the oracle sits near one, while
two source-blind references are centered near zero with upper bootstrap bounds
below $+0.06$. On real data, a generator given no source at generation, with
output length fixed, scores exactly zero (Appendix~\ref{app:ace_controls}). To
decide whether a method sits at this floor, we use a shuffle test: within each
triple, outputs are reassigned at random among the source clips, the true
pairing included. A method is at the source-blind floor when its SIF does not
stand out from these shuffles under both raw and length-controlled scoring, and
near the floor when it does but stays inside the
$\abs{\rho_{\textsc{sif}}}<0.10$ band set by the synthetic references.

On the real corpus, SIF must be reported with the support that sparse data
allow. We evaluate every method on the 1,891 triples of
Section~\ref{sec:map}; methods that cannot produce an output for a triple are
scored on the remainder, which never falls below 1,413 triples. The original
evaluation, which pairs each source group with a single target (49 triples), is
reported in Appendix~\ref{app:sif_robustness}. The suffixes T and I mark
models trained on all 70 skeletons or on 60, with the other 10 held out as
targets. The comparison is organized by evidentiary
role rather than by rank: published generative objectives are evaluated beside
rows that isolate sparse correspondence, supervision-matched labels, latent
transport, target-library retrieval, and label-only sampling
\citep{raab2025anytop,moreflow2025,ace2023,m2m2024,DeBortoliEtAl2021DSB,unsb2023}.
Appendix~\ref{app:data_methods} records source-conditioning adaptations and
split conventions.

ANCHOR occupies a distinct diagnostic role: it is not a retargeting generator.
Given a source clip, target skeleton, and action request, it predicts a coarse
source action cluster from motion-space descriptors and retrieves an existing
target-library clip using that prediction, exact action metadata, and direct
kinematic descriptors. Because this rule sits outside the learned latent
objectives analyzed in Section~\ref{sec:structural}, it can show when
action-level evidence and source-conditioned transfer separate. The scoring rule
appears in Appendix~\ref{app:anchor}.

\begin{figure}[t]
  \centering
  \includegraphics[width=0.92\linewidth]{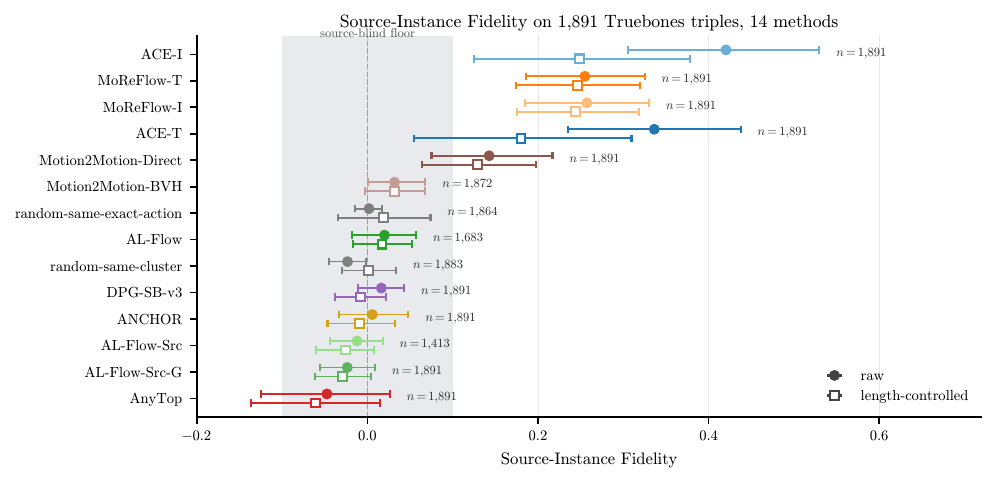}
  \caption{Source-Instance Fidelity for the fourteen evaluated methods on the
  1,891 Truebones triples. Filled circles use raw scoring and open squares
  length-controlled scoring; intervals are 95\% bootstrap intervals that
  resample whole source skeletons, and $n$ is the number of triples a method
  supports. The shaded band marks $\abs{\rho_{\textsc{sif}}}<0.10$. Eight
  methods do not stand out from shuffled pairings under either scoring, and
  Motion2Motion-BVH stays near zero.}
  \label{fig:sif_max_support}
\end{figure}

The main empirical pattern is that action-consistent generation usually does not
preserve source-instance geometry. Figure~\ref{fig:sif_max_support} shows that
nine of the fourteen evaluated methods sit at or near the source-blind floor
under both raw and length-controlled scoring: AnyTop, the three AL-Flow
variants, DPG-SB-v3, ANCHOR, and both random label-only references do not stand
out from shuffled pairings, and Motion2Motion-BVH stays near zero ($+0.032$).
Five methods rise above the floor. Motion2Motion-Direct keeps a modest
correlation ($+0.14$ raw, $+0.13$ length-controlled). MoReFlow-T and MoReFlow-I
remain above the floor at about $+0.25$ under both scorings, and their outputs
keep 16 to 21 percent of the source variation. MoReFlow trains on source-target
pairs matched by motion descriptors computed from joint positions, a coarse form
of the correspondence evidence the sparse regime lacks; we did not isolate this
matching step. ACE-T and ACE-I score higher in raw form ($+0.34$ and $+0.42$)
but drop under length control ($+0.18$ and $+0.25$), and their outputs vary by
only 2 to 3 percent of the source variation in raw scoring and by less than 1
percent when the length is fixed at generation. The result is consistent with
the structural claim in Section~\ref{sec:structural}: action-level evidence can be present while the
source-conditioned map remains unidentified. Appendix~\ref{app:sif_robustness}
reports the corresponding support, bootstrap, and latent-SIF checks.

The qualitative panorama makes the same ambiguity visible. In
Figure~\ref{fig:qual_method_panorama}, a single Bird $\to$ KingCobra attack
query is held fixed while the evaluated methods produce visibly different
target motions. The figure is not a substitute for SIF, but it shows why a
visual action judgment alone cannot identify the source-conditioned map: many
outputs can look action-consistent while corresponding to different maps.

\begin{figure}[t]
  \centering
  \includegraphics[width=1\linewidth]{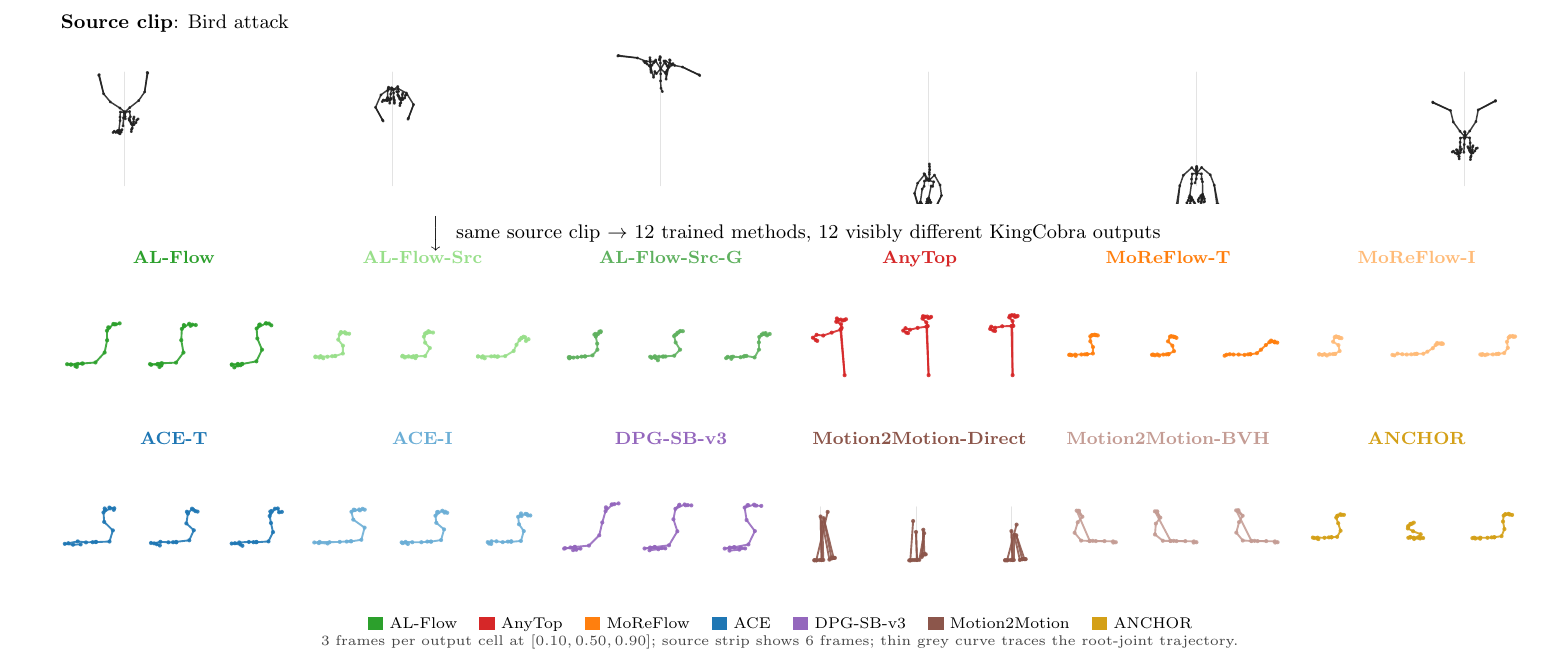}
  \caption{Qualitative non-identifiability panorama. A single Bird $\to$
  KingCobra attack query is shown across evaluated methods. All rows receive the
  same source clip, target skeleton, and action request, yet the generated
  target motions differ visibly across methods. The figure is qualitative
  evidence for the ambiguity isolated by SIF: action-consistent target motion
  can be produced without identifying a unique source-conditioned map.}
  \label{fig:qual_method_panorama}
\end{figure}

The panorama does not ask the reader to choose a visually best method,
nor does it treat a single attractive output as evidence of transfer. It fixes
the query and shows the range of outputs that remain possible once the action is
recognizable. SIF then supplies the missing relational
test: if the source clip changes within the same source-target-action cell, the
target output should change in the corresponding way rather than merely remain a
plausible KingCobra attack.

Action-level AUC rewards any output with a recognizable target action,
including outputs that SIF places at the source-blind floor.
Table~\ref{tab:label_only} gives the numerical audit, and
Figure~\ref{fig:label_only_audit} visualizes the same comparison. The
random-same-cluster reference chooses a random target clip with the same coarse
action cluster, while random-same-exact-action chooses a random target clip
with the exact action label whenever such a clip exists. These rules have no
access to a source-conditioned retargeting map, yet random-same-cluster matches
ANCHOR within $\pm 0.012$ on both the overall and held-out splits. The
exact-action variant exceeds ANCHOR by $+0.126$ overall and $+0.242$ held-out,
but only on the $221/600$ overall and $57/200$ held-out coverage where
exact-action positives exist in the target library. The qualifier is part of
the claim: strong action-level evidence can be generated by label availability
alone, without identifying the map. Appendix~\ref{app:enumeration} gives the
broader enumeration and action-level comparison tables behind this audit.

\begin{table}[!ht]
  \caption{Label-only retrieval can match or exceed ANCHOR's action-level AUC
  without solving the retargeting problem. AUC is measured under the same
  cluster-tier Procrustes protocol used for ANCHOR. The exact-action reference
  has restricted coverage, so its larger gaps are only defined on the rows where
  the target library contains an exact-action candidate.}
  \label{tab:label_only}
  \centering
  \small
  \begin{tabular}{lrrll}
\toprule
Method & Overall $n / 600$ & Held-out $n / 200$ & Overall AUC & Held-out AUC \\
\midrule
ANCHOR & 600 & 200 & 0.757 & 0.681 \\
random-same-cluster & 570 & 177 & 0.769\,(+0.012) & 0.678\,(-0.003) \\
random-same-exact-action & 221 & 57 & 0.883\,(+0.126) & 0.923\,(+0.242) \\
\midrule
\multicolumn{5}{p{0.94\linewidth}}{\footnotesize Parenthetical $\Delta$ values are method AUC minus ANCHOR. random-same-cluster matches ANCHOR within $\pm 0.012$ on both splits. random-same-exact-action exceeds ANCHOR by $+0.126$ overall and $+0.242$ held-out, but only on the $221/600$ and $57/200$ coverage where exact-action positives exist; the coverage qualifier travels with the gap.} \\
\bottomrule
\end{tabular}

\end{table}

\begin{figure}[!ht]
  \centering
  \includegraphics[width=0.54\linewidth]{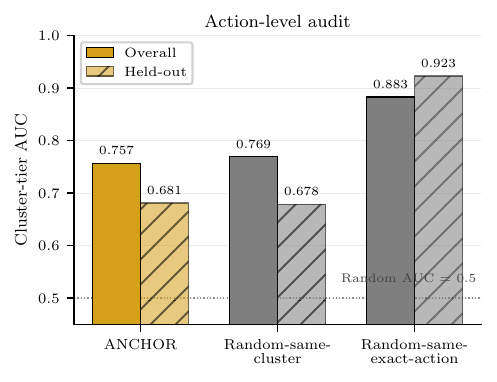}
  \caption{Action-level AUC cannot distinguish source-conditioned transport
  from label-based retrieval. ANCHOR retrieves from the target library using
  action information and motion descriptors, while the two random references use
  only label constraints. Their AUC values remain in the same action-level
  regime, showing that the standard score can accept target-action recovery even
  when source-instance transfer has not been established. The exact-action
  reference is coverage-limited to $221/600$ overall and $57/200$ held-out
  queries.}
  \label{fig:label_only_audit}
\end{figure}

The label-only audit changes how the positive SIF rows should be read. Once
action-level AUC can be reproduced by references with no source-conditioned map,
the question is not whether a method also scores well on action recovery, but
whether it contains independent motion-space evidence that preserves the source
instance after the target skeleton and action are fixed. The rows above the
floor answer this only in part. ACE retains a relational signal that enters
through its source input: with the source removed and the output length fixed at
generation, its SIF falls from $+0.29$ to exactly zero
(Appendix~\ref{app:ace_controls}). Its outputs, however, vary little with the
source, and under length control its correlation cannot be told apart from
MoReFlow's on the same triples.

The ACE controls in Appendix~\ref{app:ace_controls} also test where this signal
comes from. Removing the motion-space adversarial loss, in a comparison matched
in training data and seeds with the output length fixed at generation, leaves
SIF unchanged ($+0.33$ with and without the loss on the 1,891 triples, three
seeds each), so the adversarial constraint does not create the signal. The
nearest-neighbor leakage check in Table~\ref{tab:ace_leakage_nn} gives the
complementary control: $76$ to $79$ percent of ACE outputs are closer, under
coarse kinematic features, to a target-skeleton training clip than to the source
clip, which does not support a direct source-copying explanation. The structural
lesson is therefore specific: an auxiliary objective can leave a partial
source-instance signal without selecting a source-preserving map, and
action-level AUC by itself cannot tell the two apart.

The same pattern holds where true correspondences exist. We built two paired
human-to-robot evaluations with the SIF protocol unchanged
(Appendix~\ref{app:robots}). In the first, human motion clips from the
BONES-SEED dataset come with their retargeted counterparts on the Unitree G1
humanoid (90 action groups, 355 clips). In the second, we retargeted clips from
the LAFAN1 human motion dataset to six humanoid robots with the GMR
inverse-kinematics tool, so every clip has a counterpart on every robot. In
both, we trained one model in three ways: with an unpaired objective of the
Theorem~\ref{thm:gauge} class, with random same-action pairing as in
Proposition~\ref{prop:cm} (the averaging objective), and on the true pairs.
Table~\ref{tab:g1_joint} reports the G1 results on identical test items. A
randomly chosen same-action clip is recognized as the right action yet sits at
the floor, the two theorem objectives lose most of the source variation, and the
model trained on true pairs recovers both. Across the six robots, the averaging
objective collapsed in all 24 evaluations and the unpaired objective in 22,
while the true-pair model reached correlations of 0.77 to 0.99 in all 24. These
robots are humanoids, their pairs come from retargeting tools rather than
animators, and the sparsity is created by us.

\begin{table}[t]
  \caption{Human-to-G1 evaluation on identical test items (90 action groups,
  355 clips). SIF uses raw scoring; variation is the median ratio of output
  spread to source spread;
  realistic is the share of outputs whose nearest real G1 pose lies within the
  95th percentile of held-out real motion.}
  \label{tab:g1_joint}
  \centering
  \small
  \begin{tabular}{lrrrr}
    \toprule
    & SIF & Action AUC & Variation & Realistic \\
    \midrule
    True retargeted counterpart & $+0.960$ & 0.980 & 0.52 & 97.2\% \\
    Random same-action clip & $-0.008$ & 0.978 & 0.41 & 97.5\% \\
    Unpaired objective (Theorem~\ref{thm:gauge} class) & $+0.203$ & 0.499 & 0.001 & 100.0\% \\
    Averaging objective (Proposition~\ref{prop:cm} class) & $+0.379$ & 0.820 & 0.05 & 53.0\% \\
    Model trained on true pairs & $+0.900$ & 0.967 & 0.52 & 94.6\% \\
    \bottomrule
  \end{tabular}
\end{table}

\section{Limitations}
\label{sec:limitations}

SIF asks whether outputs are ordered like their sources. It does not measure how
much they differ, so a method can score well while its outputs barely change; we
therefore report output variation beside it. For a single group of three clips
the score is noisy, and it becomes reliable only as an average over many groups
(Appendix~\ref{app:sif_robustness}). A positive score can also appear without
any transfer of source motion. In our study, outputs that copied the length of
their source clip, a source latent left available at generation, and random
draws from small pools of same-action clips each produced one; length-controlled
scoring and the controls in Section~\ref{sec:sif} and
Appendix~\ref{app:ace_controls} separate these effects from source content. The animal evidence comes from a single dataset, and the enlarged
evaluation adds target bodies rather than new source motions. The robot studies
extend the picture to humanoids, but their paired motions come from retargeting
tools rather than animators, and their sparsity is created by us. Several
evaluated methods are also adaptations of published models to this data, and our
theorems cover specific training objectives rather than every method we
evaluate.

\section{Conclusion}
\label{sec:conclusion}

Cross-skeleton retargeting can look successful under action-level evaluation
even when the source-conditioned map has not been identified. We traced this to
two structural mechanisms. Under \emph{gauge non-identifiability}, matching each
body's motions only in distribution leaves the latent spaces of different bodies
free to be transformed relative to one another, so many maps fit the data. Under
\emph{conditional-mean degeneration}, pairing clips only by action drives
squared-error training to an average target motion instead of the motion
intention of the source clip. Source-Instance Fidelity makes the missing evidence
measurable by asking whether outputs differ the way their source clips do.

In sparse animal motion data, outputs that show the right action often sit at
the source-blind floor, and label-only references show that action-level AUC can
stay high without any map being selected. Methods that rise above the floor,
including ACE, retain only a partial relational signal, while on paired
human-to-robot data a model trained on true pairs recovers the source-instance
relation. Progress in retargeting should therefore be judged by whether an
objective improves action recovery, source-conditioned transport, or both. The
open question is which correspondence evidence or auxiliary objectives can select
a source-preserving map from the many maps that fit the same data.

\begin{ack}
This work was supported by the Research Council of Finland, Flagship program Finnish Center for Artificial Intelligence (FCAI), and the Research Council of Finland (357301, 358246). We acknowledge CSC – IT Center for Science, Finland, for awarding this project access to the LUMI supercomputer, owned by the EuroHPC Joint Undertaking, hosted by CSC (Finland) and the LUMI consortium through CSC. We acknowledge the computational resources provided by the Aalto Science-IT project.
\end{ack}

\bibliographystyle{plainnat}
\bibliography{references}


\appendix

\section{Extended Related Work}
\label{app:related}

Cross-skeleton retargeting sits between character animation, heterogeneous
motion generation, and representation learning. Earlier retargeting work
established that motion can be shared across bodies by making skeletal structure
explicit, by learning motion spaces intended to be less dependent on topology,
or by conditioning sequence models on skeleton context
\citep{Aberman2020SAN,same2024,SkeletonInContext2023}. The generative setting
studied here inherits this ambition, but places it in sparse heterogeneous
libraries where dense source-target correspondences are absent
\citep{raab2025anytop,moreflow2025,ace2023,m2m2024}. Our distinction is not
that these methods fail to synthesize plausible target motion. It is that
standard evidence can accept a target motion whose action is correct while the
source-conditioned map remains unidentified.

Recent work on broader object and animal motion further clarifies why this
identification question matters. Together, these papers move motion synthesis
away from one canonical human body and toward settings in which body identity,
morphology, and topology become part of the generative problem itself
\citep{raab2024single,lee2025how,egan2024dogcode,chen2025topologyagnostic}.
These works broaden the range of bodies that motion models must handle. The
structural question in this paper is complementary: in sparse heterogeneous
data, which evidence selects the motion intention carried from the source clip
rather than a target-side action habit?

The formal argument is closest in spirit to identifiability limits for
unsupervised representation learning. Locatello et al.
\citep{LocatelloEtAl2019DisentanglementImpossible} show that unsupervised
disentanglement cannot be guaranteed without inductive bias, while the
iVAE analysis of Khemakhem et al.~\citep{khemakhem2020ivae} shows how suitable
auxiliary variables can make nonlinear independent components identifiable. Our
gauge theorem is narrower than either statement. It does not rule out
identification with stronger side information; it shows that per-skeleton
marginal matching does not select the relative latent alignment required by a
source-preserving retargeting map.

Retrieval and motion-library methods provide a useful foil because they operate
directly in motion space. Motion graphs \citep{KovarGleicher2002MotionGraphs}
already exploit the idea that a library can preserve physically realized motion
snippets, while sparse-correspondence retrieval shows how such libraries can be
used when source and target bodies do not share topology \citep{m2m2024}. These
methods motivate the diagnostic comparison used in the main text: retrieval can
characterize what action-level evidence accepts, but by itself it does not
identify the source-conditioned retargeting map.

Schr\"odinger bridge and flow objectives offer another route through the same
space of ambiguities. Diffusion Schr\"odinger bridges
\citep{DeBortoliEtAl2021DSB} and unpaired neural Schr\"odinger bridges
\citep{unsb2023} motivate transport between distributions, and flow matching is
one way modern retargeting methods instantiate this distributional view
\citep{moreflow2025}. In the sparse cross-skeleton setting, distributional
transport alone does not remove the relative-gauge ambiguity. This is why the
paper separates action-level transport evidence from the source-instance
evidence measured by SIF.

\section{Data Regime and Method Roster}
\label{app:data_methods}

The structural limits studied in the main paper arise in a sparse heterogeneous
motion library. Table~\ref{tab:truebones_regime} records the Truebones regime
used throughout the experiments. The important fact is not merely that the
library is small, but that most skeleton-action cells are empty and most
occupied cells contain a single clip, making instance-level correspondence rare
even when action labels are available.

\begin{table}[t]
  \centering
  \caption{Truebones zoo regime. SIF is evaluated on source groups with at least
  three clips; the main evaluation pairs each group with every target skeleton
  that performs the same action, and the original evaluation pairs it with a
  single target.}
  \label{tab:truebones_regime}
  \begin{tabular}{lr}
\toprule
Quantity & Value \\
\midrule
Skeletons & 70 \\
Total clips & 616 \\
Coarse clusters & 10 \\
Exact actions & 90 \\
Skeleton-action cells & 6{,}300 ($70 \times 90$) \\
Occupied cells & 413 (6.6\%) \\
Median clips per occupied cell & 1 \\
Source groups with at least three clips & 49 (171 source clips) \\
SIF triples, every eligible target (main evaluation) & 1{,}891 (6{,}611 queries) \\
SIF triples, one target per group (original evaluation) & 49 (171 queries) \\
Original cross-method intersection & 37 triples (130 queries) \\
\bottomrule
\end{tabular}

\end{table}

The evaluated roster in Table~\ref{tab:method_roster} separates published
generative models, supervision-matched comparators, the deterministic ANCHOR
comparator, and source-blind random references. The public method names in this
table are the only names used for result interpretation.

\begin{table}[t]
  \centering
  \caption{Method roster. Train-skeleton counts distinguish transductive
  evaluation on all skeletons from inductive evaluation with held-out target
  skeletons.}
  \label{tab:method_roster}
  \resizebox{\linewidth}{!}{\begin{tabular}{p{0.34\linewidth}p{0.46\linewidth}r}
\toprule
Method & Family & Train skeletons \\
\midrule
AnyTop & source-conditioned diffusion & 70 \\
ACE-T & motion-space adversarial & 70 \\
ACE-I & motion-space adversarial & 60 \\
MoReFlow-T & flow matching & 70 \\
MoReFlow-I & flow matching & 60 \\
AL-Flow & label-conditional flow, no source motion & 60 \\
AL-Flow-Src & label-conditional flow + source motion + skeleton id & 60 \\
AL-Flow-Src-G & label-conditional flow + source motion + skeleton graph & 60 \\
Motion2Motion-Direct & patch retrieval and blending (training-free) & 60 \\
Motion2Motion-BVH & patch retrieval and blending (training-free) & 60 \\
DPG-SB-v3 & latent Schr\"odinger bridge & 60 \\
ANCHOR & retrieval comparator & 60 \\
random-same-cluster & trivial baseline & -- \\
random-same-exact-action & trivial baseline & -- \\
\midrule
\multicolumn{3}{p{0.94\linewidth}}{\footnotesize The AnyTop row uses the source-conditioned extension required by SIF; the original generator is unconditional with respect to a source clip. AL-Flow denotes the label-conditional flow family. ANCHOR is a deterministic retrieval comparator, not a retargeting solution. DPG-SB-v3 is evaluated in its latent-bridge form without decoded-motion penalties (see Appendix~\ref{app:dpgsb_status}).} \\
\bottomrule
\end{tabular}
}
\end{table}

\paragraph{Source-conditioning adaptations.}
Two rows require additional care because their original public form does not
match the SIF input contract. AnyTop is unconditional with respect to a source
clip, so the evaluated row adds a self-supervised source-motion encoder with
four learned motion queries and injects the resulting latent through
cross-attention into the target-skeleton decoder; training remains unpaired
self-reconstruction on Truebones. Motion2Motion is a sparse-correspondence
retrieval-and-blending procedure rather than a learned generator, so we report
both Motion2Motion-Direct, a reimplementation on the 13-channel representation,
and Motion2Motion-BVH, the official BVH implementation, using auto-heuristic
correspondences and excluding evaluator candidate clips from the target example
pool. These details affect only how each row receives source information; SIF is
computed by the same source-target-action protocol across rows.

\paragraph{Relation to the theory.}
Table~\ref{tab:theory_classes} places each evaluated method relative to the two
formal results. Theorem~\ref{thm:gauge} covers objectives that only match each
skeleton's motion distribution through a shared Gaussian latent, without paired
examples. Proposition~\ref{prop:cm} covers squared-error training against a
randomly chosen same-action target. No evaluated method instantiates either
class exactly, and the placement describes the structure of each objective
rather than a predicted SIF value.

\begin{table}[t]
  \centering
  \caption{Relation of each evaluated method to the formal results.}
  \label{tab:theory_classes}
  \small
  \begin{tabular}{p{0.36\linewidth}p{0.58\linewidth}}
\toprule
Method & Relation to the formal results \\
\midrule
AnyTop & closest to Theorem~\ref{thm:gauge}; adds rank and view-consistency terms \\
AL-Flow, AL-Flow-Src, AL-Flow-Src-G & closest to Proposition~\ref{prop:cm}; trained with flow matching rather than squared error \\
MoReFlow-T, MoReFlow-I & partly within Theorem~\ref{thm:gauge}; adds pairing by matched motion descriptors and a descriptor loss \\
DPG-SB-v3 & partly within Theorem~\ref{thm:gauge}; adds adversarial and cycle terms in the latent space \\
ACE-T, ACE-I & outside both classes; adversarial and source-feature losses \\
ANCHOR, random-same-cluster, random-same-exact-action & retrieval rules with no trained objective (Corollary~\ref{cor:retr-gauge}) \\
Motion2Motion-Direct, Motion2Motion-BVH & patch retrieval and blending with no trained objective \\
\bottomrule
\end{tabular}

\end{table}

\section{Proofs and Formal Details}
\label{app:proofs}

\subsection{Proof of Theorem~\ref{thm:gauge}}
\label{app:gauge_proof}

\begin{proof}
For each skeleton $s$ and motion $x\in\mathcal{X}_s$, the reparameterized
encoder-decoder pair satisfies
\[
  D'_s(E'_s(x))
  =D_s\!\left(g_s^{-1}(g_s(E_s(x)))\right)
  =D_s(E_s(x)),
\]
because $g_s$ is a measurable bijection. The per-skeleton reconstruction term
$L_s(D_s\circ E_s;P_s)$ is therefore unchanged pointwise.

The latent marginal induced by the reparameterized encoder is
$(E'_s)_\#P_s=(g_s\circ E_s)_\#P_s=(g_s)_\#\mu_s$. Since
$g\in G_\Omega(\mu)$, the marginal regularizer also remains unchanged:
\[
  \Omega((g_1)_\#\mu_1,\ldots,(g_K)_\#\mu_K)
  =\Omega(\mu_1,\ldots,\mu_K).
\]
Combining the unchanged reconstruction terms with the unchanged regularizer
gives $L(E',D')=L(E,D)$.

For the retargeting map, define
\[
  A=\{z: D_b(g_b^{-1}(g_a(z)))\neq D_b(z)\}.
\]
By the decoder non-degeneracy hypothesis in Theorem~\ref{thm:gauge}, this set
has positive $\mu_a$-measure for some pair $a,b$. Let
$B=E_a^{-1}(A)\subseteq\mathcal{X}_a$. Since $(E_a)_\#P_a=\mu_a$,
$P_a(B)=\mu_a(A)>0$. For every $x\in B$,
\[
  T'_{a\to b}(x)
  =D_b(g_b^{-1}(g_a(E_a(x))))
  \neq D_b(E_a(x))
  =T_{a\to b}(x),
\]
so the two retargeting maps differ on a set of positive $P_a$-measure. The
objective therefore determines the map only up to the relative gauge
$g_b^{-1}\circ g_a$.
\end{proof}

\begin{corollary}[Orthogonal-gauge lower bound under exact Gaussian alignment]
\label{cor:gauge_size}
Suppose $\Omega(\mu_1,\ldots,\mu_K)=
\sum_s\mathrm{KL}(\mu_s\,\|\,\mathcal{N}(0,I_d))$, or any
rotation-invariant function of the marginals, and each
$\mu_s=\mathcal{N}(0,I_d)$. Then $G_\Omega$ contains $O(d)^K$, the product of
orthogonal groups. The full product has $K\,d(d-1)/2$ continuous parameters.
After quotienting the diagonal $O(d)$ subgroup, one common orthogonal factor
applied to all skeletons, the relative gauge retains $(K-1)d(d-1)/2$
continuous degrees of freedom.
\end{corollary}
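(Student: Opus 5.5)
The plan is to show that every element of $O(d)^K$ lies in $G_\Omega(\mu)$ as defined in Eq.~\eqref{eq:gauge_group}, and then to count dimensions. Fix $g=(Q_1,\ldots,Q_K)$ with each $Q_s\in O(d)$, acting linearly as $z\mapsto Q_s z$. Each $Q_s$ is a linear bijection with linear inverse $Q_s^{\top}$, so it is a measurable invertible transformation with measurable inverse, as the gauge definition requires.

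The key computation is the pushforward. If $Z\sim\mathcal{N}(0,I_d)$, then $Q_sZ$ is Gaussian with mean $0$ and covariance $Q_sQ_s^{\top}=I_d$. Hence $(Q_s)_\#\mu_s=\mu_s$ for every $s$ whenever $\mu_s=\mathcal{N}(0,I_d)$.

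\begin{itemize}
\item For the KL regularizer, the invariance is immediate: each summand $\mathrm{KL}(\mu_s\,\|\,\mathcal{N}(0,I_d))$ is evaluated at an unchanged argument. (At this configuration each term is also $0$.)
\item For a general rotation-invariant $\Omega$, I read the hypothesis as $\Omega((Q_1)_\#\mu_1,\ldots,(Q_K)_\#\mu_K)=\Omega(\mu_1,\ldots,\mu_K)$ for all $Q\in O(d)^K$. This gives $g\in G_\Omega(\mu)$ directly.
\end{itemize}

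Strictly, the Gaussian hypothesis is only needed to cover arbitrary $\Omega$ that depend on the marginals. This is because the marginals themselves are fixed pointwise. I will note this so that the two cases of the statement are handled by one argument. Thus $O(d)^K\subseteq G_\Omega(\mu)$.

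For the count, $O(d)$ is a Lie group of dimension $d(d-1)/2$, since its Lie algebra is the space of skew-symmetric $d\times d$ matrices. The product $O(d)^K$ therefore has dimension $Kd(d-1)/2$. The diagonal subgroup $\Delta=\{(Q,\ldots,Q)\}\cong O(d)$ is closed, with dimension $d(d-1)/2$.

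To make "relative gauge" precise, I will use the map $(Q_1,\ldots,Q_K)\mapsto(Q_1^{\top}Q_2,\ldots,Q_1^{\top}Q_K)\in O(d)^{K-1}$. These are exactly the relative transformations $g_b^{-1}\circ g_a$ of Theorem~\ref{thm:gauge}, written relative to skeleton $1$. This map is invariant under the diagonal action and is surjective. Its fibres are exactly the $\Delta$-cosets, so it identifies the quotient $O(d)^K/\Delta$ with $O(d)^{K-1}$. That quotient has dimension $(K-1)d(d-1)/2$. With $K=70$ and $d=32$ this gives $69\cdot 496=34{,}224$, matching the figure quoted in Section~\ref{sec:structural}.

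No step is technically hard. The main obstacle is interpretive: $O(d)$ is not a normal subgroup issue here, but "quotienting the diagonal" is a left-coset space rather than a group quotient. I would therefore phrase the degrees-of-freedom claim via the explicit relative-coordinate map above, rather than rely on a quotient group. I would also remark that this is only a lower bound on $G_\Omega$: other measure-preserving maps of $\mathcal{N}(0,I_d)$ also lie in it. It also does not assert that distinct relative gauges give distinct retargeting maps; that requires the decoder non-degeneracy condition of Theorem~\ref{thm:gauge}.
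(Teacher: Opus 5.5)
Your proposal is correct and follows the paper's proof: the pushforward computation $Q_sZ\sim\mathcal{N}(0,Q_sQ_s^{\top})=\mathcal{N}(0,I_d)$, the Lie-group dimension count $d(d-1)/2$ per factor, and removal of the diagonal factor because a common rotation cancels in $g_b^{-1}\circ g_a$, which your map $(Q_1,\ldots,Q_K)\mapsto(Q_1^{\top}Q_2,\ldots,Q_1^{\top}Q_K)$ makes explicit. Your refinements are sound: Gaussian marginals alone make any $\Omega$ invariant, and the quotient is a coset space because the diagonal is not normal. One small terminology slip: the common factor acts by $g_s\mapsto Rg_s$, so its orbits are the cosets $\Delta g$, which are right cosets in the usual convention.
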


\begin{proof}
If $R\in O(d)$ and $z\sim\mathcal{N}(0,I_d)$, then
$Rz\sim\mathcal{N}(0,RR^\top)=\mathcal{N}(0,I_d)$. Thus every per-skeleton
choice $(R_1,\ldots,R_K)\in O(d)^K$ preserves all Gaussian-aligned marginals
and belongs to $G_\Omega$ whenever $\Omega$ is rotation-invariant. Since
$O(d)$ has dimension $d(d-1)/2$, the product group has $K\,d(d-1)/2$
continuous parameters. Quotienting the diagonal subgroup removes one $O(d)$
factor, because a common orthogonal transformation applied to all skeletons
cancels in the relative transformation $g_b^{-1}\circ g_a$, leaving
$(K-1)d(d-1)/2$ relative-gauge parameters. For the setting used in the paper,
$K=70$ and $d=32$, giving $69\cdot32\cdot31/2=34{,}224$ relative-gauge
degrees of freedom in this orthogonal subgroup. This is a lower bound on
relative-gauge degrees of freedom, not by itself a count of distinct decoded
retargeting maps; that latter count also depends on decoder non-degeneracy.
\end{proof}

\subsection{Proof of Proposition~\ref{prop:cm}}
\label{app:cm_proof}

\begin{proof}
Condition on a source clip $x_a$ and target skeleton $b$, and write
$c=\mathrm{action}(x_a)$. Under the proposition's sampling rule, the target
$x_b$ is drawn uniformly from the finite cell
$B(b,c)=\{x:(\mathrm{skel}(x),\mathrm{action}(x))=(b,c)\}$, independently of
the source instance once the action cell is fixed. For any proposed prediction
$u=f(x_a,b)$,
the sampling rule gives $x_b \perp x_a \mid (b,c)$, hence
$\mathbb{E}[x_b\mid x_a,b]=\mathbb{E}[x_b\mid b,c]$. Therefore
\[
  \mathbb{E}\|u-x_b\|_2^2
  =
  \|u-\mathbb{E}[x_b\mid b,c]\|_2^2
  +\mathbb{E}\|x_b-\mathbb{E}[x_b\mid b,c]\|_2^2 .
\]
The second term is independent of $u$, so the finite-cell minimizer is unique;
in the population version the same equality gives the almost surely unique
conditional-mean predictor $u=\mathbb{E}[x_b\mid b,c]$. In the empirical
finite-cell case, this conditional expectation is
\[
  |B(b,c)|^{-1}\sum_{x_b'\in B(b,c)}x_b' .
\]
It depends on the target skeleton and source action, but not on the specific
source clip. The Bayes predictor is therefore the target-action cell mean.
\end{proof}

\subsection{Proof of Corollary~\ref{cor:retr-gauge}}
\label{app:retr_gauge_proof}

\begin{proof}
The retrieval rule $r$ depends on the source motion through a feature
$\phi(x_a)$ that is computed directly from motion space and is invariant to the
latent-gauge transformations in $G_\Omega$. Replacing $E_a$ by $g_a\circ E_a$
therefore changes no argument supplied to $r$, and the returned library clip is
unchanged. Since $r(x_a,b)$ is a single existing target-skeleton clip rather
than an average over target-cell motions, the conditional-mean operator in
Proposition~\ref{prop:cm} is not the relevant limiting operation.
\end{proof}

\section{Decoder Non-Degeneracy Boundary}
\label{app:decoder_nondegeneracy}

The gauge theorem requires a decoder that can make at least part of the gauge
orbit observable. In a small Kullback-Leibler (KL)-regularized VAE, random
gauge rotations and matched-magnitude noise remain near the noise floor in
decoded effects, with a gauge-to-noise ratio of 1.11. This is the boundary case
in which the decoder nearly absorbs the orbit.

A separate AnyTop transformer diffusion hidden-state sweep tests decoder
observability, not the $d=32$ gauge-count model. Across hook layers, reverse
timesteps, source motions, and target skeletons, all 72 tested cells, formed by
four skeletons, two source clips, three hook layers, and three reverse
timesteps, give rotation-induced output divergence above the matched-noise
control, with a minimum gauge-to-noise ratio of 1.71. This check does not
instantiate the 34,224-dimensional lower-bound calculation. It shows only that
decoder observability can hold in a real transformer diffusion motion model,
leaving gauge-element selection as the structural problem.
Figure~\ref{fig:p3_perturbation} and Table~\ref{tab:p3_ratios} report the same
rotation-versus-noise check for the evaluated methods.

\begin{figure}[t]
  \centering
  \includegraphics[width=0.78\linewidth]{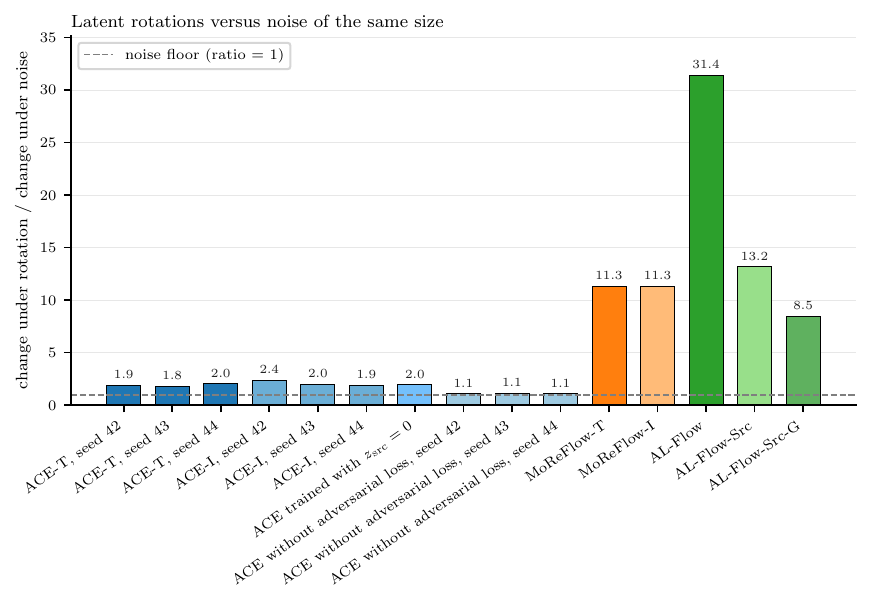}
  \caption{Decoder non-degeneracy check. Rotation-induced changes are compared
  with matched-magnitude noise in the latent space; ratios above one indicate
  that the decoder distinguishes at least some directions on the gauge orbit.}
  \label{fig:p3_perturbation}
\end{figure}

\begin{table}[t]
  \centering
  \caption{Rotation versus noise perturbations on the 130 queries of the
  37-triple intersection.
  The ACE ablation with the motion-space adversarial loss removed is the
  boundary case near the noise floor, while the other evaluated methods make the
  orbit visible after decoding.}
  \label{tab:p3_ratios}
  \resizebox{\linewidth}{!}{\begin{tabular}{llrrrl}
\toprule
Method & Seed & Change under rotation & Change under noise & Ratio & Verdict \\
\midrule
ACE-T (3 seeds) & 42 & 0.240 & 0.213 & 1.93 & non-degenerate \\
 & 43 & 0.219 & 0.183 & 1.78 & non-degenerate \\
 & 44 & 0.209 & 0.171 & 2.04 & non-degenerate \\
\midrule
ACE-I (3 seeds) & 42 & 0.121 & 0.089 & 2.35 & non-degenerate \\
 & 43 & 0.128 & 0.071 & 1.98 & non-degenerate \\
 & 44 & 0.141 & 0.087 & 1.92 & non-degenerate \\
\midrule
ACE trained with $z_{\text{src}}=0$ & 42 & 0.117 & 0.083 & 1.95 & non-degenerate \\
\midrule
ACE without adversarial loss (3 seeds) & 42 & 1.113 & 1.154 & \emph{1.08} & \emph{at noise floor} \\
 & 43 & 0.810 & 0.849 & \emph{1.12} & \emph{at noise floor} \\
 & 44 & 0.699 & 0.734 & \emph{1.10} & \emph{at noise floor} \\
\midrule
MoReFlow-T & -- & 0.113 & 0.017 & \textbf{11.33} & non-degenerate (strong) \\
\midrule
MoReFlow-I & -- & 0.107 & 0.016 & \textbf{11.32} & non-degenerate (strong) \\
\midrule
AL-Flow & -- & 0.080 & 0.008 & \textbf{31.40} & non-degenerate (strong) \\
\midrule
AL-Flow-Src & -- & 0.070 & 0.009 & \textbf{13.18} & non-degenerate (strong) \\
\midrule
AL-Flow-Src-G & -- & 0.050 & 0.008 & \textbf{8.46} & non-degenerate (strong) \\
\midrule
\multicolumn{6}{p{0.96\linewidth}}{\footnotesize Each row reports the mean over the 130 queries of the 37-triple intersection, with 20 random $O(d)$ rotations and 20 matched-magnitude noise samples per query; the ratio is the mean of the per-query rotation-to-noise ratios. ACE without the adversarial loss is statistically indistinguishable from the noise floor (Wilcoxon $p>0.4$ for all three seeds); every other method exceeds the noise floor at $p<10^{-12}$.} \\
\bottomrule
\end{tabular}
}
\end{table}

The cross-seed latent checks in Tables~\ref{tab:gauge_procrustes}
and~\ref{tab:effective_rank} give a complementary view. Orthogonal Procrustes
alignment \citep{Goodall1991Procrustes} explains a substantial fraction of
cross-seed latent variation, while effective rank separates rich latent
variation from low-dimensional collapse. These checks defend the boundary of
the theory: decoder observability can hold even when the objective does not
select the relative gauge.
Figures~\ref{fig:gauge_procrustes}, \ref{fig:effective_rank}, and
\ref{fig:p3_vs_sif_decoupling} visualize the same distinction between latent
alignment, representation richness, and source-instance fidelity.

\begin{table}[t]
  \centering
  \caption{Orthogonal Procrustes alignment across seeds. The table reports how
  much latent variation can be explained by an $O(d)$ alignment across the
  SIF-intersection queries.}
  \label{tab:gauge_procrustes}
  \begin{tabular}{llrr}
\toprule
Group & Seed pair & Share explained by rotation & Cosine after rotation \\
\midrule
ACE-T & 42 vs 43 & 0.522 & 0.893 \\
ACE-T & 42 vs 44 & 0.703 & 0.900 \\
ACE-T & 43 vs 44 & 0.585 & 0.905 \\
\midrule
ACE-I & 42 vs 43 & 0.689 & 0.935 \\
ACE-I & 42 vs 44 & 0.712 & 0.928 \\
ACE-I & 43 vs 44 & 0.775 & 0.941 \\
\midrule
ACE without adversarial loss & 42 vs 43 & 0.482 & 0.866 \\
ACE without adversarial loss & 42 vs 44 & 0.428 & 0.847 \\
ACE without adversarial loss & 43 vs 44 & 0.753 & 0.881 \\
\midrule
\multicolumn{4}{p{0.78\linewidth}}{\footnotesize \emph{Share explained by rotation} is the fraction of cross-seed latent variance accounted for by the best $O(d)$ rotation (orthogonal Procrustes). \emph{Cosine after rotation} is the mean per-row cosine similarity after that rotation. Both are computed on per-target-skeleton latents across the 130 queries of the 37-triple set, aggregated over 21 target skeletons.} \\
\bottomrule
\end{tabular}

\end{table}

\begin{figure}[t]
  \centering
  \includegraphics[width=0.78\linewidth]{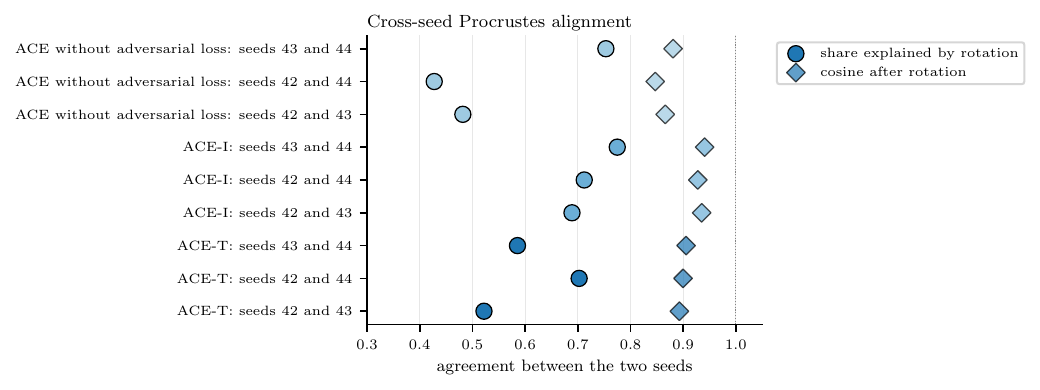}
  \caption{Cross-seed Procrustes alignment. High explained fractions indicate
  that latent spaces differ partly by approximately orthogonal transformations,
  matching the relative-gauge mechanism in Theorem~\ref{thm:gauge}.}
  \label{fig:gauge_procrustes}
\end{figure}

\begin{table}[t]
  \centering
  \caption{Effective-rank diagnostic on latent distributions. Higher effective
  rank indicates richer latent variation, but rank alone does not imply that the
  source-conditioned map has been identified.}
  \label{tab:effective_rank}
  {\small
\begin{tabular}{lrrr}
\toprule
Method & Sample shape & Spectral flatness & Effective rank $/$ 256 \\
\midrule
ACE without adversarial loss (seed 42) & 3{,}376 $\times$ 256 & 0.0001 & 1.29 \\
ACE-T (seed 42) & 3{,}376 $\times$ 256 & 0.0001 & 2.13 \\
ACE-I (seed 42) & 3{,}376 $\times$ 256 & 0.0019 & 3.00 \\
AL-Flow & 1{,}040 $\times$ 256 & 0.0066 & 6.80 \\
MoReFlow-I & 3{,}823 $\times$ 256 & 0.0593 & 12.84 \\
MoReFlow-T & 3{,}823 $\times$ 256 & 0.0631 & 14.99 \\
AnyTop encoder $z$ & 13{,}000 $\times$ 256 & 0.1614 & \textbf{57.73} \\
\midrule
\multicolumn{4}{p{0.85\linewidth}}{\footnotesize Effective rank computed on the per-method latent distribution over the 130 queries of the 37-triple intersection. AnyTop's encoder hidden state is an order of magnitude richer than the compressed latents shared by the MoReFlow-based models.} \\
\bottomrule
\end{tabular}
}

\end{table}

\begin{figure}[t]
  \centering
  \includegraphics[width=0.78\linewidth]{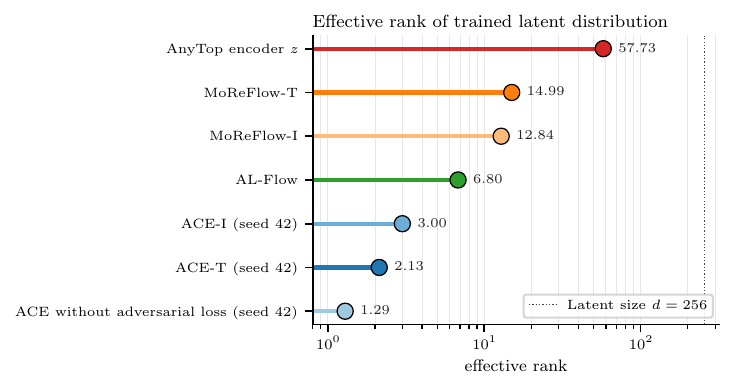}
  \caption{Effective rank across latent representations. The diagnostic
  separates representation richness from source-instance fidelity.}
  \label{fig:effective_rank}
\end{figure}

\begin{figure}[t]
  \centering
  \includegraphics[width=0.78\linewidth]{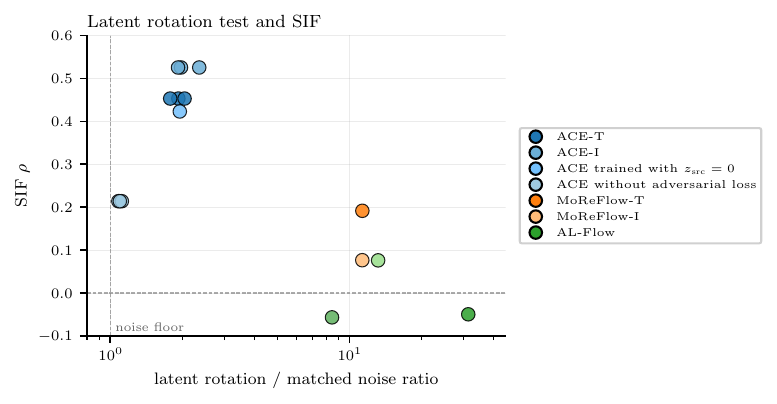}
  \caption{Decoder non-degeneracy and SIF are distinct. A method may expose
  latent perturbations after decoding while still lying near the source-blind
  floor under SIF.}
  \label{fig:p3_vs_sif_decoupling}
\end{figure}

\section{Source-Instance Fidelity Robustness}
\label{app:sif_robustness}

SIF is a diagnostic for whether source-side geometry remains visible after the
target skeleton and action are fixed. This appendix gives the numbers behind
Figure~\ref{fig:sif_max_support}, the original evaluation on 49 triples, and
checks on dependence, sample size, and generation noise.

Table~\ref{tab:sif_main} lists every method on the 1,891 triples of the main
evaluation. Each source clip appears in many triples there, so the shuffle test
moves all triples from the same source skeleton together, and the intervals
resample whole source skeletons. Treating the triples as independent would
overstate the precision. The last column applies the rule of
Section~\ref{sec:sif}.

\begin{table}[t]
  \centering
  \caption{SIF on the main evaluation (1,891 triples). Raw and length-controlled
  SIF with 95\% intervals that resample whole source skeletons; $p$ is the
  one-sided shuffle-test $p$-value; variation is the median ratio of output
  spread to source spread under raw scoring; $n$ is the number of triples a
  method supports. The last row is the ACE ablation of
  Appendix~\ref{app:ace_controls}, not an evaluated method.}
  \label{tab:sif_main}
  \resizebox{\linewidth}{!}{\begin{tabular}{lrlrlrrl}
\toprule
Method & $n$ & Raw SIF [95\% CI] & $p$ & Length-controlled SIF [95\% CI] & $p$ & Variation & Reading \\
\midrule
ACE-I & 1{,}891 & $+$0.420 [$+$0.31, $+$0.53] & $<$0.001 & $+$0.248 [$+$0.13, $+$0.38] & $<$0.001 & 0.03 & above floor \\
MoReFlow-T & 1{,}891 & $+$0.255 [$+$0.19, $+$0.33] & $<$0.001 & $+$0.246 [$+$0.17, $+$0.32] & $<$0.001 & 0.20 & above floor \\
MoReFlow-I & 1{,}891 & $+$0.257 [$+$0.19, $+$0.33] & $<$0.001 & $+$0.244 [$+$0.18, $+$0.32] & $<$0.001 & 0.16 & above floor \\
ACE-T & 1{,}891 & $+$0.336 [$+$0.24, $+$0.44] & $<$0.001 & $+$0.180 [$+$0.05, $+$0.31] & 0.009 & 0.02 & above floor \\
Motion2Motion-Direct & 1{,}891 & $+$0.143 [$+$0.08, $+$0.22] & $<$0.001 & $+$0.129 [$+$0.06, $+$0.20] & $<$0.001 & 1.52 & above floor \\
Motion2Motion-BVH & 1{,}872 & $+$0.032 [0.000, $+$0.07] & 0.024 & $+$0.032 [$-$0.003, $+$0.07] & 0.016 & 0.79 & near floor \\
random-same-exact-action & 1{,}864 & $+$0.002 [$-$0.01, $+$0.02] & 0.413 & $+$0.018 [$-$0.03, $+$0.07] & 0.293 & 0.13 & at floor \\
AL-Flow & 1{,}683 & $+$0.020 [$-$0.02, $+$0.06] & 0.114 & $+$0.017 [$-$0.02, $+$0.05] & 0.135 & 0.03 & at floor \\
random-same-cluster & 1{,}883 & $-$0.023 [$-$0.05, $-$0.002] & 0.958 & $+$0.001 [$-$0.03, $+$0.03] & 0.475 & 0.44 & at floor \\
DPG-SB-v3 & 1{,}891 & $+$0.016 [$-$0.01, $+$0.04] & 0.158 & $-$0.008 [$-$0.04, $+$0.02] & 0.700 & 0.22 & at floor \\
ANCHOR & 1{,}891 & $+$0.006 [$-$0.03, $+$0.05] & 0.387 & $-$0.010 [$-$0.05, $+$0.03] & 0.683 & 0.00 & at floor \\
AL-Flow-Src & 1{,}413 & $-$0.012 [$-$0.04, $+$0.02] & 0.744 & $-$0.026 [$-$0.06, $+$0.01] & 0.928 & 0.07 & at floor \\
AL-Flow-Src-G & 1{,}891 & $-$0.024 [$-$0.06, $+$0.01] & 0.931 & $-$0.029 [$-$0.06, $+$0.004] & 0.966 & 0.03 & at floor \\
AnyTop & 1{,}891 & $-$0.048 [$-$0.12, $+$0.03] & 0.915 & $-$0.061 [$-$0.14, $+$0.01] & 0.954 & 1.50 & at floor \\
\midrule
ACE trained with $z_{\text{src}}=0$ (real $z_{\text{src}}$ at generation) & 1{,}891 & $+$0.371 [$+$0.25, $+$0.49] & $<$0.001 & $+$0.198 [$+$0.07, $+$0.34] & 0.006 & 0.02 & ablation \\
\bottomrule
\end{tabular}
}
\end{table}

Table~\ref{tab:sif_original} reports the original evaluation, which pairs each of
the 49 source groups with a single target, so no source clip appears in two
triples. The pattern matches the main evaluation, with one difference:
Motion2Motion-Direct sits at the floor here and rises modestly above it only on
the larger set. The random exact-action reference is just significant in raw
scoring ($p=0.049$) and returns to the floor under length control. Its raw
signal comes from small candidate pools and clips cropped to their source's
length. Table~\ref{tab:clustered_ci} and Figure~\ref{fig:clustered_ci_robustness}
give raw-scoring intervals on the 37 triples that all methods support,
clustered by source skeleton, target skeleton, or action.

\begin{table}[t]
  \centering
  \caption{SIF on the original evaluation (49 triples, one target per source
  group). Columns as in Table~\ref{tab:sif_main}; here no source clip is shared
  between triples.}
  \label{tab:sif_original}
  \small
  \begin{tabular}{lrrrrrr}
\toprule
Method & $n$ & Raw SIF & $p$ & Length-controlled SIF & $p$ & Variation \\
\midrule
MoReFlow-T & 49 & $+$0.203 & 0.011 & $+$0.220 & 0.005 & 0.25 \\
ACE-T & 49 & $+$0.385 & $<$0.001 & $+$0.205 & 0.013 & 0.02 \\
ACE-I & 49 & $+$0.484 & $<$0.001 & $+$0.178 & 0.022 & 0.02 \\
MoReFlow-I & 49 & $+$0.137 & 0.061 & $+$0.160 & 0.038 & 0.28 \\
Motion2Motion-Direct & 49 & $+$0.053 & 0.290 & $+$0.091 & 0.146 & 1.69 \\
ANCHOR & 49 & $+$0.014 & 0.380 & $+$0.035 & 0.226 & 0.00 \\
random-same-exact-action & 48 & $+$0.096 & 0.049 & $+$0.032 & 0.338 & 0.00 \\
random-same-cluster & 48 & $+$0.073 & 0.170 & $+$0.008 & 0.464 & 0.25 \\
AL-Flow-Src & 38 & $+$0.070 & 0.258 & $-$0.009 & 0.534 & 0.11 \\
DPG-SB-v3 & 49 & $-$0.058 & 0.756 & $-$0.031 & 0.644 & 0.22 \\
AL-Flow & 45 & $-$0.066 & 0.769 & $-$0.044 & 0.682 & 0.04 \\
AnyTop & 49 & $-$0.033 & 0.639 & $-$0.065 & 0.773 & 1.34 \\
AL-Flow-Src-G & 49 & $-$0.028 & 0.629 & $-$0.104 & 0.882 & 0.08 \\
Motion2Motion-BVH & 49 & $-$0.085 & 0.823 & $-$0.171 & 0.982 & 0.93 \\
\bottomrule
\end{tabular}

\end{table}

\begin{table}[t]
  \centering
  \caption{Clustered bootstrap intervals for SIF. Clustering by source
  skeleton, target skeleton, or action tests whether the positive conclusions
  are driven by a narrow subset of triples.}
  \label{tab:clustered_ci}
  {\scriptsize
\begin{tabular}{lrrrr}
\toprule
Method & each triple alone & by source skeleton & by target skeleton & by action \\
\midrule
ACE-I & \textbf{0.526\,[0.351,\,0.681]} & \textbf{0.526\,[0.323,\,0.709]} & \textbf{0.526\,[0.365,\,0.671]} & \textbf{0.526\,[0.242,\,0.629]} \\
ACE-T & \textbf{0.453\,[0.293,\,0.602]} & \textbf{0.453\,[0.284,\,0.618]} & \textbf{0.453\,[0.274,\,0.605]} & 0.453\,[-0.009,\,0.619] \\
MoReFlow-T & 0.192\,[-0.003,\,0.380] & 0.192\,[-0.023,\,0.386] & 0.192\,[-0.031,\,0.405] & 0.192\,[-0.039,\,0.357] \\
random-same-exact-action & \textbf{0.185\,[0.046,\,0.326]} & \textbf{0.185\,[0.069,\,0.304]} & \textbf{0.185\,[0.027,\,0.354]} & 0.185\,[0.000,\,0.272] \\
Motion2Motion-Direct & 0.131\,[-0.061,\,0.317] & 0.131\,[-0.098,\,0.385] & 0.131\,[-0.075,\,0.307] & 0.131\,[-0.720,\,0.265] \\
random-same-cluster & 0.104\,[-0.095,\,0.298] & 0.104\,[-0.057,\,0.279] & 0.104\,[-0.115,\,0.315] & 0.104\,[-0.272,\,0.189] \\
AL-Flow-Src & 0.076\,[-0.108,\,0.264] & 0.076\,[-0.109,\,0.269] & 0.076\,[-0.112,\,0.248] & 0.076\,[-0.222,\,0.169] \\
MoReFlow-I & 0.076\,[-0.132,\,0.279] & 0.076\,[-0.132,\,0.271] & 0.076\,[-0.167,\,0.319] & 0.076\,[-0.108,\,0.474] \\
ANCHOR & 0.040\,[-0.068,\,0.157] & 0.040\,[-0.078,\,0.176] & 0.040\,[-0.068,\,0.167] & 0.040\,[0.000,\,0.046] \\
AnyTop & -0.010\,[-0.185,\,0.166] & -0.010\,[-0.191,\,0.166] & -0.010\,[-0.207,\,0.185] & -0.010\,[-0.253,\,0.142] \\
DPG-SB-v3 & -0.019\,[-0.215,\,0.178] & -0.019\,[-0.208,\,0.188] & -0.019\,[-0.244,\,0.228] & -0.019\,[-0.487,\,0.045] \\
AL-Flow & -0.050\,[-0.231,\,0.137] & -0.050\,[-0.187,\,0.106] & -0.050\,[-0.259,\,0.161] & -0.050\,[-0.239,\,0.183] \\
AL-Flow-Src-G & -0.057\,[-0.239,\,0.130] & -0.057\,[-0.207,\,0.087] & -0.057\,[-0.261,\,0.128] & -0.057\,[-0.523,\,0.022] \\
Motion2Motion-BVH & -0.067\,[-0.237,\,0.110] & -0.067\,[-0.247,\,0.116] & -0.067\,[-0.221,\,0.089] & -0.067\,[-0.320,\,0.142] \\
\midrule
\multicolumn{5}{p{0.96\linewidth}}{\footnotesize Bold lower CI excludes zero. Bootstrap $N=10{,}000$, seed $=42$. ACE-I is the only method whose lower CI excludes zero under all four resampling schemes; ACE-T and random-same-exact-action survive resampling by source skeleton and by target skeleton, but resampling by action drops their lower CI to or below zero because the 37-triple intersection is concentrated on a few actions.} \\
\bottomrule
\end{tabular}
}

\end{table}

\begin{figure}[t]
  \centering
  \includegraphics[width=0.80\linewidth]{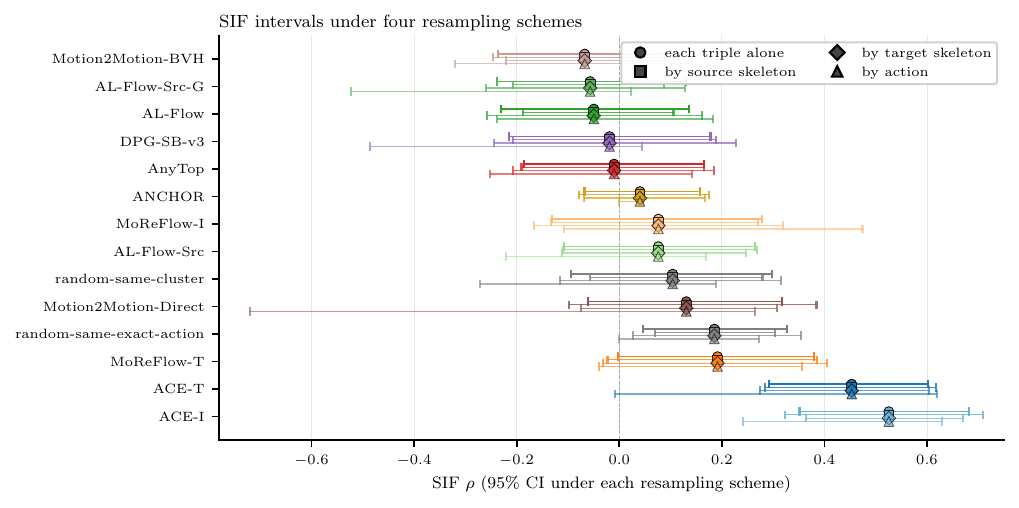}
  \caption{Clustered confidence intervals for SIF. The figure visualizes the
  dependence-aware intervals reported in Table~\ref{tab:clustered_ci}.}
  \label{fig:clustered_ci_robustness}
\end{figure}

Table~\ref{tab:paired_ace_moreflow} compares ACE with MoReFlow on identical
triples. In raw scoring ACE-I scores higher than MoReFlow-I. Under length
control, neither ACE model differs from MoReFlow in either evaluation.

\begin{table}[t]
  \centering
  \caption{Paired SIF differences on identical triples, with 95\% intervals that
  resample whole source skeletons and two-sided sign-flip $p$-values that flip
  all triples of a source skeleton together.}
  \label{tab:paired_ace_moreflow}
  \resizebox{\linewidth}{!}{\begin{tabular}{llll}
\toprule
Evaluation & Scoring & ACE-T minus MoReFlow-T & ACE-I minus MoReFlow-I \\
\midrule
1{,}891 triples & raw & $+$0.081 [$-$0.02, $+$0.19], $p$=0.139 & $+$0.163 [$+$0.05, $+$0.28], $p$=0.008 \\
1{,}891 triples & length-controlled & $-$0.066 [$-$0.19, $+$0.06], $p$=0.325 & $+$0.004 [$-$0.10, $+$0.11], $p$=0.935 \\
49 triples & raw & $+$0.182 [$-$0.06, $+$0.44], $p$=0.156 & $+$0.347 [$+$0.11, $+$0.60], $p$=0.009 \\
49 triples & length-controlled & $-$0.015 [$-$0.21, $+$0.19], $p$=0.888 & $+$0.018 [$-$0.21, $+$0.24], $p$=0.880 \\
\bottomrule
\end{tabular}
}
\end{table}

A single triple with three source clips gives a noisy estimate. To measure this,
we took the triples with five source clips and recomputed SIF on every
three-clip subset. Table~\ref{tab:sif_stability} shows that the value for one
triple moves with a standard deviation of about 0.6 across subsets. The average
over randomly drawn sets of triples moves by about 0.12 for 25 triples and by
about 0.02 for 500. Because the 1,891 triples reuse the same 171 source clips,
the uncertainty reported for the main evaluation comes from the intervals that
resample whole source skeletons (Table~\ref{tab:sif_main}).

\begin{table}[t]
  \centering
  \caption{Noise of a single triple and stability of the average. The first
  column is the standard deviation of SIF across three-clip subsets of
  five-clip triples; the other columns are standard deviations of the average
  over $k$ randomly drawn triples.}
  \label{tab:sif_stability}
  \small
  \begin{tabular}{lrrrrr}
\toprule
 & One triple & \multicolumn{4}{c}{Average over $k$ triples} \\
\cmidrule(lr){3-6}
Method & (three clips) & $k=25$ & $k=100$ & $k=500$ & $k=1{,}000$ \\
\midrule
AnyTop & 0.60 & 0.122 & 0.060 & 0.021 & 0.012 \\
ACE-T & 0.59 & 0.124 & 0.060 & 0.022 & 0.013 \\
ACE-I & 0.56 & 0.110 & 0.054 & 0.019 & 0.012 \\
MoReFlow-T & 0.57 & 0.123 & 0.059 & 0.023 & 0.013 \\
random-same-exact-action & 0.31 & 0.089 & 0.046 & 0.017 & 0.010 \\
\bottomrule
\end{tabular}

\end{table}

Methods that sample noise raise a further question: whether shared noise can
create an apparent signal. Table~\ref{tab:noise_control} regenerates every
stochastic method with three independent noise seeds, and once with a single
noise sample shared within each triple. With independent noise, every
stochastic method stays within 0.05 of zero on average. With shared noise,
AnyTop and the source-fed AL-Flow variants rise to about $+0.2$. The outputs then
differ only through their inputs, such as length, and SIF picks up that
difference without any source content. All results in the paper use independent
noise. MoReFlow is deterministic, so its score does not depend on noise.

\begin{table}[t]
  \centering
  \caption{Generation noise control on the original 49 triples. Independent
  noise uses three seeds; shared noise reuses one noise sample within each
  triple.}
  \label{tab:noise_control}
  \small
  \begin{tabular}{lrrrrr}
\toprule
 & \multicolumn{3}{c}{Independent noise, three seeds} & & \\
\cmidrule(lr){2-4}
Method & seed 1 & seed 2 & seed 3 & Mean $\pm$ sd & Shared noise \\
\midrule
AnyTop & $-$0.033 & $+$0.040 & $-$0.098 & $-$0.031 $\pm$ 0.069 & $+$0.243 \\
AL-Flow & $-$0.066 & $-$0.043 & $+$0.066 & $-$0.015 $\pm$ 0.070 & $+$0.002 \\
AL-Flow-Src & $+$0.070 & $+$0.049 & $-$0.140 & $-$0.007 $\pm$ 0.115 & $+$0.199 \\
AL-Flow-Src-G & $-$0.028 & $+$0.048 & $+$0.075 & $+$0.031 $\pm$ 0.053 & $+$0.239 \\
DPG-SB-v3 & $-$0.058 & $+$0.138 & $+$0.052 & $+$0.044 $\pm$ 0.098 & $+$0.064 \\
MoReFlow-T & $+$0.203 & $+$0.203 & $+$0.203 & $+$0.203 $\pm$ 0.000 & deterministic \\
MoReFlow-I & $+$0.137 & $+$0.137 & $+$0.137 & $+$0.137 $\pm$ 0.000 & deterministic \\
\bottomrule
\end{tabular}

\end{table}

The eligibility rule also matters for interpretation. Table~\ref{tab:pair_count}
and Figure~\ref{fig:pair_count_distribution} show that SIF triples have at
least three source clips by construction, but only a few triples provide more
than three. Figure~\ref{fig:sif_vs_diversity_ratio} checks that the main
pattern is not a monotone artifact of output variation (the diversity ratio $R$).

\begin{table}[t]
  \centering
  \caption{Pair counts per SIF triple. The minimum and median values are fixed
  by the eligibility rule; the few larger cells determine how much within-cell
  geometry is available.}
  \label{tab:pair_count}
  \begin{tabular}{lrrrr}
\toprule
Method & Min pairs & Median pairs & Max pairs & $n$ triples \\
\midrule
ACE-I & 3 & 3 & 5 & 49 \\
ACE-T & 3 & 3 & 5 & 49 \\
AL-Flow & 3 & 3 & 5 & 45 \\
AL-Flow-Src & 3 & 3 & 5 & 38 \\
AL-Flow-Src-G & 3 & 3 & 5 & 49 \\
ANCHOR & 3 & 3 & 5 & 49 \\
AnyTop & 3 & 3 & 5 & 49 \\
DPG-SB-v3 & 3 & 3 & 5 & 49 \\
Motion2Motion-BVH & 3 & 3 & 5 & 49 \\
Motion2Motion-Direct & 3 & 3 & 5 & 49 \\
MoReFlow-I & 3 & 3 & 5 & 49 \\
MoReFlow-T & 3 & 3 & 5 & 49 \\
random-same-exact-action & 3 & 3 & 5 & 48 \\
random-same-cluster & 3 & 3 & 5 & 48 \\
\midrule
\multicolumn{5}{p{0.94\linewidth}}{\footnotesize Min and median are uniformly 3 because the SIF benchmark requires $\geq 3$ source clips per triple by design; max reflects the few triples in Truebones with 4 or 5 source clips. Variation across methods comes from the few queries that some baselines fail to score.} \\
\bottomrule
\end{tabular}

\end{table}

\begin{figure}[t]
  \centering
  \includegraphics[width=0.74\linewidth]{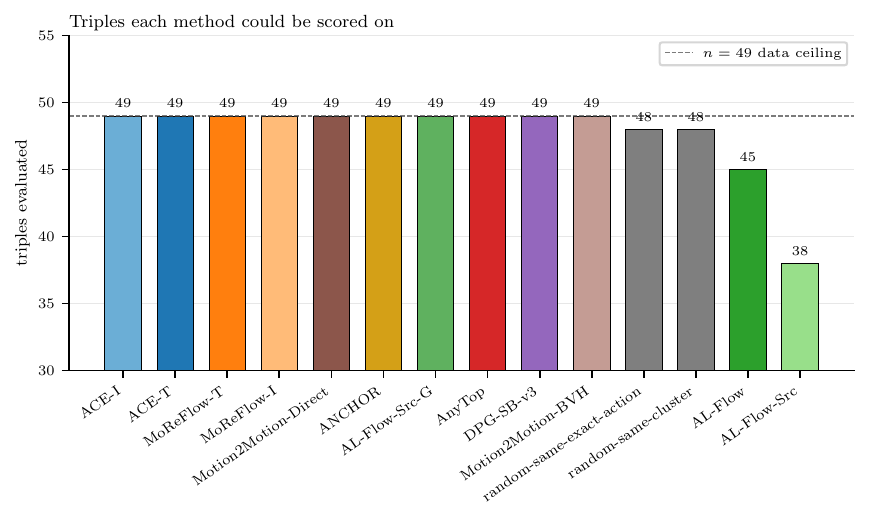}
  \caption{Pair-count distribution for SIF-eligible triples.}
  \label{fig:pair_count_distribution}
\end{figure}

\begin{figure}[t]
  \centering
  \includegraphics[width=0.78\linewidth]{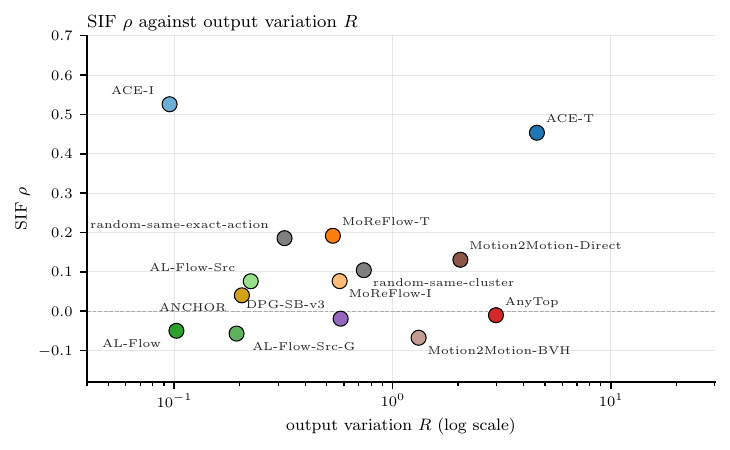}
  \caption{SIF versus output variation (the diversity ratio $R$). High output variation is not
  sufficient for source-instance preservation, because the geometry must align
  with the source-side geometry after the target skeleton and action are fixed.}
  \label{fig:sif_vs_diversity_ratio}
\end{figure}

Latent SIF (L-SIF) repeats the SIF computation before decoding, when a method
exposes a latent representation on the relevant queries. Table~\ref{tab:lsif}
and Figure~\ref{fig:lsif_sif_decoupling} show that latent preservation and
decoded SIF can decouple. This motivates the failure-mode taxonomy used in the
qualitative discussion: a method may collapse in latent space, collapse after
decoding, or keep a positive output correlation while its outputs barely vary
(collapsed variation).

\begin{table}[t]
  \centering
  \caption{SIF and L-SIF for each method. L-SIF is a latent-space
  diagnostic and is therefore reported only where the representation is
  comparable across queries.}
  \label{tab:lsif}
  \begin{tabular}{lrr}
\toprule
Method & SIF $\rho$ & L-SIF $\rho$ \\
\midrule
ACE trained with $z_{\text{src}}=0$ & 0.423 & 0.037 \\
ACE-I (3 seeds avg) & 0.410 & 0.049 \\
ACE-T (3 seeds avg) & 0.408 & 0.088 \\
ACE without adversarial loss (3 seeds avg) & 0.214 & 0.119 \\
MoReFlow-T & 0.192 & -0.200 \\
MoReFlow-I & 0.076 & -0.205 \\
AL-Flow-Src & 0.076 & -0.113 \\
AnyTop & -0.010 & 0.451 \\
AL-Flow & -0.050 & -0.098 \\
AL-Flow-Src-G & -0.057 & -0.126 \\
\midrule
\multicolumn{3}{p{0.94\linewidth}}{\footnotesize Cross-method Pearson correlation between SIF and L-SIF: $r = 0.199$, $p=0.46$ across $n=16$ method runs. The 95\% interval of L-SIF lies above zero for AnyTop and below zero for MoReFlow-I and MoReFlow-T; it includes zero for every other run.} \\
\bottomrule
\end{tabular}

\end{table}

\begin{figure}[t]
  \centering
  \includegraphics[width=0.76\linewidth]{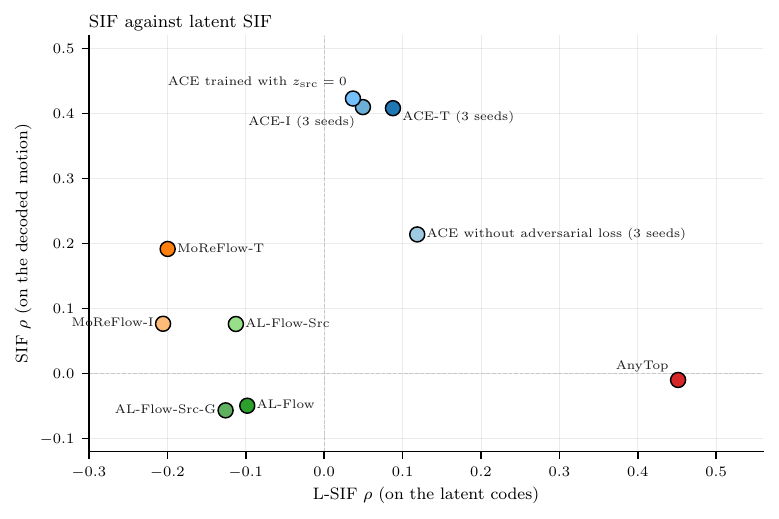}
  \caption{SIF and L-SIF decoupling. Source-instance fidelity in decoded motion
  need not coincide with a positive latent-space correlation.}
  \label{fig:lsif_sif_decoupling}
\end{figure}

\section{Synthetic Calibration}
\label{app:synth}

The synthetic $2\times2$ environment calibrates SIF in a setting where the
source-conditioned transport is known. Figure~\ref{fig:synth_2x2_ladder}
visualizes the three reference regimes used in Section~\ref{sec:sif}: the
oracle transport lies near one, while two source-blind references lie near
zero. Table~\ref{tab:synthetic_sif} records the corresponding numerical
calibration.

\begin{figure}[t]
  \centering
  \includegraphics[width=0.72\linewidth]{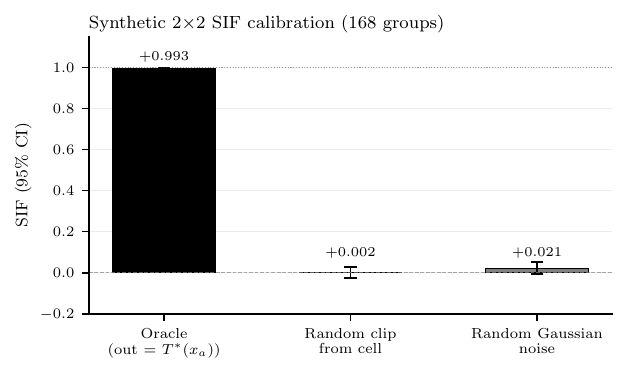}
  \caption{Synthetic $2\times2$ SIF calibration. The oracle uses the known
  source-conditioned transport $T^\star(x_a)$; the random-clip and Gaussian
  references deliberately ignore the source instance.}
  \label{fig:synth_2x2_ladder}
\end{figure}

\begin{table}[t]
  \centering
  \caption{Synthetic SIF calibration. The oracle lies near perfect
  source-instance preservation, while source-blind references remain near zero.}
  \label{tab:synthetic_sif}
  \begin{tabular}{lrrr}
\toprule
Output & SIF (95\% CI) & $p$ & Variation \\
\midrule
Oracle (output $= T^{*}(x_a)$) & \textbf{$+$0.993 [$+$0.992, $+$0.995]} & $<$0.001 & 0.94 \\
Random clip from cell & $+$0.002 [$-$0.027, $+$0.029] & 0.456 & 0.96 \\
Random Gaussian noise & $+$0.021 [$-$0.008, $+$0.054] & 0.164 & $1.3{\times}10^{5}$ \\
\midrule
\multicolumn{4}{p{0.85\linewidth}}{\footnotesize Synthetic 2$\times$2 paired-dense setting, 168 groups of three to six source clips. Another 168 groups are left out because their source clips are identical once rotation and scale are removed, so SIF is undefined there. The oracle saturates SIF; both source-blind references sit at zero. Noise has no motion structure, so its variation is very large.} \\
\bottomrule
\end{tabular}

\end{table}

The same environment also isolates conditional-mean degeneration. The
cell-paired squared-error predictor receives random target samples from the
correct target skeleton and action cell, but not instance-level pairs. As
Table~\ref{tab:propcm_ladder} shows, increasing the number of random
cell-pairs does not recover the oracle within-cell variance.

\begin{table}[t]
  \centering
  \caption{Conditional-mean degeneration in the controlled $2\times2$ setting.
  The predicted variance remains a small fraction of oracle variance even when
  the number of random target-cell pairs increases.}
  \label{tab:propcm_ladder}
  \begin{tabular}{rrrr}
\toprule
$M$ pairs / cell & $\sigma^2_{\text{pred}} / \sigma^2_{\text{oracle}}$ & MSE $/\sigma^2_{\text{oracle}}$ & $n_{\text{seeds}}$ \\
\midrule
2 & 0.020 $\pm$ 0.006 & 1.45 $\pm$ 0.08 & 3 \\
4 & 0.014 $\pm$ 0.002 & 1.29 $\pm$ 0.03 & 3 \\
8 & 0.020 $\pm$ 0.006 & 1.34 $\pm$ 0.08 & 3 \\
16 & 0.015 $\pm$ 0.002 & 1.31 $\pm$ 0.12 & 3 \\
32 & 0.017 $\pm$ 0.001 & 1.22 $\pm$ 0.05 & 3 \\
50 & 0.016 $\pm$ 0.003 & 1.26 $\pm$ 0.09 & 3 \\
\midrule
\multicolumn{4}{p{0.94\linewidth}}{\footnotesize Each source clip is trained against the true target of another clip from the same cell, drawn afresh at every step. Across all $M$ and 3 seeds, the predicted variance stays at about 2\% of the oracle within-cell variance, the cell-mean compression Proposition~\ref{prop:cm} predicts.} \\
\bottomrule
\end{tabular}

\end{table}

\section{Deterministic Comparator and Motion Descriptors}
\label{app:anchor}

ANCHOR is included to make the diagnostic boundary explicit. It is not a
generator and does not synthesize a new target motion. For a query consisting of
a source clip, a target skeleton, and an action request, ANCHOR first predicts a
coarse source action cluster using a random-forest classifier
\citep{Breiman2001RandomForests} trained on motion-space descriptors from the
training skeletons. It then searches the target skeleton's motion library and
returns one existing clip.

The motion descriptor, denoted $Q(x)$, is computed from joint-position
trajectories rather than from a learned latent representation. It summarizes
center-of-mass displacement and speed, heading velocity, contact timing,
cadence, dominant limb usage, and global clip statistics. The candidate pool is
the union of all target-skeleton clips in the predicted cluster and the ten
clips with largest $Q$-descriptor similarity to the source. Candidate clips are
scored by cluster agreement, descriptor similarity, and exact-action agreement,
using the fixed comparator weights
$(w_{\mathrm{cluster}},w_Q,w_{\mathrm{action}})=(1,2,3)$ recorded in the
method roster.

This construction is why ANCHOR is gauge-invariant in the sense of
Corollary~\ref{cor:retr-gauge}: changing a learned latent parametrization
cannot change a score computed directly from center-of-mass, heading, contact,
cadence, and limb-usage measurements on the joint trajectories. At the same
time, because ANCHOR returns an existing library clip, it does not claim to
identify the source-conditioned retargeting map. Its purpose is to mark a
diagnostic boundary, showing that action recovery can remain high at the
source-blind floor.
Table~\ref{tab:anchor_pred} records the action-cluster predictor, and
Figure~\ref{fig:per_cluster_anchor_match} shows how ANCHOR matches vary across
action clusters.

\begin{table}[t]
  \centering
  \caption{ANCHOR action-cluster predictor. The held-out row shows that the
  cluster prediction itself is imperfect, which is why ANCHOR is interpreted as
  a comparator rather than an oracle.}
  \label{tab:anchor_pred}
  \begin{tabular}{lrr}
\toprule
Domain & Cluster classifier accuracy & $n$ correct $/$ total \\
\midrule
In-domain (train skeletons) & 99.8\% & 543 $/$ 544 \\
Held-out (reserved cross-skeleton) & 61.5\% & 59 $/$ 96 \\
\midrule
\multicolumn{3}{p{0.93\linewidth}}{\footnotesize Replacing the ground-truth cluster label with the predicted cluster pulls Procrustes performance below random on exact-tier and on a hand-curated query subset. The in-domain row is the accuracy on the clips the classifier was trained on.} \\
\bottomrule
\end{tabular}

\end{table}

\begin{figure}[t]
  \centering
  \includegraphics[width=0.80\linewidth]{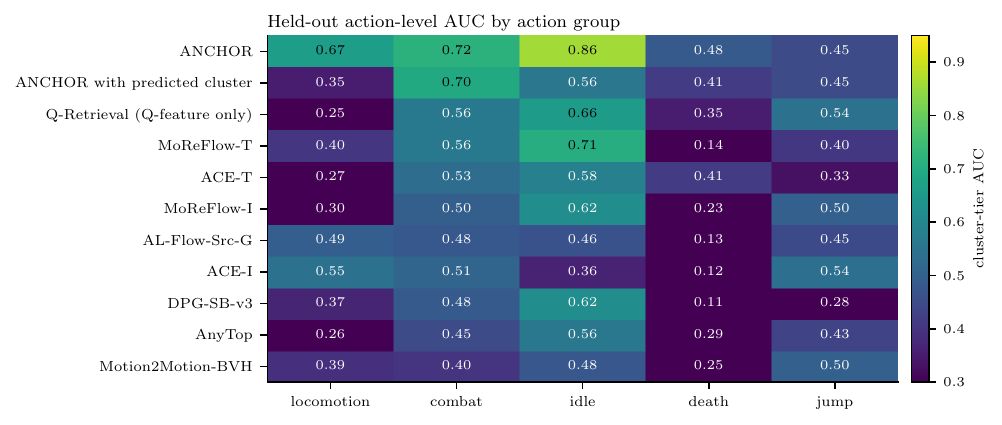}
  \caption{Per-cluster ANCHOR matches. The comparator's behavior varies across
  action clusters, reinforcing that it characterizes a floor rather than a
  universal retargeting rule.}
  \label{fig:per_cluster_anchor_match}
\end{figure}

\section{Action-Level AUC and Label Recovery}
\label{app:enumeration}

The standard cross-skeleton contrastive retrieval AUC is an action-level
ranking test: it asks whether a generated target motion ranks positives above
negatives in a target library. Section~\ref{sec:sif} shows that this
test can accept label-only behavior on the covered queries. The appendix gives
the broader enumeration and comparison tables behind that statement.
Table~\ref{tab:enumeration} and Figure~\ref{fig:enumeration_breakdown} report
the full enumeration, while Table~\ref{tab:gen_variants_summary},
Table~\ref{tab:master_auc}, and Figure~\ref{fig:master_auc_band} collect the
AUC-band controls.

\begin{table}[t]
  \centering
  \caption{Full ANCHOR enumeration across cluster-eligible triples. The held-out
  subsets show that strong action-level ranking can arise from target-library
  label structure without identifying a source-conditioned map.}
  \label{tab:enumeration}
  \begin{tabular}{lrr}
\toprule
Subset & $n$ triples (cluster-eligible) & Cluster-tier AUC (95\% CI) \\
\midrule
All & 30{,}029 & 0.824 [0.820, 0.829] \\
In-distribution (same train pair) & 21{,}838 & 0.845 [0.841, 0.849] \\
Mixed (one held-out skeleton) & 7{,}622 & \textbf{0.774 [0.762, 0.787]} \\
Held-out (both cross-skeleton) & 569 & 0.711 [0.669, 0.758] \\
\midrule
\multicolumn{3}{p{0.96\linewidth}}{\footnotesize ANCHOR on the 30{,}497-pair full enumeration. Cluster-eligibility filter removes triples whose target skeleton has no in-cluster positives. The mixed subset is the held-out case.} \\
\bottomrule
\end{tabular}

\end{table}

\begin{figure}[t]
  \centering
  \includegraphics[width=0.78\linewidth]{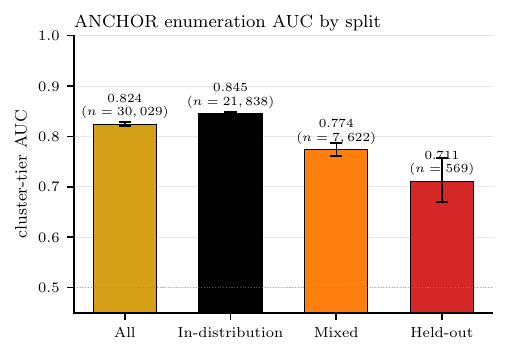}
  \caption{Enumeration breakdown for ANCHOR. The large enumeration separates
  in-distribution, mixed, and held-out skeleton regimes.}
  \label{fig:enumeration_breakdown}
\end{figure}

\begin{table}[t]
  \centering
  \caption{Action-level AUC for generated variants and supervision-matched
  comparators. These values are not interpreted as source-preserving transfer;
  they show how much action-level evidence can be explained without SIF.}
  \label{tab:gen_variants_summary}
  \resizebox{\linewidth}{!}{{\small
\begin{tabular}{p{0.22\linewidth}p{0.50\linewidth}r}
\toprule
Variant & Conditioning channels & Cluster AUC \\
\midrule
AnyTop & self-supervised source-conditioned diffusion & 0.465 \\
Precursor action-conditioned generator & action-conditioned generation & 0.485 \\
Precursor latent-bridge generator & latent bridge objective & 0.483 \\
DPG-SB-v3 & latent bridge objective; no decoded-motion penalties & 0.447 \\
AL-Flow & cluster + exact-action labels & 0.538 \\
AL-Flow-Src & AL-Flow + source motion + skeleton identity & 0.542 \\
AL-Flow-Src-G & AL-Flow + source motion + skeleton graph & 0.470 \\
\midrule
\multicolumn{3}{p{0.94\linewidth}}{\footnotesize The two precursor rows are reported only for the action-level AUC analysis and are not part of the 14-method SIF roster. AL-Flow and AL-Flow-Src produce outputs for 159 and 100 of the 300 queries in each fold, and their AUC is computed on those queries.} \\
\bottomrule
\end{tabular}
}
}
\end{table}

\begin{table}[t]
  \centering
  \caption{Master action-level AUC comparison on held-out queries. The table
  reports cluster-tier AUC under three distances: Procrustes trajectory
  distance; Z-DTW, dynamic time warping on z-normalized body-part trajectories;
  and Q-comp, a combination of centre-of-mass path, foot-contact timing,
  cadence, and limb-usage differences. Library-based references are listed
  separately from generative methods.}
  \label{tab:master_auc}
  \begin{tabular}{lrrr}
\toprule
\multicolumn{4}{c}{Cluster-tier held-out AUC (average of folds 42 and 43)} \\
Method & Procrustes & Z-DTW & Q-comp \\
\midrule
Action oracle & 0.873 & 0.846 & 0.894 \\
Self-positive reference & 0.860 & 0.847 & 0.909 \\
\midrule
ANCHOR & 0.681 & 0.704 & 0.728 \\
Cluster-Classifier Retrieval & 0.646 & 0.669 & 0.693 \\
Q-Retrieval (Q-feature only) & 0.453 & 0.523 & 0.597 \\
\midrule
MoReFlow-T & 0.463 & 0.496 & 0.558 \\
MoReFlow-I & 0.419 & 0.431 & 0.567 \\
ACE-T & 0.448 & 0.451 & 0.529 \\
ACE-I & 0.401 & 0.469 & 0.550 \\
\midrule
Random target skeleton (null) & 0.474 & 0.510 & 0.530 \\
\midrule
\multicolumn{4}{l}{\footnotesize Motion2Motion-BVH has Procrustes only and is reported separately.} \\
\bottomrule
\end{tabular}

\end{table}

\begin{figure}[t]
  \centering
  \includegraphics[width=0.78\linewidth]{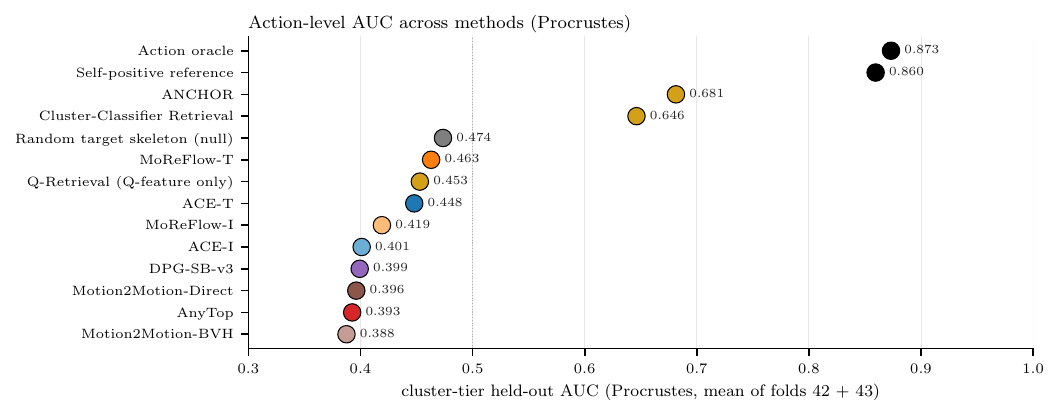}
  \caption{Action-level AUC band. Many methods occupy a narrow action-ranking
  band, which motivates SIF as the missing source-instance axis.}
  \label{fig:master_auc_band}
\end{figure}

\section{ACE Controls}
\label{app:ace_controls}

ACE is the evaluated method with the largest raw SIF, so we traced where its
signal comes from. This appendix removes or shuffles its source input, removes
its adversarial loss in a matched comparison, and checks how its output
variation should be measured.

ACE receives the source clip through a source latent, $z_{\text{src}}$, and in
our evaluation each output also took its length from its source clip.
Table~\ref{tab:ace_source_removal} varies both inputs at generation on the 49
original triples, for ACE-T and for an ablation trained with $z_{\text{src}}$ set
to zero. That ablation still received the real source latent at generation and
copied the source length, so its raw SIF of about $+0.42$ comes from these two
inputs. When the source latent is removed at generation and the length is
fixed, both models score exactly zero and their outputs no longer vary. This is
the reference point for the source-blind floor. With the source removed but the
length copied, both still score about $+0.30$, the share carried by length
alone. Shuffling the source among the clips of a triple brings the score to zero
or below. With the real source and a fixed length, ACE-T keeps $+0.29$, which is
$+0.29$ above the removed-source result (95\% CI $[+0.12,+0.47]$). Three
training seeds give $+0.29 \pm 0.07$.

\begin{table}[t]
  \centering
  \caption{Removing or shuffling ACE's source input at generation (49 original
  triples). Removed means $z_{\text{src}}$ set to zero at generation; shuffled
  means each output receives the source of another clip in the same triple;
  fixed means every output is generated with 64 frames. $p$ is the
  shuffle-test $p$-value; variation as in Table~\ref{tab:sif_main}.}
  \label{tab:ace_source_removal}
  \resizebox{\linewidth}{!}{\begin{tabular}{llrrrrrr}
\toprule
 & & \multicolumn{3}{c}{ACE-T} & \multicolumn{3}{c}{Model trained with $z_{\text{src}}=0$} \\
\cmidrule(lr){3-5}\cmidrule(lr){6-8}
Source at generation & Output length & SIF & $p$ & Variation & SIF & $p$ & Variation \\
\midrule
real & copied & $+$0.385 & $<$0.001 & 0.020 & $+$0.419 & $<$0.001 & 0.023 \\
real & fixed & $+$0.294 & $<$0.001 & 0.003 & $+$0.213 & 0.006 & 0.004 \\
removed & copied & $+$0.300 & $<$0.001 & 0.009 & $+$0.299 & $<$0.001 & 0.007 \\
removed & fixed & 0.000 & 1.000 & 0.000 & 0.000 & 1.000 & 0.000 \\
shuffled & copied & $-$0.192 & 0.987 & 0.020 & $-$0.165 & 0.970 & 0.023 \\
shuffled & fixed & $-$0.064 & 0.774 & 0.003 & $+$0.040 & 0.328 & 0.004 \\
\bottomrule
\end{tabular}
}
\end{table}

To test whether the adversarial loss creates this signal, we compare ACE-I with
a model trained without that loss but otherwise identically: the same 60
training skeletons, the same data, and the same three seeds.
Table~\ref{tab:ace_adv_matched} reports both scorings. When the output length
follows the source, the model with the adversarial loss scores higher. When the
length is fixed at generation, the difference disappears on both evaluation
sets. The difference therefore depends on the output length following the
source. Without the adversarial loss, the outputs also vary more. A further
ablation without the source-feature loss diverged during training and gave no
usable model.

\begin{table}[t]
  \centering
  \caption{ACE-I with and without the adversarial loss, trained identically
  otherwise (three seeds each). SIF is averaged over seeds; variation gives the
  range of the per-seed medians; the difference is paired over triples, with a
  95\% interval that resamples whole source skeletons.}
  \label{tab:ace_adv_matched}
  \resizebox{\linewidth}{!}{\begin{tabular}{lrrrrrlr}
\toprule
 & & \multicolumn{2}{c}{With adversarial loss} & \multicolumn{2}{c}{Without} & & \\
\cmidrule(lr){3-4}\cmidrule(lr){5-6}
Scoring & Triples & SIF & Variation & SIF & Variation & Difference [95\% CI] & $p$ \\
\midrule
raw, length copied & 49 & $+$0.407 & 0.017--0.022 & $+$0.228 & 0.271--0.906 & $+$0.179 [$+$0.04, $+$0.33] & 0.022 \\
length fixed at generation & 49 & $+$0.292 & 0.002--0.004 & $+$0.266 & 0.073--0.107 & $+$0.026 [$-$0.09, $+$0.15] & 0.677 \\
length fixed at generation & 1{,}891 & $+$0.329 & 0.002--0.003 & $+$0.329 & 0.078--0.111 & 0.000 [$-$0.05, $+$0.06] & 0.999 \\
\bottomrule
\end{tabular}
}
\end{table}

Output variation needs care under length control. Resampling finished outputs
to 64 frames stretches clips of different lengths by different amounts, which
can make identical motions look different. Table~\ref{tab:ace_stretch} shows
that ACE's outputs, nearly identical in raw scoring, gain a variation of 0.14 to
0.23 after stretching. Identical copies of a single ACE output, cut to the
lengths of its siblings, reach almost the same value. MoReFlow's variation is
the same with and without stretching and clearly exceeds what stretching alone
produces. We therefore report ACE's variation in raw scoring and with the length
fixed at generation.

\begin{table}[t]
  \centering
  \caption{Output variation after stretching every output to 64 frames (49
  original triples). Identical copies are one output cut to the lengths of its
  siblings, so any variation they show comes from stretching alone.}
  \label{tab:ace_stretch}
  \small
  \begin{tabular}{lrrr}
\toprule
 & \multicolumn{2}{c}{Actual outputs} & Identical copies, \\
\cmidrule(lr){2-3}
Method & Raw & Stretched to 64 frames & stretched to 64 frames \\
\midrule
ACE-T & 0.020 & 0.231 & 0.149 \\
ACE-I & 0.020 & 0.135 & 0.142 \\
MoReFlow-T & 0.247 & 0.217 & 0.093 \\
MoReFlow-I & 0.276 & 0.279 & 0.078 \\
\bottomrule
\end{tabular}

\end{table}

The nearest-neighbor leakage check in Table~\ref{tab:ace_leakage_nn} addresses a
different explanation: that ACE could be lifting SIF by memorizing source clips.
Most ACE outputs are closer to a target-pool training clip than to the
corresponding source clip in the reported four-dimensional kinematic feature
space, which weakens source memorization as the explanation for the SIF lift.

\begin{table}[t]
  \centering
  \caption{ACE nearest-neighbor leakage check. Distances are computed in a
  kinematic feature space; the majority of ACE outputs are closer to the target
  pool than to the source clip.}
  \label{tab:ace_leakage_nn}
  \begin{tabular}{lrlrrr}
\toprule
Method & Queries & \begin{tabular}[b]{@{}r@{}}Closer to the\\target pool\end{tabular} & \begin{tabular}[b]{@{}r@{}}Mean distance\\to source\end{tabular} & \begin{tabular}[b]{@{}r@{}}Mean distance\\to target pool\end{tabular} & Ratio \\
\midrule
ACE-I & 130 & 103 (79.2\%) & 3.449 & 2.215 & 1.56 \\
ACE-T & 130 & 99 (76.2\%) & 3.697 & 2.650 & 1.39 \\
\midrule
\multicolumn{6}{p{0.94\linewidth}}{\footnotesize Distance computed in a four-dimensional kinematic feature space (centre-of-mass displacement, centre-of-mass variance, mean velocity, foot-contact density). 76--79\% of ACE outputs are closer to a target-pool training clip than to the corresponding source clip, weakening source-memorisation as an explanation of ACE's SIF lift.} \\
\bottomrule
\end{tabular}

\end{table}

\section{Human-to-Robot Studies}
\label{app:robots}

Truebones has no true correspondences, so it can show that methods sit at the
source-blind floor but not what a correct map would score. The two studies in
this appendix supply true pairs. They test SIF where the answer is known, and
they train the two objective classes of Section~\ref{sec:structural} on the same
data as a model given the true pairs.

\paragraph{Data.}
The first study uses BONES-SEED~\citep{bonesseed}, a motion-capture dataset whose
human clips come with retargeted counterparts on the Unitree G1 humanoid (Motion
Data by Bones Studio, \url{https://bones.studio/}). We selected 90 action groups
with three or four clips each, 355 clips in total. The second study starts from
the 77 recordings of LAFAN1~\citep{harvey2020robust}. We cut each recording into
10-second clips and grouped the clips by routine; each routine is performed by
up to five people, giving 29 groups. We converted every clip to six humanoid
robots with GMR~\citep{araujo2025gmr}, a public inverse-kinematics retargeting
tool: Unitree G1, Booster T1, Fourier N1, Stanford Toddy, EngineAI PM01, and PAL
Talos, which differ in height and have 23 to 31 joints. The converted clip serves
as the true counterpart. LAFAN1 is released under CC BY-NC-ND 4.0; the paper
shows rendered stills for illustration and does not release the retargeted
clips. GMR is released under the MIT License, and the robot models are those
distributed with GMR, credited to their original sources in its repository. We
use two settings: three clips per group, as in
Truebones (72 clips; 17 groups have at least three), and a denser one (154
clips; 26 groups). All six-robot outputs are scored on a common 256-frame window.
In both studies the sparsity is created by us: the pairs exist, but the two
theorem objectives never see them.

\paragraph{Models.}
Figure~\ref{fig:robot_renders} shows the data and the trained models on the
robots. In both studies we train one small transformer (four layers, width 256) in three
ways, with the same data, number of steps, and optimizer. The unpaired objective
couples two variational autoencoders, one per body, through a shared Gaussian
latent and trains each only on its own motions, which places it in the class of
Theorem~\ref{thm:gauge}. The averaging objective regresses each human clip, with
squared error, onto a clip of the same group on the robot, drawn at random at
every step, which places it in the class of Proposition~\ref{prop:cm}. The
true-pair model uses the same regressor trained on the true counterpart. Two
reference rows need no training: the true counterpart itself and a random clip
from the same group.

\begin{figure}[t]
  \centering
  \includegraphics[width=\linewidth]{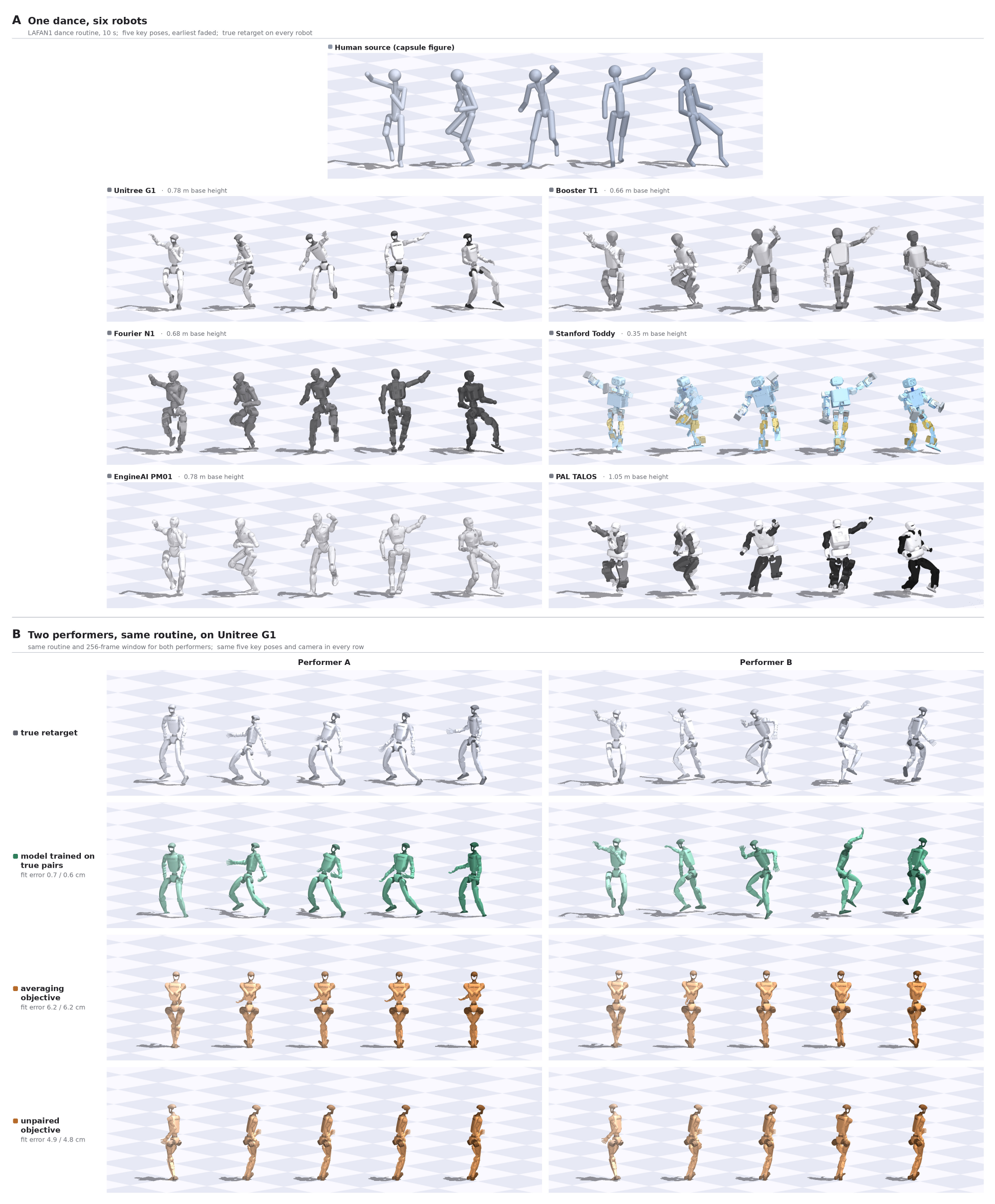}
  \caption{Human-to-robot renders. (A) One LAFAN1 dance routine (10\,s)
  retargeted by GMR to six humanoid robots; five key poses per strip, earliest
  faded. Each strip is framed to its robot, so compare poses, not sizes; the base
  heights give the true scale. (B) Two performers dancing the same routine, and
  what each model produces for each of them on the Unitree G1. The true
  retargets and the model trained on true pairs differ between the performers.
  The averaging and unpaired objectives produce nearly the same motion for both,
  and that motion barely changes over time. Model outputs are predicted body
  positions, drawn as the nearest G1 pose; the fit error is listed under each
  row. The 5 to 6\,cm errors of the two collapsed rows mean their raw
  predictions are less robot-like than drawn. The two performers are not aligned
  in time, so part of the difference in the top rows is timing rather than
  style.}
  \label{fig:robot_renders}
\end{figure}

\paragraph{Human-to-G1 results.}
Table~\ref{tab:robot_g1_full} extends Table~\ref{tab:g1_joint} with
length-controlled SIF. The true counterpart scores the same with and without
length control (0.960 and 0.959), so SIF on real pairs does not depend on clip
length. When scoring action AUC, each output's own clip is removed from the reference
set, as the query's clip is for every output.

\begin{table}[t]
  \centering
  \caption{Human-to-G1 study, all four measures on identical test items (90
  groups, 355 clips), with SIF under both scorings. Variation is the median ratio
  of output spread to source spread; realistic as in Table~\ref{tab:g1_joint}.}
  \label{tab:robot_g1_full}
  \small
  \begin{tabular}{lrrrrr}
\toprule
 & Raw SIF & Length-controlled SIF & Action AUC & Variation & Realistic \\
\midrule
True retargeted counterpart & $+$0.960 & $+$0.959 & 0.980 & 0.518 & 97.2\% \\
Random same-action clip & $-$0.008 & $-$0.068 & 0.978 & 0.414 & 97.5\% \\
Unpaired objective & $+$0.203 & $+$0.154 & 0.499 & 0.001 & 100.0\% \\
Averaging objective & $+$0.379 & $+$0.343 & 0.820 & 0.048 & 53.0\% \\
Model trained on true pairs & $+$0.900 & $+$0.698 & 0.967 & 0.521 & 94.6\% \\
\bottomrule
\end{tabular}

\end{table}

\paragraph{Six-robot predictions.}
For the six-robot study we wrote down five predictions with numeric thresholds
before scoring any trained model; we had seen the reference rows for one robot
during a pilot run. Here $Q$ is a model's output variation divided by that of the
true counterpart, matched group by group. Each prediction is checked 24 times:
six robots, two settings, and raw or length-controlled scoring.
Table~\ref{tab:robot_predictions} reports the outcome, including the misses, and
Table~\ref{tab:robot_per_robot} gives every robot. The true counterpart falls
just short of the 0.80 threshold on Booster T1 (0.79) and in one Stanford Toddy
evaluation (0.77), but it is clearly above zero everywhere. The random-clip
prediction holds in every denser evaluation; with three clips per group its
intervals are too wide to decide. The unpaired objective keeps about 40 percent
of the variation on PAL Talos with three clips per group, and collapses in every
other evaluation.

\begin{table}[t]
  \centering
  \caption{Six-robot predictions, fixed before any trained model was scored, and
  their outcomes over 24 evaluations (six robots, two settings, two scorings).
  $Q$ is a model's output variation divided by that of the true counterpart,
  matched group by group; intervals are group bootstraps.}
  \label{tab:robot_predictions}
  \small
  \begin{tabular}{p{0.28\linewidth}p{0.26\linewidth}p{0.34\linewidth}}
\toprule
Prediction & Threshold & Result \\
\midrule
The true retarget is recognized & SIF $\geq 0.80$, interval above 0 & 19 of 24; every interval above 0; misses 0.768--0.799 \\
A random same-group clip sits at the floor & 90\% interval within $\pm 0.20$ & 12 of 24; 12 of 12 in the denser setting \\
The averaging objective loses variation & variation $Q \leq 0.35$, upper bound $< 0.50$ & 24 of 24 \\
The unpaired objective loses variation & variation $Q \leq 0.35$, upper bound $< 0.50$ & 22 of 24; missed with variation $Q = 0.41$ \\
The true-pair model recovers both & SIF $\geq 0.60$ and variation $Q \geq 0.70$ & 24 of 24; SIF 0.77--0.99 \\
\bottomrule
\end{tabular}

\end{table}

\begin{table}[t]
  \centering
  \caption{Six-robot results for every robot, setting, and scoring. SIF columns
  give the correlation; $Q$ columns give output variation relative to the true
  counterpart.}
  \label{tab:robot_per_robot}
  \resizebox{\linewidth}{!}{\begin{tabular}{lllrrrrrr}
\toprule
 & & & True retarget & Random clip & Unpaired & Averaging & \multicolumn{2}{c}{True-pair model} \\
\cmidrule(lr){8-9}
Robot & Setting & Scoring & SIF & SIF & Variation ($Q$) & Variation ($Q$) & SIF & Variation ($Q$) \\
\midrule
Unitree G1 & three-clip & raw & $+$0.988 & $+$0.066 & 0.059 & 0.286 & $+$0.987 & 1.00 \\
Unitree G1 & three-clip & length-controlled & $+$0.987 & $+$0.066 & 0.059 & 0.286 & $+$0.987 & 1.00 \\
\addlinespace
Unitree G1 & denser & raw & $+$0.966 & $-$0.061 & 0.012 & 0.035 & $+$0.965 & 0.99 \\
Unitree G1 & denser & length-controlled & $+$0.966 & $-$0.058 & 0.012 & 0.035 & $+$0.965 & 0.99 \\
\addlinespace
Booster T1 & three-clip & raw & $+$0.799 & $+$0.024 & 0.055 & 0.347 & $+$0.845 & 0.99 \\
Booster T1 & three-clip & length-controlled & $+$0.788 & $+$0.026 & 0.057 & 0.346 & $+$0.842 & 0.99 \\
\addlinespace
Booster T1 & denser & raw & $+$0.790 & $-$0.062 & 0.036 & 0.067 & $+$0.772 & 0.99 \\
Booster T1 & denser & length-controlled & $+$0.790 & $-$0.059 & 0.036 & 0.067 & $+$0.772 & 0.99 \\
\addlinespace
Fourier N1 & three-clip & raw & $+$0.883 & $+$0.098 & 0.027 & 0.278 & $+$0.882 & 1.00 \\
Fourier N1 & three-clip & length-controlled & $+$0.883 & $+$0.099 & 0.026 & 0.280 & $+$0.881 & 1.01 \\
\addlinespace
Fourier N1 & denser & raw & $+$0.899 & $-$0.047 & 0.019 & 0.044 & $+$0.895 & 1.00 \\
Fourier N1 & denser & length-controlled & $+$0.898 & $-$0.044 & 0.019 & 0.044 & $+$0.894 & 1.00 \\
\addlinespace
Stanford Toddy & three-clip & raw & $+$0.768 & $+$0.182 & 0.017 & 0.267 & $+$0.866 & 1.00 \\
Stanford Toddy & three-clip & length-controlled & $+$0.877 & $+$0.185 & 0.017 & 0.267 & $+$0.872 & 1.00 \\
\addlinespace
Stanford Toddy & denser & raw & $+$0.861 & $-$0.050 & 0.190 & 0.049 & $+$0.862 & 0.99 \\
Stanford Toddy & denser & length-controlled & $+$0.860 & $-$0.047 & 0.191 & 0.049 & $+$0.861 & 0.99 \\
\addlinespace
EngineAI PM01 & three-clip & raw & $+$0.887 & $+$0.099 & 0.316 & 0.294 & $+$0.885 & 1.00 \\
EngineAI PM01 & three-clip & length-controlled & $+$0.885 & $+$0.099 & 0.317 & 0.295 & $+$0.883 & 1.00 \\
\addlinespace
EngineAI PM01 & denser & raw & $+$0.906 & $-$0.051 & 0.027 & 0.051 & $+$0.907 & 0.99 \\
EngineAI PM01 & denser & length-controlled & $+$0.907 & $-$0.048 & 0.027 & 0.051 & $+$0.908 & 0.99 \\
\addlinespace
PAL Talos & three-clip & raw & $+$0.895 & $+$0.027 & 0.405 & 0.260 & $+$0.892 & 1.00 \\
PAL Talos & three-clip & length-controlled & $+$0.896 & $+$0.026 & 0.405 & 0.261 & $+$0.892 & 1.00 \\
\addlinespace
PAL Talos & denser & raw & $+$0.878 & $-$0.043 & 0.134 & 0.051 & $+$0.878 & 0.99 \\
PAL Talos & denser & length-controlled & $+$0.878 & $-$0.041 & 0.134 & 0.051 & $+$0.878 & 0.99 \\
\bottomrule
\end{tabular}
}
\end{table}

\paragraph{An adversarial objective without pairs.}
We also trained an objective in the style of ACE on the human-to-G1 data: a
generator with a motion-space discriminator and a source-feature loss, without
pairs. The planned run diverged: its outputs grew without bound, and none passed
the realism check. We then re-ran it post hoc with ACE's own optimizer settings,
which stabilized training. Table~\ref{tab:robot_adversarial} reports all three
seeds. No seed scores well on correlation, action accuracy, variation, and
realism together.

\begin{table}[t]
  \centering
  \caption{An adversarial objective in the style of ACE on the human-to-G1 data,
  without pairs. The planned run diverged; the re-runs use ACE's optimizer
  settings and are post hoc.}
  \label{tab:robot_adversarial}
  \small
  \begin{tabular}{lrrrr}
\toprule
 & Raw SIF & Action AUC & Variation & Realistic \\
\midrule
Planned run & $+$0.209 & 0.500 & diverged & 0.0\% \\
Re-run, seed 1 & $+$0.696 & 0.850 & 0.561 & 14.4\% \\
Re-run, seed 2 & $+$0.528 & 0.551 & 0.093 & 78.0\% \\
Re-run, seed 3 & $+$0.574 & 0.513 & 0.101 & 0.0\% \\
\bottomrule
\end{tabular}

\end{table}

\paragraph{Four measures on identical test items.}
Table~\ref{tab:truebones_fourmeasure} reports SIF, action AUC, output variation,
and realism together for the Truebones methods on the 49 original triples. No
method is high on both SIF and action AUC: the retrieval and label-only rows
reach the highest action AUC with SIF near zero, and ACE reaches the highest SIF
with chance-level action AUC.

\begin{table}[t]
  \centering
  \caption{Truebones methods on the 49 original triples: SIF (raw), action AUC,
  output variation, and realism on identical test items. Realism is undefined for
  Motion2Motion-BVH, whose outputs use a different joint layout.}
  \label{tab:truebones_fourmeasure}
  \small
  \begin{tabular}{lrrrr}
\toprule
Method & Raw SIF & Action AUC & Variation & Realistic \\
\midrule
ACE-I & $+$0.484 & 0.539 & 0.02 & 91.8\% \\
ACE-T & $+$0.385 & 0.538 & 0.02 & 92.4\% \\
MoReFlow-T & $+$0.203 & 0.596 & 0.25 & 99.4\% \\
MoReFlow-I & $+$0.137 & 0.578 & 0.28 & 99.4\% \\
random-same-exact-action & $+$0.096 & 0.799 & 0.00 & 100.0\% \\
random-same-cluster & $+$0.073 & 0.813 & 0.25 & 100.0\% \\
AL-Flow-Src & $+$0.070 & 0.633 & 0.11 & 100.0\% \\
Motion2Motion-Direct & $+$0.053 & 0.514 & 1.69 & 58.5\% \\
ANCHOR & $+$0.014 & 0.828 & 0.00 & 100.0\% \\
AL-Flow-Src-G & $-$0.028 & 0.606 & 0.08 & 100.0\% \\
AnyTop & $-$0.033 & 0.518 & 1.34 & 52.0\% \\
DPG-SB-v3 & $-$0.058 & 0.684 & 0.22 & 100.0\% \\
AL-Flow & $-$0.066 & 0.643 & 0.04 & 100.0\% \\
Motion2Motion-BVH & $-$0.085 & 0.449 & 0.93 & -- \\
\bottomrule
\end{tabular}

\end{table}

Table~\ref{tab:robot6_fourmeasure} gives the same measures for the six-robot
study. The model trained on true pairs matches the true counterpart on all four.
The averaging objective keeps little of the variation and produces many
unrealistic poses, and the unpaired objective falls to chance on action AUC.
Action AUC is lower for every case here, including the true counterpart, because
several LAFAN1 categories are all forms of locomotion. Action AUC and realism for
this study were computed after the pre-registration, with the same definitions as
the human-to-G1 study.

\begin{table}[t]
  \centering
  \caption{Six-robot study, four measures on identical test items: mean over the
  six robots, with the range in parentheses. Variation is $Q$, a model's output
  variation divided by that of the true counterpart. Action AUC and realism were
  computed after the pre-registration.}
  \label{tab:robot6_fourmeasure}
  \resizebox{\linewidth}{!}{\begin{tabular}{lllll}
\toprule
 & Raw SIF & Action AUC & Variation ($Q$) & Realistic \\
\midrule
\multicolumn{5}{l}{\emph{Three clips per group}} \\
True retargeted counterpart & +0.87 (+0.77--+0.99) & 0.61 (0.60--0.63) & 1.00 (1.00--1.00) & 99 (99--99)\% \\
Random same-group clip & +0.08 (+0.02--+0.18) & 0.62 (0.61--0.63) & 0.73 (0.69--0.77) & 99 (99--100)\% \\
Unpaired objective & +0.24 (+0.11--+0.36) & 0.50 (0.49--0.52) & 0.15 (0.02--0.41) & 98 (89--100)\% \\
Averaging objective & +0.11 (+0.07--+0.20) & 0.60 (0.58--0.63) & 0.29 (0.26--0.35) & 46 (33--83)\% \\
Model trained on true pairs & +0.89 (+0.85--+0.99) & 0.61 (0.60--0.63) & 1.00 (0.99--1.00) & 98 (97--99)\% \\
\multicolumn{5}{l}{\emph{Denser setting}} \\
True retargeted counterpart & +0.88 (+0.79--+0.97) & 0.60 (0.58--0.61) & 1.00 (1.00--1.00) & 99 (99--99)\% \\
Random same-group clip & $-$0.05 ($-$0.06--$-$0.04) & 0.59 (0.57--0.61) & 0.82 (0.79--0.85) & 98 (98--99)\% \\
Unpaired objective & +0.15 (+0.10--+0.20) & 0.51 (0.50--0.52) & 0.07 (0.01--0.19) & 61 (1--100)\% \\
Averaging objective & +0.16 (+0.12--+0.18) & 0.56 (0.55--0.59) & 0.05 (0.03--0.07) & 23 (8--84)\% \\
Model trained on true pairs & +0.88 (+0.77--+0.97) & 0.60 (0.58--0.61) & 0.99 (0.99--1.00) & 99 (99--99)\% \\
\bottomrule
\end{tabular}
}
\end{table}

\section{Qualitative Evidence and Supplementary Videos}
\label{app:videos}

The qualitative figures are not used as quantitative proof, but they make the
SIF failure modes visible. The appendix provides three full-grid still figures
that correspond to the supplementary videos. Each grid fixes a source-target
triple and compares the evaluated methods across three source clips, so rows
that move in near lockstep indicate source-blind behavior while rows with
source-dependent variation indicate partial source conditioning.
Figure~\ref{fig:qual_method_panorama} gives the compact method panorama in the
main text. Figures~\ref{fig:qualitative_failuremodes},
\ref{fig:qual_full_a}, \ref{fig:qual_full_b}, and \ref{fig:qual_full_c} provide
the appendix still-image views corresponding to this qualitative protocol.

\begin{figure}[t]
  \centering
  \includegraphics[width=0.96\linewidth]{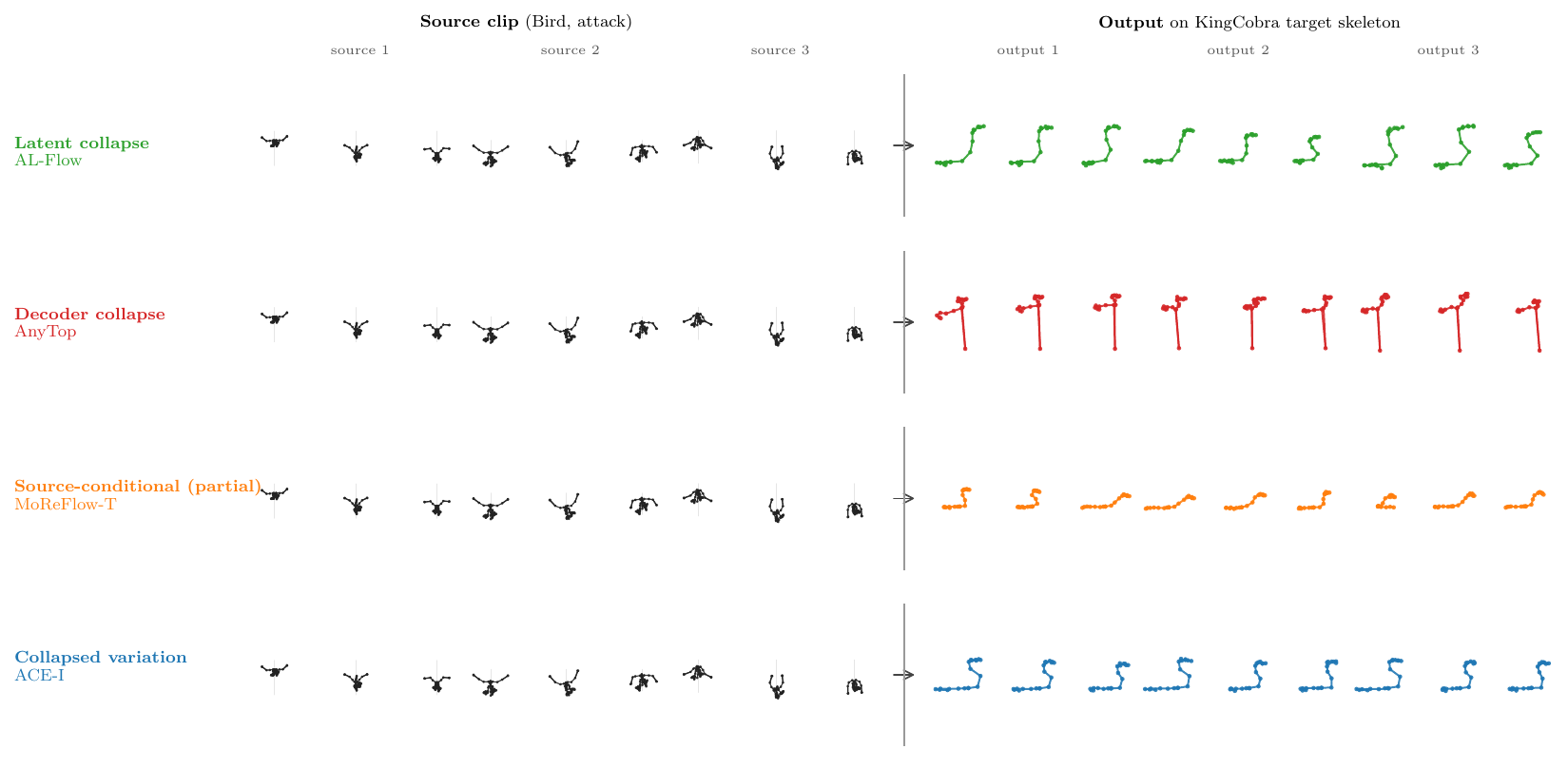}
  \caption{A source-clip substitution view on one Bird $\to$ KingCobra attack
  triple. The first three columns, labeled source 1 to source 3, are different
  Bird attack clips, while the target skeleton and action are fixed; the three
  columns on the right, labeled output 1 to output 3, show the corresponding
  KingCobra motions generated by each row. Each row is labeled by its
  source-instance failure mode: latent collapse for AL-Flow, where source
  identity is lost before decoding; decoder collapse for AnyTop, where latent
  variation is not expressed in the output; partial source-conditionality for
  MoReFlow-T, the clearest output-level source dependence among the displayed
  rows; and collapsed variation for ACE-I, whose outputs keep a positive
  correlation with their sources but remain nearly identical.}
  \label{fig:qualitative_failuremodes}
\end{figure}

\begin{figure}[t]
  \centering
  \includegraphics[width=0.96\linewidth]{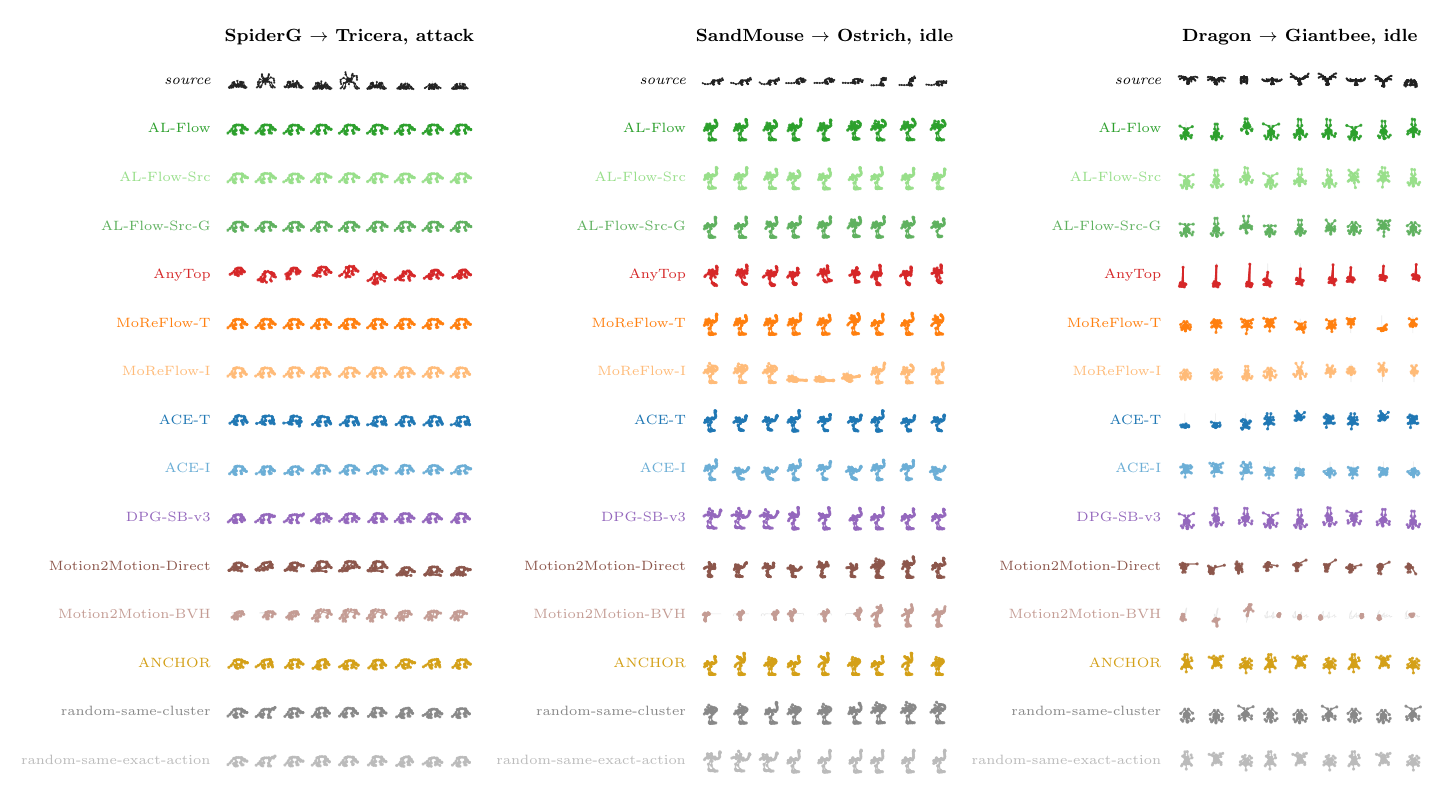}
  \caption{Full qualitative grid A. Three source clips are compared against
  target outputs from the evaluated methods on a fixed source-target-action
  triple.}
  \label{fig:qual_full_a}
\end{figure}

\begin{figure}[t]
  \centering
  \includegraphics[width=0.96\linewidth]{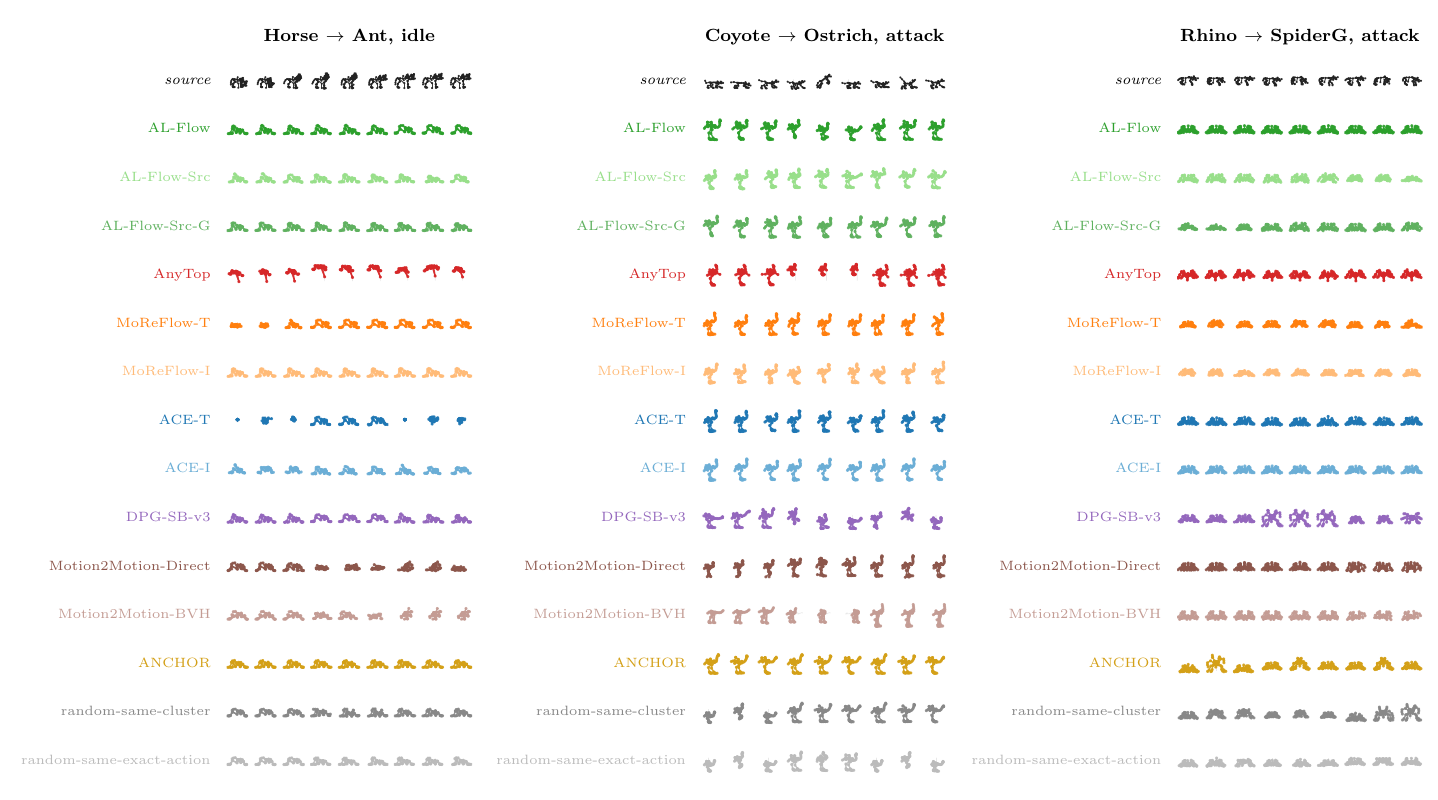}
  \caption{Full qualitative grid B. The same layout is used to reveal whether
  output variation follows the source clips after target skeleton and action are
  fixed.}
  \label{fig:qual_full_b}
\end{figure}

\begin{figure}[t]
  \centering
  \includegraphics[width=0.96\linewidth]{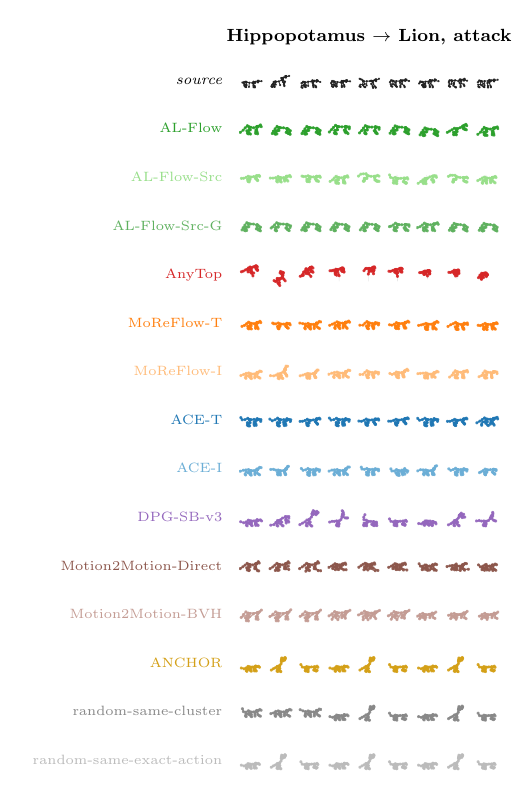}
  \caption{Full qualitative grid C. The grid complements
  Figure~\ref{fig:qualitative_failuremodes} by showing all evaluated methods on
  an additional held-out triple.}
  \label{fig:qual_full_c}
\end{figure}

The supplementary material contains nine companion videos: two main-body videos
for the panorama and failure-mode figures, and seven appendix videos for
additional source-target triples. The videos use the same projection rule as
the still figures and synchronize playback phase across methods, making
source-blind lockstep behavior and source-conditioned variation easier to see.

\section{Latent Schr\"odinger Bridge Comparator Scope}
\label{app:dpgsb_status}

The evaluated latent Schr\"odinger bridge comparator, reported under the public
method name DPG-SB-v3 in the roster, uses retrieval-initialized noise, a shared
conditioned bridge, a target-side latent constraint, and a latent-space cycle
term. The decoded-motion bone, contact, and smoothness penalties were not
included in this evaluation. This caveat is important for interpretation:
DPG-SB-v3 tests where the evaluated latent bridge variant lies relative to the
source-blind floor, not an exhaustive set of motion-space penalties for bridge
objectives.

\section{Potential Societal Impacts}
\label{app:societal_impacts}

Cross-skeleton motion generation may broaden access to motion content in
animation, education, assistive design, and scientific visualization by reducing
the need to record every action separately for every body. In these settings, a
source-preserving model could help creators transfer motion intention across
characters, support low-resource motion libraries, and make comparative studies
of movement easier. This paper contributes a diagnostic and theoretical account
of when such transfer claims are not identified by standard evidence, rather
than a deployed generator.

The same capability also carries risks. More reliable motion transfer could make
synthetic embodied media easier to produce, including misleading performances
attributed to people or characters who did not enact them, and action-level
evaluation may overstate reliability when the source-conditioned map is not
actually preserved. Biases in the source and target motion libraries may also be
propagated across bodies, especially when some anatomies, actions, or movement
styles are sparsely represented. Our analysis supports a conservative evaluation
stance: cross-skeleton methods should report source-instance diagnostics
alongside action-level scores, disclose the motion domains and licenses on which
they are trained, and avoid claims of retargeting fidelity when only
target-action recovery has been established.


\clearpage
\section*{NeurIPS Paper Checklist}

\begin{enumerate}

\item {\bf Claims}
    \item[] Question: Do the main claims made in the abstract and introduction accurately reflect the paper's contributions and scope?
    \item[] Answer: \answerYes{} 
    \item[] Justification: The abstract and introduction accurately reflect the paper's contributions and scope.
    \item[] Guidelines:
    \begin{itemize}
        \item The answer \answerNA{} means that the abstract and introduction do not include the claims made in the paper.
        \item The abstract and/or introduction should clearly state the claims made, including the contributions made in the paper and important assumptions and limitations. A \answerNo{} or \answerNA{} answer to this question will not be perceived well by the reviewers. 
        \item The claims made should match theoretical and experimental results, and reflect how much the results can be expected to generalize to other settings. 
        \item It is fine to include aspirational goals as motivation as long as it is clear that these goals are not attained by the paper. 
    \end{itemize}

\item {\bf Limitations}
    \item[] Question: Does the paper discuss the limitations of the work performed by the authors?
    \item[] Answer: \answerYes{} 
    \item[] Justification: Section~\ref{sec:limitations} discusses the scale invariance and small-sample noise of SIF, positive scores that arise without source content, the scope of the data, and the adapted baselines.
    \item[] Guidelines:
    \begin{itemize}
        \item The answer \answerNA{} means that the paper has no limitation while the answer \answerNo{} means that the paper has limitations, but those are not discussed in the paper. 
        \item The authors are encouraged to create a separate ``Limitations'' section in their paper.
        \item The paper should point out any strong assumptions and how robust the results are to violations of these assumptions (e.g., independence assumptions, noiseless settings, model well-specification, asymptotic approximations only holding locally). The authors should reflect on how these assumptions might be violated in practice and what the implications would be.
        \item The authors should reflect on the scope of the claims made, e.g., if the approach was only tested on a few datasets or with a few runs. In general, empirical results often depend on implicit assumptions, which should be articulated.
        \item The authors should reflect on the factors that influence the performance of the approach. For example, a facial recognition algorithm may perform poorly when image resolution is low or images are taken in low lighting. Or a speech-to-text system might not be used reliably to provide closed captions for online lectures because it fails to handle technical jargon.
        \item The authors should discuss the computational efficiency of the proposed algorithms and how they scale with dataset size.
        \item If applicable, the authors should discuss possible limitations of their approach to address problems of privacy and fairness.
        \item While the authors might fear that complete honesty about limitations might be used by reviewers as grounds for rejection, a worse outcome might be that reviewers discover limitations that aren't acknowledged in the paper. The authors should use their best judgment and recognize that individual actions in favor of transparency play an important role in developing norms that preserve the integrity of the community. Reviewers will be specifically instructed to not penalize honesty concerning limitations.
    \end{itemize}

\item {\bf Theory assumptions and proofs}
    \item[] Question: For each theoretical result, does the paper provide the full set of assumptions and a complete (and correct) proof?
    \item[] Answer: \answerYes{} 
    \item[] Justification: The paper provides the full set of assumptions and a complete (and correct) proof for each theoretical result.
    \item[] Guidelines:
    \begin{itemize}
        \item The answer \answerNA{} means that the paper does not include theoretical results. 
        \item All the theorems, formulas, and proofs in the paper should be numbered and cross-referenced.
        \item All assumptions should be clearly stated or referenced in the statement of any theorems.
        \item The proofs can either appear in the main paper or the supplemental material, but if they appear in the supplemental material, the authors are encouraged to provide a short proof sketch to provide intuition. 
        \item Inversely, any informal proof provided in the core of the paper should be complemented by formal proofs provided in appendix or supplemental material.
        \item Theorems and Lemmas that the proof relies upon should be properly referenced. 
    \end{itemize}

    \item {\bf Experimental result reproducibility}
    \item[] Question: Does the paper fully disclose all the information needed to reproduce the main experimental results of the paper to the extent that it affects the main claims and/or conclusions of the paper (regardless of whether the code and data are provided or not)?
    \item[] Answer: \answerYes{} 
    \item[] Justification: The paper provides sufficient information to reproduce the main experimental results.
    \item[] Guidelines:
    \begin{itemize}
        \item The answer \answerNA{} means that the paper does not include experiments.
        \item If the paper includes experiments, a \answerNo{} answer to this question will not be perceived well by the reviewers: Making the paper reproducible is important, regardless of whether the code and data are provided or not.
        \item If the contribution is a dataset and\slash or model, the authors should describe the steps taken to make their results reproducible or verifiable. 
        \item Depending on the contribution, reproducibility can be accomplished in various ways. For example, if the contribution is a novel architecture, describing the architecture fully might suffice, or if the contribution is a specific model and empirical evaluation, it may be necessary to either make it possible for others to replicate the model with the same dataset, or provide access to the model. In general. releasing code and data is often one good way to accomplish this, but reproducibility can also be provided via detailed instructions for how to replicate the results, access to a hosted model (e.g., in the case of a large language model), releasing of a model checkpoint, or other means that are appropriate to the research performed.
        \item While NeurIPS does not require releasing code, the conference does require all submissions to provide some reasonable avenue for reproducibility, which may depend on the nature of the contribution. For example
        \begin{enumerate}
            \item If the contribution is primarily a new algorithm, the paper should make it clear how to reproduce that algorithm.
            \item If the contribution is primarily a new model architecture, the paper should describe the architecture clearly and fully.
            \item If the contribution is a new model (e.g., a large language model), then there should either be a way to access this model for reproducing the results or a way to reproduce the model (e.g., with an open-source dataset or instructions for how to construct the dataset).
            \item We recognize that reproducibility may be tricky in some cases, in which case authors are welcome to describe the particular way they provide for reproducibility. In the case of closed-source models, it may be that access to the model is limited in some way (e.g., to registered users), but it should be possible for other researchers to have some path to reproducing or verifying the results.
        \end{enumerate}
    \end{itemize}

\item {\bf Open access to data and code}
    \item[] Question: Does the paper provide open access to the data and code, with sufficient instructions to faithfully reproduce the main experimental results, as described in supplemental material?
    \item[] Answer: \answerYes{} 
    \item[] Justification: The code is available. However, the Truebones dataset cannot be released due to licensing restrictions. BONES-SEED and LAFAN1 are available from their providers under their licenses.
    \item[] Guidelines:
    \begin{itemize}
        \item The answer \answerNA{} means that paper does not include experiments requiring code.
        \item Please see the NeurIPS code and data submission guidelines (\url{https://neurips.cc/public/guides/CodeSubmissionPolicy}) for more details.
        \item While we encourage the release of code and data, we understand that this might not be possible, so \answerNo{} is an acceptable answer. Papers cannot be rejected simply for not including code, unless this is central to the contribution (e.g., for a new open-source benchmark).
        \item The instructions should contain the exact command and environment needed to run to reproduce the results. See the NeurIPS code and data submission guidelines (\url{https://neurips.cc/public/guides/CodeSubmissionPolicy}) for more details.
        \item The authors should provide instructions on data access and preparation, including how to access the raw data, preprocessed data, intermediate data, and generated data, etc.
        \item The authors should provide scripts to reproduce all experimental results for the new proposed method and baselines. If only a subset of experiments are reproducible, they should state which ones are omitted from the script and why.
        \item At submission time, to preserve anonymity, the authors should release anonymized versions (if applicable).
        \item Providing as much information as possible in supplemental material (appended to the paper) is recommended, but including URLs to data and code is permitted.
    \end{itemize}

\item {\bf Experimental setting/details}
    \item[] Question: Does the paper specify all the training and test details (e.g., data splits, hyperparameters, how they were chosen, type of optimizer) necessary to understand the results?
    \item[] Answer: \answerYes{} 
    \item[] Justification: The paper specifies all the training and test details necessary to understand the results.
    \item[] Guidelines:
    \begin{itemize}
        \item The answer \answerNA{} means that the paper does not include experiments.
        \item The experimental setting should be presented in the core of the paper to a level of detail that is necessary to appreciate the results and make sense of them.
        \item The full details can be provided either with the code, in appendix, or as supplemental material.
    \end{itemize}

\item {\bf Experiment statistical significance}
    \item[] Question: Does the paper report error bars suitably and correctly defined or other appropriate information about the statistical significance of the experiments?
    \item[] Answer: \answerYes{} 
    \item[] Justification: Section~\ref{sec:sif} and Appendix~\ref{app:sif_robustness} report 95\% bootstrap intervals that resample whole source skeletons, because source clips recur across triples, together with shuffle-test $p$-values; paired comparisons use the same resampling.
    \item[] Guidelines:
    \begin{itemize}
        \item The answer \answerNA{} means that the paper does not include experiments.
        \item The authors should answer \answerYes{} if the results are accompanied by error bars, confidence intervals, or statistical significance tests, at least for the experiments that support the main claims of the paper.
        \item The factors of variability that the error bars are capturing should be clearly stated (for example, train/test split, initialization, random drawing of some parameter, or overall run with given experimental conditions).
        \item The method for calculating the error bars should be explained (closed form formula, call to a library function, bootstrap, etc.)
        \item The assumptions made should be given (e.g., Normally distributed errors).
        \item It should be clear whether the error bar is the standard deviation or the standard error of the mean.
        \item It is OK to report 1-sigma error bars, but one should state it. The authors should preferably report a 2-sigma error bar than state that they have a 96\% CI, if the hypothesis of Normality of errors is not verified.
        \item For asymmetric distributions, the authors should be careful not to show in tables or figures symmetric error bars that would yield results that are out of range (e.g., negative error rates).
        \item If error bars are reported in tables or plots, the authors should explain in the text how they were calculated and reference the corresponding figures or tables in the text.
    \end{itemize}

\item {\bf Experiments compute resources}
    \item[] Question: For each experiment, does the paper provide sufficient information on the computer resources (type of compute workers, memory, time of execution) needed to reproduce the experiments?
    \item[] Answer: \answerYes{} 
    \item[] Justification: The original experiments were run on an NVIDIA RTX 4070 Super GPU; the additional controls and the robot studies used one NVIDIA H200 GPU per job on a computing cluster, and the statistical analyses ran on CPUs.
    \item[] Guidelines:
    \begin{itemize}
        \item The answer \answerNA{} means that the paper does not include experiments.
        \item The paper should indicate the type of compute workers CPU or GPU, internal cluster, or cloud provider, including relevant memory and storage.
        \item The paper should provide the amount of compute required for each of the individual experimental runs as well as estimate the total compute. 
        \item The paper should disclose whether the full research project required more compute than the experiments reported in the paper (e.g., preliminary or failed experiments that didn't make it into the paper). 
    \end{itemize}
    
\item {\bf Code of ethics}
    \item[] Question: Does the research conducted in the paper conform, in every respect, with the NeurIPS Code of Ethics \url{https://neurips.cc/public/EthicsGuidelines}?
    \item[] Answer: \answerYes{} 
    \item[] Justification: The research conducted in the paper conforms, in every respect, with the NeurIPS Code of Ethics.
    \item[] Guidelines:
    \begin{itemize}
        \item The answer \answerNA{} means that the authors have not reviewed the NeurIPS Code of Ethics.
        \item If the authors answer \answerNo, they should explain the special circumstances that require a deviation from the Code of Ethics.
        \item The authors should make sure to preserve anonymity (e.g., if there is a special consideration due to laws or regulations in their jurisdiction).
    \end{itemize}

\item {\bf Broader impacts}
    \item[] Question: Does the paper discuss both potential positive societal impacts and negative societal impacts of the work performed?
    \item[] Answer: \answerYes{} 
    \item[] Justification: The paper discusses potential societal impacts in Appendix~\ref{app:societal_impacts}.
    \item[] Guidelines:
    \begin{itemize}
        \item The answer \answerNA{} means that there is no societal impact of the work performed.
        \item If the authors answer \answerNA{} or \answerNo, they should explain why their work has no societal impact or why the paper does not address societal impact.
        \item Examples of negative societal impacts include potential malicious or unintended uses (e.g., disinformation, generating fake profiles, surveillance), fairness considerations (e.g., deployment of technologies that could make decisions that unfairly impact specific groups), privacy considerations, and security considerations.
        \item The conference expects that many papers will be foundational research and not tied to particular applications, let alone deployments. However, if there is a direct path to any negative applications, the authors should point it out. For example, it is legitimate to point out that an improvement in the quality of generative models could be used to generate Deepfakes for disinformation. On the other hand, it is not needed to point out that a generic algorithm for optimizing neural networks could enable people to train models that generate Deepfakes faster.
        \item The authors should consider possible harms that could arise when the technology is being used as intended and functioning correctly, harms that could arise when the technology is being used as intended but gives incorrect results, and harms following from (intentional or unintentional) misuse of the technology.
        \item If there are negative societal impacts, the authors could also discuss possible mitigation strategies (e.g., gated release of models, providing defenses in addition to attacks, mechanisms for monitoring misuse, mechanisms to monitor how a system learns from feedback over time, improving the efficiency and accessibility of ML).
    \end{itemize}
    
\item {\bf Safeguards}
    \item[] Question: Does the paper describe safeguards that have been put in place for responsible release of data or models that have a high risk for misuse (e.g., pre-trained language models, image generators, or scraped datasets)?
    \item[] Answer: \answerNA{} 
    \item[] Justification: The paper poses no such risks.
    \item[] Guidelines:
    \begin{itemize}
        \item The answer \answerNA{} means that the paper poses no such risks.
        \item Released models that have a high risk for misuse or dual-use should be released with necessary safeguards to allow for controlled use of the model, for example by requiring that users adhere to usage guidelines or restrictions to access the model or implementing safety filters. 
        \item Datasets that have been scraped from the Internet could pose safety risks. The authors should describe how they avoided releasing unsafe images.
        \item We recognize that providing effective safeguards is challenging, and many papers do not require this, but we encourage authors to take this into account and make a best faith effort.
    \end{itemize}

\item {\bf Licenses for existing assets}
    \item[] Question: Are the creators or original owners of assets (e.g., code, data, models), used in the paper, properly credited and are the license and terms of use explicitly mentioned and properly respected?
    \item[] Answer: \answerYes{} 
    \item[] Justification: Every dataset and tool is credited in Section~\ref{sec:map} and Appendix~\ref{app:robots}. Truebones Zoo is used under its license and cannot be redistributed. BONES-SEED is used under the BONES-SEED License for academic research, with the attribution it requires (Motion Data by Bones Studio, \url{https://bones.studio/}). LAFAN1 is used under CC BY-NC-ND 4.0; the paper shows rendered stills for illustration and does not release the retargeted clips. GMR is used under the MIT License, together with the robot models it distributes, each credited to its original source.
    \item[] Guidelines:
    \begin{itemize}
        \item The answer \answerNA{} means that the paper does not use existing assets.
        \item The authors should cite the original paper that produced the code package or dataset.
        \item The authors should state which version of the asset is used and, if possible, include a URL.
        \item The name of the license (e.g., CC-BY 4.0) should be included for each asset.
        \item For scraped data from a particular source (e.g., website), the copyright and terms of service of that source should be provided.
        \item If assets are released, the license, copyright information, and terms of use in the package should be provided. For popular datasets, \url{paperswithcode.com/datasets} has curated licenses for some datasets. Their licensing guide can help determine the license of a dataset.
        \item For existing datasets that are re-packaged, both the original license and the license of the derived asset (if it has changed) should be provided.
        \item If this information is not available online, the authors are encouraged to reach out to the asset's creators.
    \end{itemize}

\item {\bf New assets}
    \item[] Question: Are new assets introduced in the paper well documented and is the documentation provided alongside the assets?
    \item[] Answer: \answerYes{} 
    \item[] Justification: New assets introduced in the paper are well documented and the documentation is provided alongside the assets.
    \item[] Guidelines:
    \begin{itemize}
        \item The answer \answerNA{} means that the paper does not release new assets.
        \item Researchers should communicate the details of the dataset\slash code\slash model as part of their submissions via structured templates. This includes details about training, license, limitations, etc. 
        \item The paper should discuss whether and how consent was obtained from people whose asset is used.
        \item At submission time, remember to anonymize your assets (if applicable). You can either create an anonymized URL or include an anonymized zip file.
    \end{itemize}

\item {\bf Crowdsourcing and research with human subjects}
    \item[] Question: For crowdsourcing experiments and research with human subjects, does the paper include the full text of instructions given to participants and screenshots, if applicable, as well as details about compensation (if any)? 
    \item[] Answer: \answerNA{} 
    \item[] Justification: The paper does not involve crowdsourcing nor research with human subjects.
    \item[] Guidelines:
    \begin{itemize}
        \item The answer \answerNA{} means that the paper does not involve crowdsourcing nor research with human subjects.
        \item Including this information in the supplemental material is fine, but if the main contribution of the paper involves human subjects, then as much detail as possible should be included in the main paper. 
        \item According to the NeurIPS Code of Ethics, workers involved in data collection, curation, or other labor should be paid at least the minimum wage in the country of the data collector. 
    \end{itemize}

\item {\bf Institutional review board (IRB) approvals or equivalent for research with human subjects}
    \item[] Question: Does the paper describe potential risks incurred by study participants, whether such risks were disclosed to the subjects, and whether Institutional Review Board (IRB) approvals (or an equivalent approval/review based on the requirements of your country or institution) were obtained?
    \item[] Answer: \answerNA{} 
    \item[] Justification: The paper does not involve crowdsourcing nor research with human subjects.
    \item[] Guidelines:
    \begin{itemize}
        \item The answer \answerNA{} means that the paper does not involve crowdsourcing nor research with human subjects.
        \item Depending on the country in which research is conducted, IRB approval (or equivalent) may be required for any human subjects research. If you obtained IRB approval, you should clearly state this in the paper. 
        \item We recognize that the procedures for this may vary significantly between institutions and locations, and we expect authors to adhere to the NeurIPS Code of Ethics and the guidelines for their institution. 
        \item For initial submissions, do not include any information that would break anonymity (if applicable), such as the institution conducting the review.
    \end{itemize}

\item {\bf Declaration of LLM usage}
    \item[] Question: Does the paper describe the usage of LLMs if it is an important, original, or non-standard component of the core methods in this research? Note that if the LLM is used only for writing, editing, or formatting purposes and does \emph{not} impact the core methodology, scientific rigor, or originality of the research, declaration is not required.
    \item[] Answer: \answerNA{} 
    \item[] Justification: The LLM is used only for writing, editing, or formatting purposes.
    \item[] Guidelines:
    \begin{itemize}
        \item The answer \answerNA{} means that the core method development in this research does not involve LLMs as any important, original, or non-standard components.
        \item Please refer to our LLM policy in the NeurIPS handbook for what should or should not be described.
    \end{itemize}

\end{enumerate}

\end{document}